\newcommand{\citeposs}[1]{\citeauthor{#1}'s (\citeyear{#1})}
\title{Better Estimation of the Kullback--Leibler Divergence Between Language Models}
\author{
Afra Amini \qquad Tim Vieira \qquad Ryan Cotterell \\
ETH Z\"{u}rich \\
    $\{$\texttt{\hypersetup{hidelinks}\href{mailto:afra.amini@inf.ethz.ch} {afra.amini}}, \texttt{\hypersetup{hidelinks} \href{mailto:ryan.cotterell@inf.ethz.ch}{ryan.cotterell}}$\}$\texttt{@inf.ethz.ch} \\
    \texttt{\hypersetup{hidelinks}
    \href{mailto:tim.f.vieira@gmail.com}{tim.f.vieira@gmail.com}}
}
\begin{document}

\maketitle

\begin{abstract}
Estimating the Kullback--Leibler (KL) divergence between language models has many applications, e.g., reinforcement learning from human feedback (RLHF), interpretability, and knowledge distillation.
However, computing the exact KL divergence between two arbitrary language models is intractable.
Thus, practitioners often resort to sampling-based estimators. 
While it is easy to fashion a simple Monte Carlo (MC) estimator that provides an unbiased estimate of the KL divergence between language models, this estimator notoriously suffers from high variance and can even result in a negative estimate of the KL divergence, a non-negative quantity. 
In this paper, we introduce a Rao--Blackwellized estimator that is unbiased and provably has variance less than or equal to that of the standard Monte Carlo estimator. 
In an empirical study on sentiment-controlled fine-tuning, we show that our estimator provides more stable KL estimates and reduces variance substantially. 
Additionally, we derive an analogous Rao--Blackwellized estimator of the gradient of the KL divergence, which leads to more stable training and produces models that more frequently appear on the Pareto frontier of reward vs. KL compared to the ones trained with the MC estimator of the gradient.\looseness=-1
\end{abstract}
\vspace{0.1em}
\hspace{10.0em}\includegraphics[width=1.05em,height=1.0em]{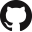}\hspace{.3em}\parbox{\dimexpr\linewidth-2\fboxsep-2\fboxrule}
{\url{https://github.com/rycolab/kl-rb}}

\section{Introduction}
The Kullback--Leibler \cite[KL;][]{kldiv} divergence is a statistical divergence that quantifies how one probability distribution differs from another. Measuring the KL divergence between probability distributions is a well-established problem that has been studied extensively in the statistics literature \citep[][\emph{inter alia}]{csiszar1967information, gibbs}.
In some special cases, e.g., in the case that we wish to measure the KL divergence between two Gaussian measures, the KL divergence has an analytical solution. However, in the general case, exact computation of the KL divergence is not analytically tractable or approximable with an efficient algorithm \citep{klapprox}.
This paper treats the case of computing the KL divergence between two language models (LMs), a fundamental task in natural language processing with numerous practical applications.\looseness=-1

The KL divergence plays a central role across multiple applications. In reinforcement learning from human feedback \citep[RLHF;][]{rlhf, summarizehf, instructgpt}, it is used as a regularization term to constrain the fine-tuned model from drifting too far from a reference model, preserving fluency and preventing reward over-optimization. In interpretability research, KL divergence quantifies how a specific prompt shifts the model distribution by comparing the model’s distributions before and after controlled interventions \citep{pezeshkpour2023, du-etal-2024-context,stoehr-etal-2024-activation}. As an evaluation metric, KL divergence is used to assess how well language models approximate target distributions \citep{borenstein-etal-2024-languages, svete-etal-2024-transformers}. In knowledge distillation, KL divergence is minimized to align a student model with a teacher model \citep{agarwal2024onpolicy}.

The above applications demonstrate that measuring the KL divergence between two language models is useful and widespread. 
However, in the case of neural language models, it is far from straightforward. 
It is easy to see why: Given an alphabet of symbols $\alphabet$ and two language models $\plm$ and $\qlm$, distributions over $\kleene{\alphabet}$, the \defn{KL divergence} is given by the following expression:\footnote{Throughout this paper $\log$ denotes the natural logarithm function; thus, KL divergence is measure in \emph{nats} rather than \emph{bits}. We also note that terms of the form $\plm(\str) \log \frac{\plm(\str)}{\qlm(\str)}$ in \cref{eq:kl} where $\plm(\str) = 0$ can \emph{correctly} be taken to equal zero because $\lim_{\plm\to 0^+} p \log \plm/\qlm = 0$.}$^,$\footnote{In conditional tasks like dialogue generation, language models are prompted with an input $\prompt \in \alphabet^*$, inducing a conditional distribution $\plm(\cdot \mid \prompt)$. KL divergences are typically averaged over a set of prompts. For simplicity, we omit $\prompt$ in notation and write $\plm(\str)$; all of our results extend straightforwardly to the conditional case.}
\begin{align}\label{eq:kl}
\kl(\plm \mid\mid \qlm) \defeq \sum_{\str \in \kleene{\alphabet}} \plm(\str) \log \frac{\plm(\str)}{\qlm(\str)}.
\end{align}
Recalling that $\kleene{\alphabet}$ is a countably infinite set, we cannot expect, in general, to compute \Cref{eq:kl} exactly in finite time without additional assumptions.\footnote{In general, computing the KL divergence between two arbitrary LMs \emph{exactly} is undecidable.
To see why, assume that each of the two language models is a probabilistic context-free grammar. In this case, deciding whether their KL divergence is zero is undecidable, as it follows directly from the undecidability of testing equivalence between two unweighted context-free grammars \citep{hoogeboomhj}. In the more restrictive case of probabilistic finite-state language models, it is \textsc{pspace}-hard.
Importantly, however, the intractablity of \emph{exact} computation does not imply that \emph{approximation} is intractable.  In practice, one can often obtain good Monte Carlo estimates of the KL divergence, provided that its variance is well-controlled.  We study very practical methods for improving variance in this paper.
}
While in some very special cases, e.g., where $\plm$ and $\qlm$ are deterministic finite-state automata, there exist efficient algorithms, \citep{lehmann, li-eisner-2009-first, borenstein-etal-2024-languages}, we should not expect such an algorithm to exist in the case where $\plm$ and $\qlm$ are neural language models, e.g., those based on the transformer \citep{NIPS2017_3f5ee243, radford2019language, llama2}.
Thus, most researchers turn to approximation, with Monte Carlo estimation being the most widely used method.

The Monte Carlo (MC) estimator for KL divergence (\Cref{eq:kl}) involves sampling $M$ strings $\yvar^{(1)}, \ldots, \yvar^{(M)} \simIID \plm$ and then averaging $\log \frac{\plm(\yvar^{(m)})}{\qlm(\yvar^{(m)})}$.
Even though this estimator is unbiased, it often exhibits high variance, which means the approximation can be noisy and unreliable.\footnote{Monte Carlo estimation formally requires that the underlying random variable have finite variance; if the variance is unbounded, the estimator no longer converges in the limit.}
More pathologically, the naive MC estimator can result in negative estimates of KL, which may be undesirable in practice.\footnote{Since the KL divergence is non-negative by definition, a negative estimate can be problematic when KL is used as part of a loss function, as it may destabilize the learning dynamics.}
To address these issues, practitioners adopt alternative techniques to ensure non-negativity. For example, \citet{klblog} proposes an unbiased, non-negative KL estimator that is widely used in practice \citep{trl, trlx}. However, the proposed method, in its original form, does not theoretically yield an estimator with lower variance, and, as we show empirically, can exhibit enormous variance.\looseness=-1

In this paper, we derive an improved estimator of KL using Rao--Blackwellization \citep[RB;][]{rao1945information,blackwell1947conditional,gelfandsmith, casellarobert}, a well-established variance-reduction technique from statistics.\footnote{Despite its simplicity, our proposed estimator is absent from existing literature and open-source RLHF libraries \cite{trl, trlx, hu2024openrlhf, sheng2024hybridflow}, highlighting a gap we believe is worth addressing.}
This results in an estimator that is provably unbiased \emph{and} has a variance that is always less than or equal to that of the standard Monte Carlo estimator, while requiring no additional computational overhead. 
As a point of comparison to our RB estimator, we also provide a comprehensive formal analysis of various existing methods for estimating KL divergence, examining their bias and variance.

We empirically validate our theoretical findings using the sentiment-controlled generation task \citep{dpo} as a testbed. 
Specifically, we measure the KL divergence between a GPT-2 model \citep{radford2019language} before and after fine-tuning, where the fine-tuning objective is to steer the model toward generating positive movie reviews. Our experimental results confirm that our proposed estimator significantly reduces the variance of the Monte Carlo estimator, yielding the most stable and reliable estimates among all methods studied. 
In contrast, alternative estimators from the literature fail to achieve meaningful variance reduction, and in some cases, lead to unbounded variance.    
We further examine how using our derived estimator in the fine-tuning loop of RLHF impacts the downstream performance. 
Our results suggest that using our Rao--Blackwellized estimator reduces the instability across different RLHF runs. We further look at the Pareto frontier of average rewards achieved by the model vs. its KL divergence with the reference model. 
We observe that models fine-tuned using the RB estimator appear significantly more often on the Pareto frontier of reward vs. KL compared to the models fine-tuned with the MC estimator.\looseness=-1

\section{Preliminaries} \label{sec:def}
\subsection{Language Models}
Let $\alphabet$ be an \defn{alphabet}, a finite, non-empty set of symbols.
A \defn{string} is a finite sequence of symbols from $\alphabet$.
Let $\kleene{\alphabet}$ denote the set of all such strings.
A \defn{language model} $\plm$ is a distribution over $\kleene{\alphabet}$. The \defn{prefix probability} function $\pprob$  of a prefix $\prefix \in \alphabet^*$ is
    \begin{align}
        \pprob(\prefix) \defeq \sum_{\str \in \alphabet^*} \plm(\prefix \str),
    \end{align}
which is the cumulative probability of all strings in the language that have $\prefix$ as their prefix.
We denote the \defn{conditional prefix probability} as $\pprob(\str \mid \prefix)  = \frac{\pprob(\prefix \str) }{   \pprob(\prefix) }$, where we additionally define $\pprob(\eos \mid \str) \defeq \frac{\plm(\str)}{\pprob(\str)}$. Language models can be factored into the product of distributions using the chain rule of probability, i.e., for any string $\str = y_1 \cdots y_N \in \alphabet^*$ we can write
\begin{align} \label{eq:lm}
    \plm(\str) = \pprob(\eos \mid \str) \prod_{n=1}^N \pprob(y_n \mid \str_{<n}),
\end{align}
where $\str_{<n} \defeq y_1 \cdots y_{n-1}$ and $\eos \notin \alphabet$ is a distinguished end-of-string symbol.
Let $\alphabar \defeq \alphabet \cup \{\eos\}$.
In \Cref{eq:lm}, each $\pprob(\cdot \mid \str_{<n})$ can fruitfully be viewed as a distribution over $\alphabar$.
Despite the overloading of the notation, whether $\pprob(\cdot \mid \str_{<n})$ refers to a prefix probability, a function which takes an argument from $\kleene{\alphabet}$, or a distribution over $\alphabar$ will always be clear from context.
Throughout this paper, we use $\yvar$ to represent the string-valued random variable sampled from $\plm$. When taking $M$ i.i.d.\ samples from $\plm$, we use $\yvar^{(m)}$ to denote the $m^{\text{th}}$ sample.

\subsection{Monte Carlo KL Estimation}
A simple way to estimate the KL divergence is with the \defn{Monte Carlo estimator (MC)} defined as
\begin{align}\label{eq:klmc}
\klmc &= \frac{1}{M} \sum_{m=1}^M \log \frac{\plm(\yvar^{(m)}) }{\qlm(\yvar^{(m)}) } = \frac{1}{M} \sum_{m=1}^M \efunc(\yvar^{(m)}),
\end{align}
where $\yvar^{(1)}, \ldots, \yvar^{(M)} \simIID \plm$ and 
$\efunc(\yvar) \defeq \log \frac{\plm(\yvar) }{\qlm(\yvar) }$. Throughout the paper, we assume that the KL divergence is finite, i.e., $\kl(\plm \mid\mid \qlm) < \infty$. It is straightforward to show $\klmc$ is unbiased, i.e., $\E [\klmc] = \kl(\plm \mid\mid \qlm)$ and the variance of this estimator is $\var{\klmc} = \frac{1}{M} \var{ \efunc(\yvar) }$. In \Cref{sec:ht}, we discuss the Horvitz--Thompson estimator, another unbiased KL estimator. 

Note that while the exact KL value is always non-negative, $\efunc(\yvar)$ may be positive or negative. Consequently, the Monte Carlo estimate $\klmc$ may also be negative. This happens because the estimate is based on a limited number of samples, and some sample draws can lead to negative values.
This can be problematic during RLHF, which depends on the KL divergence being non-negative.

\subsection{Control Variate Monte Carlo Estimation}
\label{sec:klcv}
A general approach to reduce estimator variance is through \emph{control variates} \citep[\S 8.2]{sheldon2002simulation}. For KL divergence between language models, this technique was popularized by \citet{klblog} and is widely used in RLHF libraries \citep{trlx, hu2024openrlhf, sheng2024hybridflow}.
Formally, a \defn{control variate} is any function $g\colon \kleene{\alphabet} \to \R$ for which $\G \defeq \E\Brackets{g(\yvar)}$ can be efficiently computed.\footnote{One could also consider control variates of the form $g\colon \kleene{\alphabet} \to \R^d$ for $d > 1$ \citep{geffner2018using}.}
We define the \defn{control variate Monte Carlo estimator} as\looseness=-1
\begin{savedequation}{klcvv} 
\klcv = \frac{1}{M} \!\sum_{m=1}^M \! f(\yvar^{(m)}) + \alpha \cdot (g(\yvar^{(m)}) - \G).\!\! \label{eq:klcv}
\end{savedequation}
where $\alpha \in \R$ is a calibration parameter that must be chosen \emph{a priori}.  The proposition below characterizes the variance of $\klcv$ as a function of $\alpha$, which will tell use how to choose $\alpha$ optimally.
\begin{restatable}{proposition}{cvbias}\label{proposition:cvbias}
Consider the control variate MC estimator $\klcv$ defined in \Cref{eq:klcv}, and assume that $\E [g(\yvar)] < \infty$. Then $\klcv$ is an unbiased estimator, and its variance is given by
\begin{align}
&\var{\klcv} = \frac{\var{ \efunc } + \alpha^2 \var{ g } + 2 \alpha \cov{ \efunc, g }}{M} \label{eq:cv-variance}.
\end{align}
\end{restatable}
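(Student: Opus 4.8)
The plan is to establish the two claims—unbiasedness and the variance formula—directly from the definition of $\klcv$ in \Cref{eq:klcv}, using linearity of expectation and the standard bilinearity properties of variance and covariance. For unbiasedness, I would take the expectation of \Cref{eq:klcv}, push it through the sum, and use that the $\yvar^{(m)}$ are identically distributed together with the fact that $\E[\efunc(\yvar)] = \kl(\plm \mid\mid \qlm)$ (already noted in the discussion of $\klmc$) and $\E[g(\yvar) - \G] = \E[g(\yvar)] - \G = 0$ by the definition $\G \defeq \E[g(\yvar)]$. The assumption $\E[g(\yvar)] < \infty$ is what guarantees this centering term is well-defined and that we may split the expectation of the sum; it is worth stating explicitly that finiteness of $\kl(\plm \mid\mid \qlm)$ (assumed throughout) plus finiteness of $\E[g(\yvar)]$ is what legitimizes the manipulations.

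For the variance, I would write $\klcv = \frac{1}{M}\sum_{m=1}^M h(\yvar^{(m)})$ where $h(\yvar) \defeq \efunc(\yvar) + \alpha\,(g(\yvar) - \G)$, observe that the summands are i.i.d., and hence $\var{\klcv} = \frac{1}{M}\var{h(\yvar)}$. Then I expand $\var{h(\yvar)} = \var{\efunc(\yvar) + \alpha\, g(\yvar)}$ — the constant shift by $\alpha\G$ drops out — and apply the identity $\var{A + \alpha B} = \var{A} + \alpha^2\var{B} + 2\alpha\cov{A,B}$ with $A = \efunc(\yvar)$ and $B = g(\yvar)$. Collecting terms and dividing by $M$ yields \Cref{eq:cv-variance}.

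There is no single hard step here; the proposition is essentially a bookkeeping exercise once the estimator is written as a sample mean of an i.i.d.\ function. The one point requiring a little care is the moment-existence hypotheses: I should note that $\var{\efunc(\yvar)}$ is finite (otherwise even $\klmc$ has unbounded variance and the statement is vacuous), and that $\E[g(\yvar)] < \infty$ together with whatever ensures $\var{g(\yvar)}$ and $\cov{\efunc(\yvar), g(\yvar)}$ are finite — I would simply assume these second moments exist so that the right-hand side of \Cref{eq:cv-variance} is meaningful, and flag this as an implicit regularity condition. With that caveat, the proof is two short displays: one for the expectation, one for the variance expansion.
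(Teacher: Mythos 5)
Your proposal is correct and follows essentially the same route as the paper: take the expectation through the sum and use $\E[g(\yvar)]-\G=0$ for unbiasedness, then treat the estimator as a sample mean of i.i.d.\ terms and apply the standard identity $\var{A+\alpha B}=\var{A}+\alpha^2\var{B}+2\alpha\cov{A,B}$ (with the constant $\alpha\G$ dropping out) for the variance. Your added remark about implicitly needing the second moments of $\efunc$ and $g$ to be finite is a fair point of care that the paper's proof glosses over.
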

\begin{proof}
See \Cref{sec:cvbias}.
\end{proof}
Assume $0 < \var{g} < \infty$. It is straightforward to show that $\alpha^* \defeq -\cov{\efunc,g}/\var{g}$ is the value that minimizes the variance. If we plug $\alpha^*$ in \Cref{eq:cv-variance}, and simplify, we see that 
\begin{align}
\var{\klcv}
= \frac{1}{M}\var{ \efunc } \Big(1 - \corr{ \efunc, g}^2 \Big) \label{eq:cvvariance},
\end{align}
% \end{subequations}
which directly translates to reducing the variance of the MC estimator. The magnitude of the correlation between $\efunc$ and $g$ determines the degree of variance reduction. Note that the value of $\alpha^*$ may be estimated from a pilot sample when it cannot be computed analytically.\footnote{Note that the control variate method can be straightforwardly extended to support multiple control variates.}

\paragraph{KL Estimation with a Control Variate.}
A specific control variate for KL estimation was proposed by \citet{klblog}, who defined $g(\yvar) = \frac{\qlm(\yvar)}{\plm(\yvar)}$. Substituting this into \Cref{eq:klcv}, the MC estimator of the KL divergence with this control variate is \looseness=-1
\begin{align} \label{eq:cvestimator}
&\!\!\klcv = \frac{1}{M} \sum_{m=1}^M \log \frac{\plm(\yvar^{(m)})}{\qlm(\yvar^{(m)})} + \alpha \cdot \left(\frac{\qlm(\yvar^{(m)})}{\plm(\yvar^{(m)})} - 1\right).
\end{align}
\paragraph{Remarks.}
This proposal has some notable properties. First, $\G = \E \left[\frac{\qlm(\yvar)}{\plm(\yvar)} \right] = 1$, meaning $\G$ is known in advance.\footnote{Note that for this to hold, we need have $\qlm(\str) = 0$ whenever $\plm(\str) = 0$, which is different from the support condition we assumed for $\kl(\plm \mid\mid \qlm) < \infty$.} Second, $\cov{f,g}=0$ is zero if and only if $\plm$ is equal to $\qlm$ (\cref{proposition:cvcov}), meaning that when the two distributions are not equal, and $\alpha$ is chosen suitably, we are guaranteed to \emph{strictly} reduce variance. \citet{klblog} proposes setting $\alpha = 1$; the benefit of this suboptimal choice is that the resulting estimator is always non-negative (\cref{proposition:nonnegative-CV}).\footnote{Note that $\alpha=1$ is the only value of $\alpha$ that guarantees nonnegativity (see proof of \cref{proposition:nonnegative-CV}).}
However, setting $\alpha=1$ will \emph{increase} variance when $\alpha^* < \frac{1}{2}$ (\cref{cor:alpha-one-reduction}). Indeed, our experiments (\cref{sec:exp-analyze}) confirm that $\alpha=1$ is a poor choice in practice---it is better to estimate $\alpha^*$\footnote{\Cref{prop:requirements-on-alpha-for-reduction} provides the exact conditions on $\alpha$ required for a variance reduction.} to ensure that the control variate is correctly calibrated.

\section{Rao--Blackwellized Monte Carlo}\label{sec:rb-main}
In this article, we propose the application of another classical technique to reduce the variance of the Monte Carlo estimation of the KL divergence---\defn{Rao--Blackwellization} \citep[RB;][]{gelfandsmith, casellarobert}. 
Despite its standing in the statistics literature, a Rao--Blackwellized Monte Carlo estimator has yet to gain traction in the context of RLHF \citep{trl, trlx, hu2024openrlhf, sheng2024hybridflow}.

We define the \defn{Rao--Blackwellized Monte Carlo estimator} $\rbestimator$ as follows:
\begin{align}\label{eq:rb-estimator}
\rbestimator 
&\defeq 
\frac{1}{M}
\sum_{m=1}^M
\sum_{n=1}^{|\yvar^{(m)}|}
\kl(\pprob(\cdot \mid \yvar^{(m)}_{< n}) \| \qpprob(\cdot \mid \yvar^{(m)}_{< n})).
\end{align}
The key benefit of this estimator is that we \emph{analytically} compute the expectation over the $n\textsuperscript{th}$ symbol in each string rather than relying on the single sampled value at that position.

\paragraph{Rao--Blackwellization Background.}
The starting point of Rao--Blackwellization is the following inequality involving the conditional variance: $\var{\E[\estimator \mid \tvar]} \leq \var{\estimator}$,
where $\estimator$ is an unbiased estimator of $\E[f]$ and $\tvar$ is a statistic\footnote{Note that $\tvar$ is a function of $\{\yvar^{(m)}\}_{m=1}^M$. We have suppressed this function's arguments in our notation for improved readability.} for which we can explicitly compute $\E [\estimator \mid \tvar]$. 
This technique is often referred to as \emph{Rao-–Blackwellization} because the inequality is associated with the Rao--Blackwell theorem \citep{lehmanncasella}.\footnote{Note that in the Rao--Blackwell theorem, we get the stronger result that when $\tvar$ is a sufficient statistic, $\var{\E[\estimator \mid \tvar]}$ is optimal, i.e., it is the minimal-variance, unbiased estimator.  
However, we can perform Rao--Blackwellization even when $\tvar$ is not sufficient and are still guaranteed that the variance is no worse \citep{mcbook}.}\looseness=-1

\paragraph{Notation.}
Before moving forward, we introduce some convenient notation. Let $\yvarb^{(m)}$ denotes the $\eos$-padded version of $\yvar^{(m)}$. Additionally, we extend the definition of the prefix probabilities $\pprob$ and $\qpprob$ to strings containing padding symbols, i.e., $\str \notin \kleene{\alphabet}$, with the following additional case: $\pprob(y \mid \str) \defeq \mathbbm{1}\{y = \eos \}$, and $\qpprob(y \mid \str) \defeq \mathbbm{1}\{y = \eos \}$.

\paragraph{Understanding Our Rao--Blackwellized Estimator.}
We now present a proof that our Rao--Blackwellized estimator $\rbestimator$ is unbiased and indeed does result in a variance reduction.
One might wonder why this requires more than a straightforward application of the Rao--Blackwell theorem.
The reason is that $\rbestimator$ does not arise directly from the standard formulation of Rao--Blackwellization.
Instead, we apply Rao--Blackwellization separately to each summand, where each summand corresponds to an estimator over strings of a particular length.
In general, Rao--Blackwellizing the summands pointwise does not guarantee a reduction in the variance of their sum, since the summands may be correlated.
However, in the specific case of $\rbestimator$, we \emph{can} prove that the overall variance is reduced, despite Rao--Blackwellizing the summands independently, as we state in the following theorem.\looseness=-1
\vspace{.5\baselineskip}
\begin{restatable}{theorem}{rbinf}\label{theorem:rbinf}
Suppose the MC estimator $\klmc$ has finite variance, i.e., $\var{\klmc} < \infty$. 
Then the following properties regarding $\rbestimator$ hold:
\begin{center}
\begin{minipage}[t]{0.42\linewidth}
\begin{enumerate}
\item[$(i)$]  $\E[\rbestimator] = \kl(\plm \mid\mid \qlm)$ \,\, \justification{unbiasedness}
\end{enumerate}
\end{minipage}
\qquad
\begin{minipage}[t]{0.48\linewidth}
\begin{enumerate}
\item[$(ii)$] $\var{\rbestimator} \leq \var{\klmc}$ \,\, \justification{variance reduction}
\end{enumerate}
\end{minipage}
\end{center}
\end{restatable}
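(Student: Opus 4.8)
Since $\klmc$ and $\rbestimator$ average $M$ i.i.d.\ copies of, respectively, $\efunc(\yvar)$ and $R(\yvar)\defeq\sum_n\kl(\pprob(\cdot\mid\yvar_{<n})\,\|\,\qpprob(\cdot\mid\yvar_{<n}))$ for $\yvar\sim\plm$, both parts reduce to $M=1$: it suffices to show $\E[R(\yvar)]=\kl(\plm\mid\mid\qlm)$ and $\var{R(\yvar)}\le\var{\efunc(\yvar)}$. Work with the $\eos$-padded $\yvarb$ and the conventions of the Notation paragraph, so that $R(\yvar)$ is $\sum_u\kl(\pprob(\cdot\mid u)\,\|\,\qpprob(\cdot\mid u))$ summed over the prefixes $u$ of $\yvarb$ visited while generating $\yvar$ (one summand per factor of the chain rule \Cref{eq:lm}, including the step that emits $\eos$), and let $\mathcal F_n$ be the $\sigma$-algebra generated by the first $n$ symbols of $\yvarb$. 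Write $\efunc(\yvar)=\sum_n F_n(\yvar)$ with $F_n(\yvar)=\log\frac{\pprob(\yvarb_n\mid\yvarb_{<n})}{\qpprob(\yvarb_n\mid\yvarb_{<n})}$, and, matching the $n$-th factor, $R(\yvar)=\sum_n R_n(\yvar)$ with $R_n(\yvar)=\kl(\pprob(\cdot\mid\yvarb_{<n})\,\|\,\qpprob(\cdot\mid\yvarb_{<n}))$. The single identity behind the whole theorem is that $R_n(\yvar)=\E[F_n(\yvar)\mid\mathcal F_{n-1}]$, since conditioned on $\yvarb_{<n}$ the next symbol is drawn from $\pprob(\cdot\mid\yvarb_{<n})$. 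Taking expectations and summing over $n$ (the interchange justified by $\var{\klmc}<\infty$ and dominated convergence) gives $\E[R(\yvar)]=\sum_n\E[F_n(\yvar)]=\E[\efunc(\yvar)]=\kl(\plm\mid\mid\qlm)$, which is $(i)$.

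\textbf{Variance reduction and the obstacle.}
Put $D_n\defeq F_n-R_n$; the identity above says $\E[D_n\mid\mathcal F_{n-1}]=0$, so $(D_n)_n$ is a martingale-difference sequence for $(\mathcal F_n)_n$, whence $\E[D_mD_n]=0$ for $m\ne n$ and $\var{\sum_n D_n}=\sum_n\E[D_n^2]$. Since $\efunc(\yvar)=R(\yvar)+\sum_n D_n$,
\[
\var{\efunc(\yvar)}=\var{R(\yvar)}+\textstyle\sum_n\E[D_n^2]+2\cov{R(\yvar),\,\sum_n D_n},
\]
so $(ii)$ is equivalent to $\sum_n\E[D_n^2]+2\cov{R(\yvar),\sum_n D_n}\ge 0$. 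Half of the covariance vanishes for free: when $m\le n$, $R_m$ is $\mathcal F_{n-1}$-measurable and $\E[D_n\mid\mathcal F_{n-1}]=0$, so $\cov{R_m,D_n}=0$; only the lower-triangular part $\sum_{m>n}\cov{R_m,D_n}$ can be negative, and it must be absorbed by $\sum_n\E[D_n^2]$. This is precisely the difficulty flagged before the theorem: the per-position inequalities $\var{R_n}\le\var{F_n}$ are immediate from $R_n=\E[F_n\mid\mathcal F_{n-1}]$ and conditional Jensen, but they do not add up, and one must show the correlations among the separately Rao--Blackwellized summands are benign. I expect essentially all of the work to live here; the reduction to $M=1$, the finiteness of $\var{R(\yvar)}$, and part $(i)$ are routine by comparison.

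\textbf{Plan for the cross term.}
To handle it I would pass to the Doob martingales $\efunc_j=\E[\efunc(\yvar)\mid\mathcal F_j]$ and $g_j=\E[R(\yvar)\mid\mathcal F_j]$ — both equal to $\kl(\plm\mid\mid\qlm)$ at $j=0$, both $L^2$-convergent, and, having orthogonal increments, satisfying $\var{\efunc(\yvar)}=\sum_j\E[(\efunc_j-\efunc_{j-1})^2]$ and $\var{R(\yvar)}=\sum_j\E[(g_j-g_{j-1})^2]$ — and compute the increments explicitly. The lever is the chain rule for the KL divergence: for any prefix $u$, the KL divergence between the suffix distributions after $u$ equals $\kl(\pprob(\cdot\mid u)\,\|\,\qpprob(\cdot\mid u))$ plus the expectation over $y\sim\pprob(\cdot\mid u)$ of the KL divergence between the suffix distributions after $uy$. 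Substituting this into $\efunc_j$ and $g_j$ writes both increments in terms of the tail divergence after $\yvarb_{<j}$ and its one-step unfolding, and the difference $\sum_j\E[(\efunc_j-\efunc_{j-1})^2]-\sum_j\E[(g_j-g_{j-1})^2]$ should then telescope across positions into a manifestly nonnegative quantity. Making this telescoping rigorous — and carrying along the integrability estimates needed to rearrange the infinite sums and conditional expectations — is the crux, and the step I expect to be hardest.
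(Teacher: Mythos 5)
Your part $(i)$ is essentially the paper's own argument: the identity $R_n=\E[F_n\mid\mathcal F_{n-1}]$ is exactly the per-position Rao--Blackwellization, and the interchange of the infinite sum with the expectation is what the paper does through the truncated estimators together with Fubini/Tonelli and dominated convergence (\cref{lemma:rb} and \cref{lemma:ntoinf}). For part $(ii)$, however, your write-up stops precisely where the content lies: you reduce the claim to showing $\sum_n\E[D_n^2]+2\sum_{m>n}\cov{R_m,D_n}\ge 0$, note that the lower-triangular covariances can be negative, and then offer only a plan (Doob martingales, the KL chain rule, a hoped-for telescoping) that you yourself flag as not carried out. So the variance-reduction half of the theorem is unproven in your proposal; that is a genuine gap, not a presentational one.

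More importantly, the gap cannot be closed along these lines, because the inequality you are trying to establish is false without further assumptions. Take strings of length two over $\{a,b\}$, with $\eos$ forced at the third step under both models. Let $\plm$ pick the first symbol uniformly, then move to $a$ with probability one after $a$ and uniformly after $b$; let $\qlm$ put mass $1-e^{-2}$ on $a$ at the first step, mass $e^{-10}$ on $a$ after $a$, and agree with $\plm$ after $b$. All hypotheses hold ($\kl(\plm\mid\mid\qlm)\approx 5.38$, $\var{\klmc}<\infty$, supports nested), and here $F_2=R_2$ exactly, so for $M=1$ one gets $\var{\klmc}-\var{\rbestimator}=\E[D_1^2]+2\cov{R_2,D_1}\approx 0.86-9.27<0$; numerically $\var{\rbestimator}=25$ while $\var{\klmc}\approx 16.6$. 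In words, the cross term you isolated really can overwhelm $\sum_n\E[D_n^2]$: conditioning away the position-$1$ noise also destroys a strong negative correlation between $F_1$ and the downstream local KL, and the variance of the sum increases (the same example, with $N=2$, also contradicts the truncated inequality in \cref{lemma:rb}). You should be aware that the paper's proof of the variance part of \cref{lemma:rb} passes over exactly this point: after Jensen's inequality it removes the conditioning by ``law of total expectation'' and then replaces the $N$ separately conditioned copies $\yvarb^1,\dots,\yvarb^N$ by a single draw $\yvarb$ on the grounds that they are i.i.d., a substitution that alters the cross-moments between positions and is not justified as written---your martingale bookkeeping makes the obstruction explicit rather than hiding it. The honest conclusion is that $(i)$ holds and your proof of it is fine, each per-position inequality $\var{R_n}\le\var{F_n}$ holds, but $(ii)$ as stated is not provable; any correct version needs an additional hypothesis controlling $\sum_{m>n}\cov{R_m,D_n}$, e.g., nonnegative correlation between each $D_n$ and the future local KL terms.
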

\begin{proof}
See \Cref{proof:rbinf} for a detailed proof.  However, we provide the proof sketch below for the reader to quickly understand the structure of the argument, which is broken down into three steps.
\begin{enumerate}[label=(\arabic*)]
\item \textbf{Step-wise Estimation.}
We begin by Rao--Blackwellizing the \defn{step-wise Monte Carlo estimator} for any $n > 0$,\looseness=-1
\begin{align}
\klmcn \defeq &= \frac{1}{M} \sum_{m=1}^M  \log  \frac{\pprob(\Yb_n^{(m)} \mid \yvarb^{(m)}_{<n})}{\qpprob(\Yb_n^{(m)} \mid \yvarb^{(m)}_{<n})}.
\end{align}
Intuitively, $\klmcn$ measures the average KL of the $n\textsuperscript{th}$ symbol.
The next step is to define $\tvarn(\yvarb) = \yvarb_{<n}$ and apply Rao--Blackwellization to each $\klmcn$ as follows:
\begin{subequations}
\begin{align} 
\rbestimatorn 
&\defeq \E_{\yvarb^{(1)}, \ldots, \yvarb^{(M)}}\left[\klmcn \mid \tvarn\right] \\
&= \frac{1}{M} \sum_{m=1}^M  \E_{\yvarb^{(m)}}\left[ \log  \frac{\pprob(\Yb_n^{(m)} \mid \yvarb^{(m)}_{<n})}{\qpprob(\Yb_n^{(m)} \mid \yvarb^{(m)}_{<n})} \middle| \tvarn(\yvarb^{(m)}) \right] \\
&= \frac{1}{M} \sum_{m=1}^M 
\kl(\pprob(\cdot \mid \yvarb^{(m)}_{< n}) \| \qpprob(\cdot \mid \yvarb^{(m)}_{< n})).\label{eq:rb}
\end{align}
\end{subequations}
Now, it is clear from the Rao--Blackwellization theorem that $\rbestimatorn$ is unbiased and it provides a variance reduction (i.e., $\var{\rbestimatorn} \le \var{\klmcn}$) for all $n > 0$.  Inuitively, the source of the variance reduction in the $\rbestimatorn$ estimator is that we compute the expectation over the $n\textsuperscript{th}$ symbol exactly rather relying on the sampled symbol at that position.\looseness=-1

\item \textbf{Truncated Estimation.}
Next, we define
$\klmcN \defeq \sum_{n=1}^N \klmcn$
and
$\rbestimatorN \defeq \sum_{n=1}^N \rbestimatorn$.  Intuitively, these estimators target the KL divergence for symbols up to a maximum length of $N$.  
In \cref{lemma:rb}, we prove that $\E[\rbestimatorN] = \E[\klmcN]$, and $\var{\rbestimatorN} \le \var{\klmcN}$ for all $N > 0$ using the law of total expectation and Jensen's inequality, where the latter is used in a manner analougsly to have it is used in the original Rao--Blackwellization theorem.

\item \textbf{Complete Estimation.}
Now, we consider the complete estimation.  First, we observe that the limit of the truncated estimators converge to $\klmc = \lim_{N \rightarrow \infty} \klmcN$ (\cref{lemma:ntoinf}), and analogously, $\rbestimator = \lim_{N \rightarrow \infty} \rbestimatorN$. 
Thereby, we are able to show that $\rbestimator$ is an unbiased estimator of the KL divergence with variance less than or equal to that of $\klmc$.
\end{enumerate}
\end{proof}

\paragraph{Remarks.}
Notably, $\rbestimator$ is guaranteed to be non-negative, a property desired by some when designing estimators for the KL divergence between two language models, as the KL divergence itself is always non-negative (cf.\@ remarks in \cref{sec:klcv}).
In the case of our Rao--Blackwellized estimator, non-nonegativity follows from the fact that each step-wise estimator computes the \emph{exact} KL divergence between the two next-symbol distributions, conditioned on the sampled context $\str_{<n}^{(m)}$. Since all terms in \Cref{eq:rb} are non-negative, $\rbestimator$ remains non-negative as well. 

\paragraph{Complexity Analysis.}
Although computing $\rbestimator$ might seem more expensive than $\klmc$, the overall runtime is dominated by the cost of forward passes. Because a forward pass already involves producing the full distribution over $\alphabar$ at each position $n$, the additional $\bigo{M N |\alphabar|}$ work required by $\rbestimator$ is negligible compared to the $M$ forward passes needed for both $\klcv$ and $\klmc$.

\section{Estimating the Gradient}
KL estimation is essential in many applications, especially in fine-tuning large language models. In reinforcement learning from human feedback (RLHF), for example, the objective includes a KL regularization term to balance reward maximization with staying close to a reference model. Since the language model is a differentiable function of parameters $\btheta$ optimized via gradient descent, this setup requires computing the gradient of the KL divergence with respect to $\btheta$:
\begin{equation} \label{eq:klgrad}
\klgrad \defeq \nablat \kl(\policy \mid\mid \qlm) = \E \left[\log \frac{\policy(\yvar)}{\qlm(\yvar)} \nablat \log \policy(\yvar)\right].
\end{equation} 
Therefore, the Monte Carlo estimator of this gradient is
\begin{equation} \label{eq:mcgrad}
    \mcgrad = \frac{1}{M} \sum_{m=1}^M \log \frac{\policy(\yvar^{(m)})}{\qlm(\yvar^{(m)})} \nablat \log \policy(\yvar^{(m)}).
\end{equation}
Now, we derive the Rao--Blackwellized Monte Carlo estimator of the gradient. We start with restating Theorem 2.2 in \citet{malagutti-etal-2024-role}, which will prove useful.
\begin{theorem}[\citet{malagutti-etal-2024-role}; Theorem 2.2]
Let $\plm$ and $\qlm$ be language models over $\alphabet$. The following equality holds\looseness=-1
\begin{align}
\kl(\plm \mid\mid \qlm) = \sum_{\str \in \alphabet^*} \pprob(\str) \kl\mleft(\pprob \mleft(\cdot \mid \str \mright) \mid\mid \qpprob \mleft(\cdot \mid \str \mright)\mright),
\end{align}
where we treat $\pprob(\cdot \mid \str)$ and $\qpprob \mleft(\cdot \mid \str \mright)$ as probability distributions over $\alphabar^*$.
\end{theorem}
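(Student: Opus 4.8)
The plan is to obtain the identity from the chain rule \cref{eq:lm} together with a regrouping of the resulting sum by prefixes --- essentially the same bookkeeping that underlies our Rao--Blackwellized estimator. Concretely, I would start from the definition \cref{eq:kl} and use \cref{eq:lm} to write, for every $\str = y_1 \cdots y_N \in \kleene{\alphabet}$,
\begin{align*}
\log \frac{\plm(\str)}{\qlm(\str)}
&= \log \frac{\pprob(\eos \mid \str)}{\qpprob(\eos \mid \str)}
+ \sum_{n=1}^{N} \log \frac{\pprob(y_n \mid \str_{<n})}{\qpprob(y_n \mid \str_{<n})},
\end{align*}
so that $\kl(\plm \mid\mid \qlm)$ becomes a sum over strings $\str$, over positions $n \in \{1,\dots,|\str|\}$, and over the final $\eos$ step.

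Next I would interchange the order of summation and collect terms according to the prefix being conditioned on. Fix $\prefix \in \alphabet^*$ and $a \in \alphabet$: the strings $\str$ whose $n$-th symbol is $a$ with $\str_{<n} = \prefix$ are exactly those of the form $\prefix\, a\, \str'$ with $\str' \in \kleene{\alphabet}$, and their total $\plm$-mass is $\pprob(\prefix a) = \pprob(\prefix)\,\pprob(a \mid \prefix)$; likewise the $\eos$ contribution at prefix $\prefix$ carries mass $\plm(\prefix) = \pprob(\prefix)\,\pprob(\eos \mid \prefix)$. Summing these contributions over $a \in \alphabar$ reassembles, prefix by prefix, exactly $\pprob(\prefix)\sum_{a \in \alphabar}\pprob(a \mid \prefix)\log\frac{\pprob(a \mid \prefix)}{\qpprob(a \mid \prefix)} = \pprob(\prefix)\,\kl(\pprob(\cdot \mid \prefix) \mid\mid \qpprob(\cdot \mid \prefix))$, and summing over $\prefix \in \alphabet^*$ yields the claim.

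The one delicate point --- and the step I expect to be the main obstacle --- is justifying the interchange of the infinite sum over strings with the sum over positions: the summand $\plm(\str)\log\frac{\pprob(y_n \mid \str_{<n})}{\qpprob(y_n \mid \str_{<n})}$ is not sign-definite, so Tonelli does not apply directly. I would handle this exactly as in the proof of \Cref{theorem:rbinf}: work with the depth-$N$ truncations, for which the rearrangement is a finite reindexing, and then pass to the limit --- monotonically on the right-hand side, since every term $\pprob(\prefix)\,\kl(\pprob(\cdot \mid \prefix) \mid\mid \qpprob(\cdot \mid \prefix))$ is non-negative, and by dominated convergence on the left, using $\kl(\plm \mid\mid \qlm) < \infty$ together with the elementary bound that the negative part of $\plm(\str)\log\frac{\plm(\str)}{\qlm(\str)}$ is at most $\frac{1}{e}\,\qlm(\str)$ and hence summable. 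Alternatively, the identity can be read off as the $M = 1$ specialization of the unbiasedness statement \Cref{theorem:rbinf}$(i)$, by applying the same prefix-grouping to $\E_{\yvar \sim \plm}[\rbestimator]$.
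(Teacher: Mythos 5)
First, a point of comparison: the paper does not actually prove this statement---it is imported verbatim from \citet{malagutti-etal-2024-role} and the reader is referred there---so there is no in-paper proof to measure you against. That said, your chain-rule-plus-regrouping argument is the natural one and is exactly the bookkeeping that underlies the paper's own \cref{lemma:ntoinf} and \cref{theorem:rbinf}. The algebraic core is right: the bijection between (string, position) pairs and (prefix, next symbol, suffix) triples, with total $\plm$-mass $\pprob(\prefix a)$ for $a\in\alphabet$ and $\plm(\prefix)$ for the $\eos$ step, reassembles each per-prefix KL correctly.

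The step I would push back on is the limit interchange, which you rightly flag as the delicate point but do not quite close. Your bound that the negative part of $\plm(\str)\log\frac{\plm(\str)}{\qlm(\str)}$ is at most $\qlm(\str)/e$ does give absolute convergence of the string-level sum, hence convergence of the \emph{length}-truncated left-hand side; but regrouping that length-truncated sum by prefixes does not reproduce the depth-$N$ truncation of the right-hand side, because the suffix masses are themselves truncated (you get $\sum_{\str'\,:\,|\prefix a \str'| < N}\plm(\prefix a \str')$ in place of $\pprob(\prefix a)$), and the per-symbol log-ratios are not sign-definite, so you cannot pass to the limit inside each prefix term for free. If you instead truncate by \emph{depth}---condition on the first $N$ padded symbols---the finite identity is exact (it is just the chain rule for relative entropy on the finite space $\alphabar^N$), but then the left-hand side carries a boundary term $\sum_{|\prefix| = N}\pprob(\prefix)\log\frac{\pprob(\prefix)}{\qpprob(\prefix)}$ whose vanishing your dominated-convergence argument does not address. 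The clean repair is to note that $L_N \defeq \kl$ between the laws of $\Yb_{\leq N}$ under the two models equals the depth-$N$ right-hand side exactly by the finite chain rule, that the right-hand side increases to its limit since every prefix term is non-negative, and that $L_N$ increases to $\kl(\plm \mid\mid \qlm)$ by monotonicity of relative entropy along the filtration generated by $\Yb_{\leq N}$. With that substitution your proof is complete; as written, it sits at roughly the same level of rigor as the paper's own appendix arguments, which likewise invoke Fubini and dominated convergence under stated integrability hypotheses.
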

We refer the reader to \citet{malagutti-etal-2024-role} for the proof. Next, to derive the Rao--Blackwellized estimator of the gradient, we take the gradient of the local KL as stated in the following theorem.
\begin{restatable}{theorem}{rbgrad}\label{theorem:rbgrad}
Let $\policy$ and $\qlm$ be two language models over $\alphabet$ and $\prefixprob$ the prefix probability function of $\policy$. 
Then, the following equality holds
\begin{equation} \label{eq:localklgrad}
\nablat \kl(\policy \mid\mid \qlm)
= \sum_{\str \in \Sigma^*} \prefixprob(\str) \E_{Y} \Big[
\log \frac{\ppolicy(Y \mid \str)}{\qpprob(Y \mid \str)} 
\cdot \left( \nablat \log \prefixprob(Y \mid \str)
+ \nablat \log \prefixprob(\str) \right)
\Big].
\end{equation}
\end{restatable}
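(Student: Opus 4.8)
The plan is to differentiate the chain-rule decomposition of the KL divergence that we just restated from \citet{malagutti-etal-2024-role}, i.e.,
\[
\kl(\policy \mid\mid \qlm) = \sum_{\str \in \Sigma^*} \prefixprob(\str)\, \kl\big(\prefixprob(\cdot \mid \str) \mid\mid \qpprob(\cdot \mid \str)\big),
\]
with respect to $\btheta$, and then reorganize the result into the form claimed in \Cref{eq:localklgrad} using the product rule together with two applications of the log-derivative identity $\nablat \mu_{\btheta} = \mu_{\btheta}\, \nablat \log \mu_{\btheta}$: once on the prefix mass $\prefixprob(\str)$ and once on the next-symbol probabilities $\prefixprob(a \mid \str)$, $a \in \alphabar$. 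Throughout, the inner expectation $\E_{Y}[\cdot]$ is understood to be over $Y \sim \prefixprob(\cdot \mid \str)$.

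The first thing to settle is that $\nablat$ may be pushed inside the summation over $\str \in \Sigma^*$. This is the one genuinely delicate point, since the sum ranges over a countably infinite set, so the interchange must be licensed by a dominated-convergence (equivalently, uniform-integrability) argument applied to the partial sums of the differentiated series. I would invoke the standing hypotheses of the paper---finiteness of $\kl(\policy \mid\mid \qlm)$, which also supplies the absolute-continuity condition $\qpprob(a \mid \str) = 0 \Rightarrow \prefixprob(a \mid \str) = 0$ that keeps every logarithm below finite wherever it is weighted by positive mass, and finiteness of $\var{\klmc}$, together with smoothness of $\btheta \mapsto \policy$---to exhibit an integrable dominating function, noting also that summands with $\prefixprob(\str) = 0$ contribute nothing and may be dropped at the outset.

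Granting the interchange, apply the product rule to each summand, splitting it into $\big(\nablat \prefixprob(\str)\big)\, \kl\big(\prefixprob(\cdot \mid \str) \mid\mid \qpprob(\cdot \mid \str)\big)$ plus $\prefixprob(\str)\, \nablat \kl\big(\prefixprob(\cdot \mid \str) \mid\mid \qpprob(\cdot \mid \str)\big)$. For the first piece, rewrite $\nablat \prefixprob(\str) = \prefixprob(\str)\, \nablat \log \prefixprob(\str)$ and use $\kl\big(\prefixprob(\cdot \mid \str) \mid\mid \qpprob(\cdot \mid \str)\big) = \E_{Y}[\log (\prefixprob(Y \mid \str)/\qpprob(Y \mid \str))]$, which yields $\prefixprob(\str)\, \E_{Y}[\log (\prefixprob(Y \mid \str)/\qpprob(Y \mid \str))]\, \nablat \log \prefixprob(\str)$. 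For the second piece, expand the local KL as the finite sum $\sum_{a \in \alphabar} \prefixprob(a \mid \str) \log (\prefixprob(a \mid \str)/\qpprob(a \mid \str))$ and differentiate termwise:
\begin{align*}
\nablat \sum_{a \in \alphabar} \prefixprob(a \mid \str)\log \frac{\prefixprob(a \mid \str)}{\qpprob(a \mid \str)}
&= \sum_{a \in \alphabar} \big(\nablat \prefixprob(a \mid \str)\big)\log \frac{\prefixprob(a \mid \str)}{\qpprob(a \mid \str)} \\
&\quad + \sum_{a \in \alphabar} \prefixprob(a \mid \str)\, \nablat \log \prefixprob(a \mid \str).
\end{align*}
The last sum vanishes, being a score function integrated against its own distribution: $\sum_{a} \prefixprob(a \mid \str)\, \nablat \log \prefixprob(a \mid \str) = \nablat \sum_{a} \prefixprob(a \mid \str) = \nablat 1 = 0$. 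The remaining sum, after another use of $\nablat \prefixprob(a \mid \str) = \prefixprob(a \mid \str)\, \nablat \log \prefixprob(a \mid \str)$, equals $\E_{Y}[\log (\prefixprob(Y \mid \str)/\qpprob(Y \mid \str))\, \nablat \log \prefixprob(Y \mid \str)]$. Adding the two pieces, pulling the $\str$-measurable factor $\nablat \log \prefixprob(\str)$ inside the $Y$-expectation, and summing over $\str \in \Sigma^*$ gives exactly \Cref{eq:localklgrad}.

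As flagged, the only substantive obstacle is rigorously justifying differentiation under the infinite sum over $\str$; everything downstream is the product rule, the two log-derivative identities, and the elementary fact that a score function has zero mean. A secondary bookkeeping point for the full write-up is the treatment of boundary terms where some probability vanishes, which are either vacuous or controlled by the finite-KL hypothesis.
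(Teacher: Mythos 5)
Your proposal follows essentially the same route as the paper's proof: differentiate the Malagutti et al.\ chain-rule decomposition term by term via the product rule, apply the log-derivative identity to both $\prefixprob(\str)$ and $\prefixprob(\cdot \mid \str)$, and use the zero-mean score-function identity to kill the extra $\sum_{a}\nablat \prefixprob(a\mid\str)$ term before recombining into \Cref{eq:localklgrad}. If anything, you are more careful than the paper, which silently interchanges $\nablat$ with the infinite sum over $\str$ where you explicitly flag that a dominated-convergence justification is needed.
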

\begin{proof}
See \Cref{sec:rbgrad}.
\end{proof}
We then construct the Monte Carlo estimator of the gradient using the above theorem, which naturally results in the following unbiased, Rao--Blackwellized Monte Carlo estimator of the gradient:
\begin{align} \label{eq:rbgrad}
\gradrb &= \frac{1}{M}\sum_{m=1}^M \sum_{n=1}^{|\yvar^{(m)}|}
\E_{Y} \Big[\log \frac{\ppolicy(Y \mid \yvar^{(m)}_{<n})}{\qpprob(Y \mid \yvar^{(m)}_{<n})}  
\cdot \left( \nablat \log \prefixprob(\yvar^{(m)}_{<n}) + \nablat \log \prefixprob(Y \mid \yvar^{(m)}_{<n}) \right) \Big]. 
\end{align}
The above estimator is unbiased, as it is the MC estimator of \Cref{eq:localklgrad}.
\begin{restatable}{theorem}{gradvar}\label{theorem:gradvar}
Assuming $\var{\gradrb} < \infty, \var{\mcgrad} < \infty$, we have
\begin{equation}
    \E \left[ \left\lVert \gradrb - \klgrad \right\rVert^2 \right] \leq \E \left[ \left\lVert \mcgrad - \klgrad \right\rVert^2 \right].
\end{equation}
\end{restatable}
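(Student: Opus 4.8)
The plan is to exhibit $\gradrb$ as a vector-valued Rao--Blackwellization of $\mcgrad$ built from a telescoping decomposition of each sample's contribution, and then to replay the three-step argument used to prove \Cref{theorem:rbinf}. First I would reduce the mean-squared-error inequality to a variance inequality. Since $\mcgrad$ and $\gradrb$ are both unbiased for $\klgrad$ (the latter by construction, being the Monte Carlo estimator of \Cref{eq:localklgrad}), we have $\E\bigl[\lVert\mcgrad-\klgrad\rVert^2\bigr]=\sum_i\var{(\mcgrad)_i}$ and likewise for $\gradrb$, so it suffices to prove $\sum_i\var{(\gradrb)_i}\le\sum_i\var{(\mcgrad)_i}$. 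The only new convexity input relative to \Cref{theorem:rbinf} is that $v\mapsto\lVert v\rVert^2$ is convex on $\R^d$, so conditional Jensen gives $\E\bigl[\lVert\E[V\mid T]\rVert^2\bigr]\le\E\bigl[\lVert V\rVert^2\bigr]$ and therefore $\E\bigl[\lVert\E[V\mid T]-c\rVert^2\bigr]\le\E\bigl[\lVert V-c\rVert^2\bigr]$ for any constant $c$; this is the vector analogue of the $(\cdot)^2$-Jensen step in \Cref{theorem:rbinf}.

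The key step is a telescoping identity that puts $\mcgrad$ and $\gradrb$ into exactly the relationship that $\klmc$ and $\rbestimator$ enjoy. For an $\eos$-padded sample $\yvarb$ and $0\le n\le|\yvarb|$, let $A_n$ be the product of $\log\frac{\prefixprob(\yvarb_{\le n})}{\qpprob(\yvarb_{\le n})}$ with $\nablat\log\prefixprob(\yvarb_{\le n})$. Then $A_0=0$, and since the prefix probability of the fully padded string equals $\policy(\yvar)$ we have $A_{|\yvarb|}=\log\frac{\policy(\yvar)}{\qlm(\yvar)}\,\nablat\log\policy(\yvar)$, so the per-sample summand of $\mcgrad$ is $\sum_n\bigl(A_n-A_{n-1}\bigr)$. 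Expanding $G_n\defeq A_n-A_{n-1}$ via the chain-rule additivity of $\log$-prefix-probabilities and of $\nablat\log$-prefix-probabilities shows that $G_n$ depends only on $\yvarb_{\le n}$ and decomposes as
\begin{equation*}
G_n=\log\frac{\prefixprob(\Yb_n\mid\yvarb_{<n})}{\qpprob(\Yb_n\mid\yvarb_{<n})}\Bigl(\nablat\log\prefixprob(\yvarb_{<n})+\nablat\log\prefixprob(\Yb_n\mid\yvarb_{<n})\Bigr)+\log\frac{\prefixprob(\yvarb_{<n})}{\qpprob(\yvarb_{<n})}\,\nablat\log\prefixprob(\Yb_n\mid\yvarb_{<n}).
\end{equation*}
The second (``acausal'') term has zero conditional mean given $\yvarb_{<n}$ by the score identity $\E_Y[\nablat\log\prefixprob(Y\mid\yvarb_{<n})\mid\yvarb_{<n}]=0$, while the first term integrates to the $n$-th summand of \Cref{eq:rbgrad}; hence $\E[G_n\mid\yvarb_{<n}]$ is exactly that summand and $\gradrb=\tfrac{1}{M}\sum_m\sum_n\E[G_n(\yvarb^{(m)})\mid\yvarb^{(m)}_{<n}]$. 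In words, $\gradrb$ is obtained from $\mcgrad$ by Rao--Blackwellizing each telescoped increment against its own prefix --- exactly the relationship $\rbestimator$ bears to $\klmc$ --- with the scalar step-wise log-ratio increments of \Cref{theorem:rbinf} replaced by the vector increments $G_n$.

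With this in hand I would rerun the proof of \Cref{theorem:rbinf} in the vectorized setting: define the step-wise estimators $\tfrac{1}{M}\sum_m G_n(\yvarb^{(m)})$ and their prefix-conditioned Rao--Blackwellizations, form their length-$N$ truncations, establish the vector analogue of \Cref{lemma:rb} (equal means by the tower property, coordinate-summed-variance reduction by $\lVert\cdot\rVert^2$-Jensen), and finally let $N\to\infty$ using the vector analogue of \Cref{lemma:ntoinf} together with $\var{\mcgrad}<\infty$ and $\var{\gradrb}<\infty$ to justify the interchange of limit and expectation. Combining the resulting $\sum_i\var{(\gradrb)_i}\le\sum_i\var{(\mcgrad)_i}$ with the first reduction proves the theorem.

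As in \Cref{theorem:rbinf}, the main obstacle is the truncation step: the increments $G_n$ are correlated across positions, so Rao--Blackwellizing them pointwise does not by itself reduce the variance of their sum, and one must verify that the truncated estimators obey the required mean and variance relations for this particular decomposition. The only genuinely new work beyond \Cref{theorem:rbinf} is the bookkeeping in the telescoping identity --- carrying the score factor through and checking that the acausal term vanishes in conditional expectation so that $\E[G_n\mid\yvarb_{<n}]$ coincides with \Cref{eq:rbgrad} --- after which the variance analysis is a coordinate-by-coordinate copy of the scalar proof.
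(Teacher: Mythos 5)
Your proposal is algebraically sound and takes a genuinely different route from the paper's. The telescoping identity checks out: with $A_n=\log\frac{\prefixprob(\yvarb_{\le n})}{\qpprob(\yvarb_{\le n})}\,\nablat\log\prefixprob(\yvarb_{\le n})$ the increments $G_n=A_n-A_{n-1}$ are measurable with respect to $\yvarb_{\le n}$, the acausal term is annihilated by the score identity, and $\E[G_n\mid\yvarb_{<n}]$ is exactly the $n$-th summand of \Cref{eq:rbgrad}, so $\gradrb$ really is the prefix-conditioned Rao--Blackwellization of a telescoped $\mcgrad$. The paper never makes this identification. Its truncated estimator $\mcgradN$ in \Cref{eq:gradmcN} attaches the \emph{full} score $\nablat\log\prefixprob(\yvarb_{\le N})$ to every step-wise log-ratio, so its summands are not prefix-measurable, and it pays for this with a second conditioning pass: after one Jensen chain it lands on the intermediate quantity in \Cref{eq:gradmid} (your $G_n$ without the acausal term), and it then invokes \Cref{lemm:prefixgrad} --- the martingale property of the score, $\nablat\log\prefixprob(\yvarb_{\le n})=\E[\nablat\log\prefixprob(\yvarb'_{\le N})\mid\yvarb'_{\le n}=\yvarb_{\le n}]$ --- together with a second application of Jensen to climb from there to $\E[\lVert\mcgradN-\klgrad^N\rVert^2]$. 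Your route trades \Cref{lemm:prefixgrad} and the second Jensen pass for the score identity inside the telescoping, and reaches $\mcgradN$ in one pass because $\sum_{n\le N}G_n=\mcgradN$ exactly; the truncation and $N\to\infty$ bookkeeping is the same in both.

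Where I would push back is the claim that what remains is ``a coordinate-by-coordinate copy of the scalar proof.'' The crux in both your route and the paper's is the same deferred step: for correlated, prefix-adapted vector increments, replacing each increment by its conditional expectation given its own prefix must be shown to reduce the second moment of the \emph{sum}. The paper's mechanism for this (\Cref{lemma:rb}, reused in the first half of \Cref{lemm:gradn}) evaluates each increment on its own copy of the trajectory, conditionally independent given a shared prefix source, applies Jensen, and then collapses the copies back to a single trajectory; that collapse alters the cross-moments $\E[\langle G_n,G_k\rangle]$ for $n\ne k$, which is precisely why pointwise Rao--Blackwellization of correlated summands is not automatically variance-reducing (a caveat the paper itself emphasizes before \Cref{theorem:rbinf}). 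You acknowledge this obstacle but do not discharge it: your increments differ from the paper's intermediate increments by the mean-zero acausal term $\log\frac{\prefixprob(\yvarb_{<n})}{\qpprob(\yvarb_{<n})}\,\nablat\log\prefixprob(\Yb_n\mid\yvarb_{<n})$, which changes exactly those cross-moments, so the copy-and-collapse step has to be re-verified for your decomposition rather than cited. Until that is done, the proposal is a clean reorganization of the argument --- arguably more transparent than the paper's --- but not yet a complete proof.
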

\begin{proof}
See \Cref{sec:gradvarproof}.
\end{proof}

\paragraph{Off-policy Gradient.}
So far, we have discussed how to estimate the gradient of $\kl(\policy \mid\mid \qlm)$ using samples drawn from the current policy $\policy$. Crucially, we derive the gradient manually rather than relying on automatic differentiation because the samples depend on $\btheta$ through $\policy$. However, in practice and for efficiency reasons, we often collect large batches of samples in parallel with the optimization loop. As a result, these samples are generated from a slightly outdated version of the policy, denoted $\oldpolicy$. To compute the KL divergence using samples from $\oldpolicy$, we first write the KL as the expectation under $\oldpolicy$ as
\begin{align}
    \kl(\policy \mid \mid \qlm) = \E_{\yvar \sim \oldpolicy} \left[\frac{\policy(\yvar)}{\oldpolicy(\yvar)} \log \frac{\policy(\yvar)}{\qlm(\yvar)} \right].
\end{align}
Therefore, the MC estimator using samples $\yvar^{(1)}, \ldots, \yvar^{(M)} \simIID \oldpolicy$ is
\begin{align}
    \klmcold = \frac{1}{M} \sum_{m=1}^M \frac{\policy(\yvar^{(m)})}{\oldpolicy(\yvar^{(m)})} \log \frac{\policy(\yvar^{(m)})}{\qlm(\yvar^{(m)})}.
\end{align}
Given the unbiasedness proof of the Rao--Blackwellization in \Cref{theorem:rbinf}, we can similarly write
\begin{equation}
    \kl(\policy \mid \mid \qlm) = \E_{\yvar \sim \oldpolicy} \Bigg[\frac{\policy(\yvar)}{\oldpolicy(\yvar)} \sum_{n=1}^{|\yvar^{(m)}|} \E_{Y} \left[\log \frac{\prefixprob(Y \mid \yvar_{<n})}{\qpprob(Y \mid \yvar_{<n})}\right] \Bigg]. 
\end{equation}
Therefore, the Rao--Blackwellized MC estimator using samples $\yvar^{(1)}, \ldots, \yvar^{(M)} \simIID \oldpolicy$ is
\begin{equation}
    \rbestimatorold = \frac{1}{M} \sum_{m=1}^M \frac{\policy(\yvar^{(m)})}{\oldpolicy(\yvar^{(m)})}  \sum_{n=1}^{|\yvar^{(m)}|} 
    \E_{Y} \left[\log \frac{\ppolicy(Y \mid \yvar_{<n}^{(m)})}{\qpprob(Y \mid \yvar_{<n}^{(m)})}\right].
\end{equation}
Since $\klmcold$ and $\rbestimatorold$ use samples from the old policy that does not depend on $\btheta$, we can apply automatic differentiation to compute the estimate of the KL gradient by computing the gradient of $\klmcold$ and $\rbestimatorold$.

\section{Experiments}
We use the sentiment control task as the testbed to empirically evaluate our theoretical findings on the KL estimators. Concretely, the reference model, denoted as $\qlm$, is the \texttt{GPT-IMDB}\footnote{Specifically, we use \url{https://huggingface.co/lvwerra/gpt2-imdb}.} model, i.e., a \texttt{GPT-2} \citep{radford2019language} model fine-tuned on \textsc{imdb} corpus \citep{imdb}. The goal of the task is to fine-tune this language model such that the samples from it are movie reviews with a positive sentiment. The fine-tuned language model is denoted with $\policy$. In the following experiments, we estimate the KL divergence between $\policy$ and $\qlm$. We provide a code snippet for implementing the RB estimator in \Cref{sec:code}.\looseness=-1

\subsection{Analyzing the KL Estimators} \label{sec:exp-analyze}

In this experiment, we empirically evaluate the bias, variance, and consistency of various KL estimators. To obtain $\policy$, we fine-tune $\qlm$ with direct preference optimization \citep[DPO;][]{dpo} on a sample of 5,000 data points from the \textsc{IMDB} training set.  To create the preference data required for DPO training, following \citet{dpo}, we sample $4$ responses for each prompt and create $6$ pairs per prompt. 
\begin{wraptable}[14]{r}{0.45\textwidth}
% \vspace{-pt}
\centering
\caption{Estimated value $\pm$ empirical standard deviation of different estimators. When aggregating over prompts, $\klht$ and $\klcv$ fail to significantly reduce the variance of $\klmc$. RB estimator, however, achieves the lowest standard deviation.}
\resizebox{0.45\textwidth}{!}{
\begin{tabular}{@{}lccc@{}}
\toprule
 & $M=1$ & $M=5$ & $M=10$ \\
\midrule
$\klmc$ & $6.76 \pm 0.16$ & $6.76 \pm 0.07$ & $6.76 \pm 0.05$\\
$\klht$ & $6.76 \pm 0.16$ & $6.76 \pm 0.07$ & $6.76 \pm 0.05$\\
$\klcvo$ & $6.28 \pm 2.54$ & $6.28 \pm 1.13$ & $6.28 \pm 0.79$\\
$\klcv$ & $6.76 \pm 0.16$ & $6.76 \pm 0.07$ & $6.76 \pm 0.05$\\
$\rbestimator$ & $6.76 \pm \mathbf{0.11}$ & $6.76 \pm \mathbf{0.05}$ & $6.76 \pm \mathbf{0.03}$\\ \bottomrule
\end{tabular}}
\label{tab:biasvar}
\end{wraptable}
To determine the preferred response in each pair, we employ a binary sentiment classifier,\footnote{Specifically, we use \url{https://huggingface.co/lvwerra/distilbert-imdb}.} selecting the response with the higher probability of positive sentiment. Upon successful fine-tuning, $\policy$ should assign a higher probability mass to movie reviews with positive sentiment while maintaining a low KL divergence with $\qlm$. We then evaluate this KL divergence using our estimators to assess their reliability in measuring distributional shifts induced by fine-tuning.

\begin{wrapfigure}[15]{r}{0.5\textwidth}
    \vspace{-11pt}
    \centering
    \includegraphics[width=0.5\textwidth]{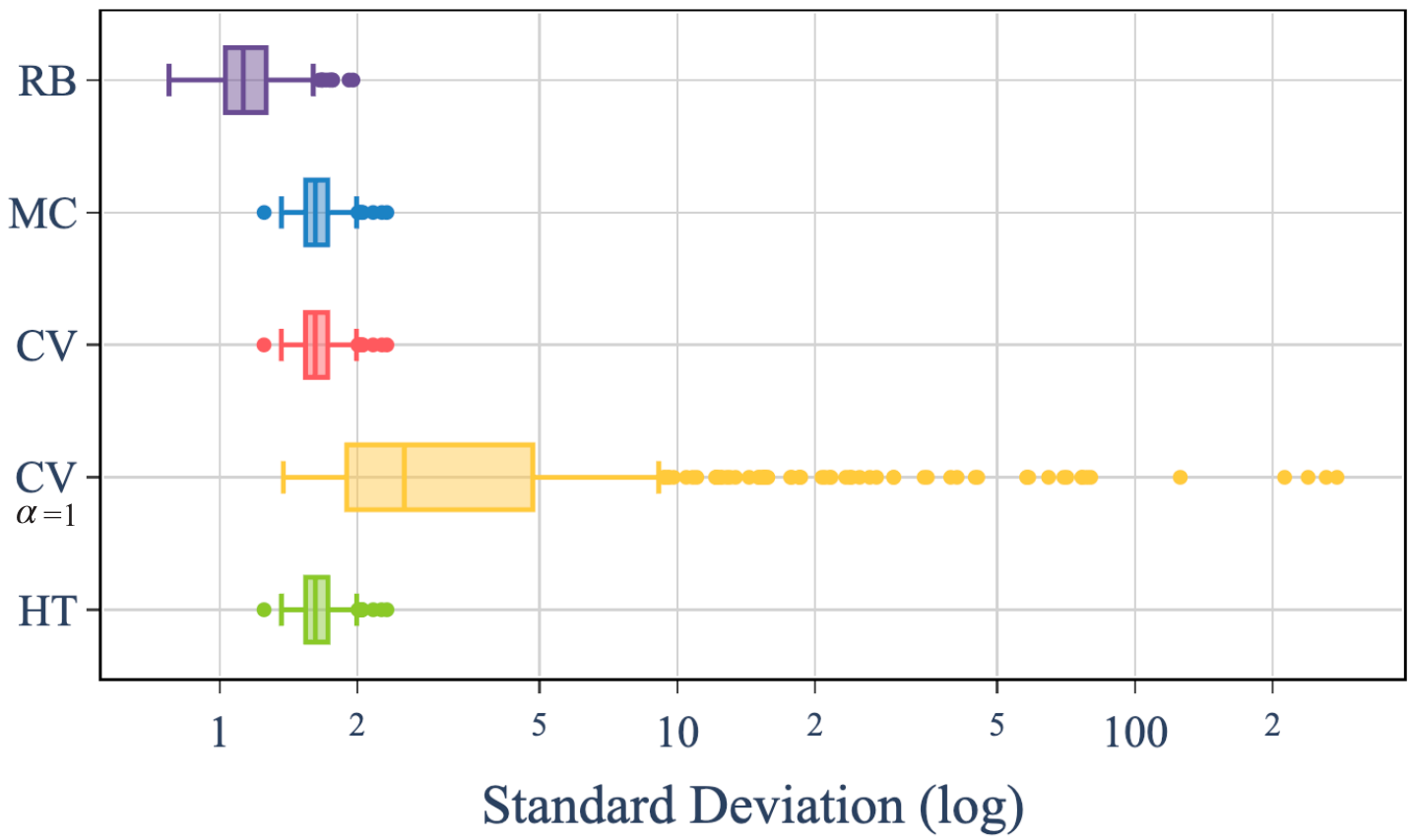}
    \caption{Standard deviation of KL estimators across various prompts in the \textsc{IMDB} datasest.}
    \label{fig:per-prompt}
\end{wrapfigure} 
We evaluate on 512 examples from the \textsc{IMDB} dataset. 
% deviations. 
For each review, we randomly select a prefix length between 2 and 8 tokens and use it as the prompt. we then generate 4000 samples from $\policy$ for each prompt. Using these samples, we compute the MC, control variate (CV), and Rao–Blackwellized (RB) estimators and estimate their standard 
We also implement the Horvitz--Thompson (HT) estimator; see \Cref{sec:ht} for details. The CV estimator, $\klcv$, is computed twice: once using the optimal $\alpha$ estimated from 1000 samples, and once with $\alpha = 1$ to match the setup in \citet{klblog}.

In \Cref{tab:biasvar}, we report the expected KL estimate along with the empirical standard deviation of different estimators evaluated at sample sizes $M=1, 5, 10$. To obtain these estimates, we compute each estimator using $M$ samples, repeating the process $4000/M$ times to estimate both the expected value and the standard deviation of the estimates. Our findings confirm that all estimators except one ($\klcv, \alpha=1$), are unbiased and report an expected KL divergence of $6.76$. We also observe that the CV estimators fail to significantly reduce the variance of the standard MC estimator. Importantly, the RB estimator achieves the lowest standard deviation and offers a more robust estimate compared to the MC estimator. Interestingly, we observe that the $\klcv$ estimator exhibits a noticeable bias and high standard deviation when $\alpha=1$, i.e., when it is not set to its optimal value. The bias arises from numerical instability during the computation of $g(\yvar) = \frac{\qlm(\yvar)}{\plm(\yvar)} - 1$. The high variance is due to large values of $\var{g(\yvar)}$. Specifically, for certain prompts, $\var{g(\yvar)}$ can be unbounded. We visualize the estimates for $3$ example prompts in \Cref{sec:experiments}.

Since the robustness of the estimators depends on the choice of the prompt, we further analyze their estimated standard deviations across all prompts. \Cref{fig:per-prompt} presents a box plot of standard deviations (in log scale) for each estimator. The $\klcv$ estimator with $\alpha=1$ shows significant instability for certain prompts, with numerous outliers indicating high variance. In contrast, the $\klmc$ and the standard $\klcv$ estimators exhibit comparable standard deviations. In particular, the Rao-Blackwellized estimator consistently achieves the lowest standard deviation, suggesting that it provides the most stable estimates.\looseness=-1

\subsection{KL Estimation and RLHF Training Dynamics} \label{sec:rlhf-exp}
A key application of KL estimation is in the RLHF training loop. From the previous experiment, we observed that the RB estimator significantly reduces the standard deviation of the MC estimator. Therefore, it is natural to ask how this affects RLHF performance when this estimator is used in the training loop. The RLHF objective consists of two terms: (i) the expected rewards for samples generated by the language model $\policy$, which in this case is the samples' score under a sentiment classifier\footnote{Specifically, we look at the logits of the positive class.}, and (ii) the negative KL divergence between the fine-tuned model $\policy$ and the reference model $\qlm$, which represents the language model before fine-tuning.
\begin{figure*}
    \centering
    \includegraphics[width=\linewidth]{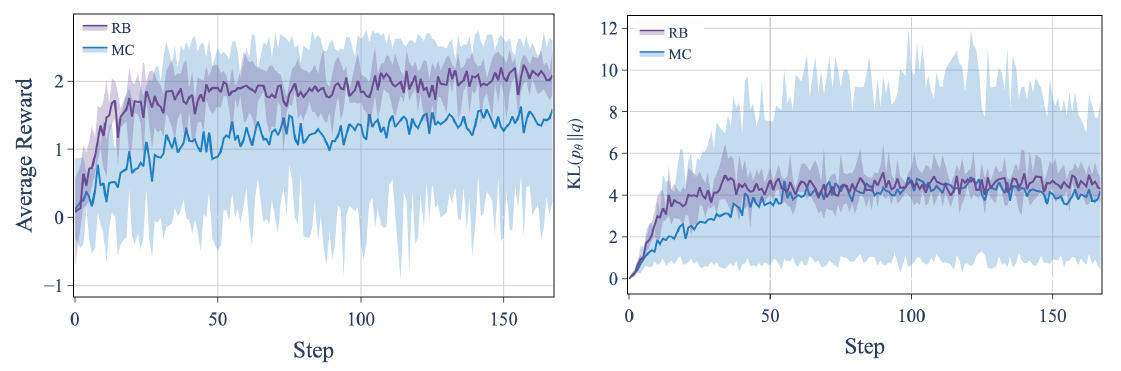}
    \caption{Comparison of the Monte Carlo (MC) and Rao--Blackwellized (RB) estimators in the RLHF fine-tuning loop. We perform RLHF with each estimator $5$ times and plot the mean and standard deviation (in shades) of the average reward values and the KL at each fine-tuning step. We observe that the MC estimator is not as stable as the RB estimator and its performance varies significantly across different runs. However, RB estimator reliably offers a good balance between achieving low KL and high reward values in all runs.}
    \label{fig:grad-movie}
\end{figure*}

We compare the MC and RB estimators for computing the gradient of the KL divergence term in the RLHF training loop. We use the RL algorithm\footnote{\Cref{sec:note} discusses a common mistake when Rao--Blackwellizing the KL estimator in trust-region algorithms.} proposed by \citet{rloo},\footnote{Specifically, we use the available implementation of this algorithm in the \texttt{trl} library \citep{trl}.} which is an improved verfaiion of the \textsc{reinforce} algorithm \citep{williams1992simple}.\footnote{\Cref{sec:experiment-details} reports the hyperparameters used for the algorithm.} 

\begin{wrapfigure}[18]{r}{0.5\textwidth}
\vspace{-5pt}
    \centering
    \includegraphics[width=0.5\textwidth]{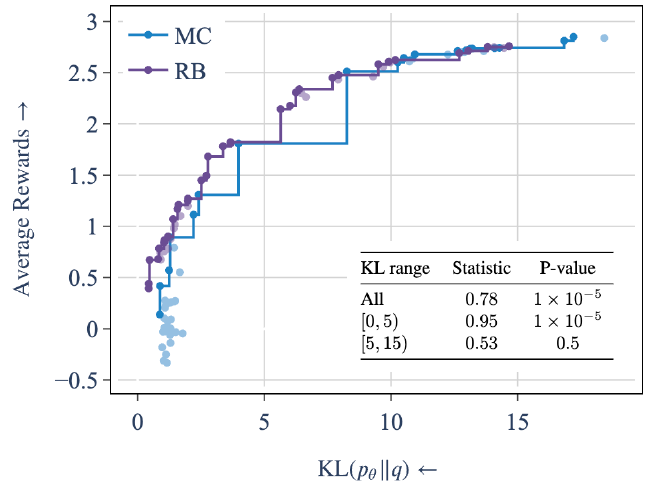}
    \caption{Compared to models trained with MC esimator, models trained with RB appear on the Pareto front $78\%$ of the time.}
    \label{fig:pareto}
\end{wrapfigure}

First, we empirically test \Cref{theorem:gradvar} by measuring the variance of the gradient norm. We sample $40$ prompts and compute the gradient of the KL divergence with respect to the model parameters using both the MC and RB estimators. To estimate variance empirically, we repeat this process $5$ times. Both estimators are evaluated on the same prompts and model initializations to ensure a fair comparison. We find that the variance of the gradient norm estimated with the MC estimator is $59.90$, whereas with the RB estimator it is $45.44$. This corresponds to a $24.6\%$ reduction in variance, providing direct empirical evidence consistent with our theoretical motivation.

We then proceed with using both estimators in RL fine-tuning. We track two metrics throughout fine-tuning: (i) the average reward associated with samples drawn from $\policy$, and (ii) the KL divergence between $\policy$ and $\qlm$. The results are visualized in \Cref{fig:grad-movie}. The purple trace represents the training run where the $\rbestimator$ is used in the optimization loop to estimate the gradient of the KL divergence, while the blue trace represents the run using $\klmc$. The x-axis denotes the fine-tuning step, with the left plot showing the evolution of the average reward and the right plot displaying the KL divergence between $\policy$ and $\qlm$ over the course of fine-tuning. We repeat each experiment $5$ times and report the mean and standard deviation of each metric. Notably, the KL values in the right plot are estimated using the RB estimator. However, we observe the same overall trend when using the MC estimator for evaluation.\looseness=-1

As illustrated in \Cref{fig:grad-movie}, the performance of the models trained using the MC estimator varies significantly across the $5$ experiments, resulting in a large standard deviation in both average rewards and the KL divergence. However, RB estimator consistantly achieves high rewards and reasonable KL values across all runs. This observation suggests that the RB estimator makes RLHF runs more stable.\looseness=-1

Finally, we vary the KL coefficient, $\beta$, in $[0.01, 0.1]$ range and fine-tune $18$ models with each estimator. For each estimator, we plot the Pareto frontier of average rewards versus KL divergence in \Cref{fig:pareto}, displaying models that do not appear on the Pareto front with reduced opacity. Overall, we find that fine-tuning with the RB estimator is more effective at achieving high rewards while maintaining a low KL divergence from the reference model. To quantify this effect, we compute the fraction of RB fine-tuned models that appear on the overall Pareto front—i.e., the frontier obtained when considering all models fine-tuned with either estimator. We then conduct a permutation test and report the results in \Cref{fig:pareto}. We find that $78\%$ of the points on the overall Pareto front come from RB fine-tuned models. Restricting to models with KL values below $5$, this fraction rises to $95\%$, with both results being statistically significant.\looseness=-1

\section{Conclusion}
In this paper, we study the problem of estimating the KL divergence between language models. We provide a comprehensive formal analysis of various KL estimators, with a focus on their bias and variance. We introduce the RB estimator, which is provably unbiased and has variance at most equal to that of the standard NC estimator. This estimator applies the well-known Rao--Blackwellization technique to reduce the variance of the standard MC method. Our empirical results show that the RB estimator significantly reduces the variance compared to the MC estimator, while other estimators fail to achieve meaningful variance reduction or, in some cases, suffer from unbounded variance. Additionally, we find that using our proposed RB estimator makes RLHF more stable and produces models that more frequently lie on the Pareto frontier of reward versus KL, compared to models fine-tuned with the MC estimator.

\section*{Impact Statement} \label{sec:impact}
In this paper, we investigate the fundamental problem of estimating KL divergence between language models. One key application of KL estimation is in RLHF, which aims to enhance fluency while aligning language models with user preferences. However, RLHF can also be misused by bad actors to optimize models for generating misleading, biased, or harmful content. While our work provides a deeper understanding of KL estimation techniques, it is purely foundational research and does not introduce new risks or directly contribute to harmful applications. 

\section*{Limitations} \label{sec:limitation}
In our RLHF experiments, evaluating the variance of our estimator and comparing it to existing methods requires training a large number of models. For instance, the significance test in \Cref{sec:rlhf-exp} involves training 36 models. Due to limited computational resources, we used the controlled-generation task as a testbed. Given the strength of both our theoretical and empirical results, we hope future work will adopt the Rao--Blackwellized estimator and apply it to larger language models and a wider variety of RL-based approaches to LM alignment.\looseness=-1

\section*{Acknowledgements}
We thank Ahmad Beirami and Cristina Pinneri for the insightful discussions throughout the course of this project. We also thank 
Alexander K. Lew for the valuable feedback on a draft of this paper. Afra Amini is supported by the ETH AI Center doctoral fellowship.

\bibliographystyle{acl_natbib}
\bibliography{custom}

\begin{thebibliography}{41}
\expandafter\ifx\csname natexlab\endcsname\relax\def\natexlab#1{#1}\fi

\bibitem[{Agarwal et~al.(2024)Agarwal, Vieillard, Zhou, Stanczyk, Garea, Geist, and Bachem}]{agarwal2024onpolicy}
Rishabh Agarwal, Nino Vieillard, Yongchao Zhou, Piotr Stanczyk, Sabela~Ramos Garea, Matthieu Geist, and Olivier Bachem. 2024.
\newblock \href {https://openreview.net/forum?id=3zKtaqxLhW} {On-policy distillation of language models: Learning from self-generated mistakes}.
\newblock In \emph{The International Conference on Learning Representations}.

\bibitem[{Ahmadian et~al.(2024)Ahmadian, Cremer, Gallé, Fadaee, Kreutzer, Pietquin, Üstün, and Hooker}]{rloo}
Arash Ahmadian, Chris Cremer, Matthias Gallé, Marzieh Fadaee, Julia Kreutzer, Olivier Pietquin, Ahmet Üstün, and Sara Hooker. 2024.
\newblock \href {https://arxiv.org/abs/2402.14740} {Back to basics: Revisiting {REINFORCE} style optimization for learning from human feedback in llms}.

\bibitem[{Blackwell(1947)}]{blackwell1947conditional}
David Blackwell. 1947.
\newblock \href {https://doi.org/10.1214/aoms/1177730491} {Conditional expectation and unbiased sequential estimation}.
\newblock \emph{The Annals of Mathematical Statistics}, 18(1):105--110.

\bibitem[{Borenstein et~al.(2024)Borenstein, Svete, Chan, Valvoda, Nowak, Augenstein, Chodroff, and Cotterell}]{borenstein-etal-2024-languages}
Nadav Borenstein, Anej Svete, Robin Chan, Josef Valvoda, Franz Nowak, Isabelle Augenstein, Eleanor Chodroff, and Ryan Cotterell. 2024.
\newblock \href {https://aclanthology.org/2024.acl-long.807/} {What languages are easy to language-model? a perspective from learning probabilistic regular languages}.
\newblock In \emph{Proceedings of the Annual Meeting of the Association for Computational Linguistics (Volume 1: Long Papers)}.

\bibitem[{Casella and Robert(1996)}]{casellarobert}
George Casella and Christian~P. Robert. 1996.
\newblock \href {http://www.jstor.org/stable/2337434} {{R}ao--{B}lackwellisation of sampling schemes}.
\newblock \emph{Biometrika}.

\bibitem[{Christiano et~al.(2017)Christiano, Leike, Brown, Martic, Legg, and Amodei}]{rlhf}
Paul~F. Christiano, Jan Leike, Tom Brown, Miljan Martic, Shane Legg, and Dario Amodei. 2017.
\newblock \href {https://proceedings.neurips.cc/paper_files/paper/2017/file/d5e2c0adad503c91f91df240d0cd4e49-Paper.pdf} {Deep reinforcement learning from human preferences}.
\newblock In \emph{Advances in Neural Information Processing Systems}.

\bibitem[{Csisz{\'a}r(1967)}]{csiszar1967information}
Imre Csisz{\'a}r. 1967.
\newblock \href {https://doi.org/10.1007/BF02018661} {On information-type measure of difference of probability distributions and indirect observations}.
\newblock \emph{Studia Scientiarum Mathematicarum Hungarica}.

\bibitem[{Cui et~al.(2024)Cui, Yuan, Ding, Yao, He, Zhu, Ni, Xie, Xie, Lin, Liu, and Sun}]{cui2024ultrafeedback}
Ganqu Cui, Lifan Yuan, Ning Ding, Guanming Yao, Bingxiang He, Wei Zhu, Yuan Ni, Guotong Xie, Ruobing Xie, Yankai Lin, Zhiyuan Liu, and Maosong Sun. 2024.
\newblock \href {https://openreview.net/forum?id=BOorDpKHiJ} {{UltraFeedback}: Boosting language models with scaled {AI} feedback}.
\newblock In \emph{Proceedings of the International Conference on Machine Learning}.

\bibitem[{Du et~al.(2024)Du, Sn{\ae}bjarnarson, Stoehr, White, Schein, and Cotterell}]{du-etal-2024-context}
Kevin Du, V{\'e}steinn Sn{\ae}bjarnarson, Niklas Stoehr, Jennifer White, Aaron Schein, and Ryan Cotterell. 2024.
\newblock \href {https://aclanthology.org/2024.acl-long.714/} {Context versus prior knowledge in language models}.
\newblock In \emph{Proceedings of the Annual Meeting of the Association for Computational Linguistics (Volume 1: Long Papers)}.

\bibitem[{Geffner and Domke(2018)}]{geffner2018using}
Tomas Geffner and Justin Domke. 2018.
\newblock \href {https://proceedings.neurips.cc/paper_files/paper/2018/file/dead35fa1512ad67301d09326177c42f-Paper.pdf} {Using large ensembles of control variates for variational inference}.
\newblock In \emph{Advances in Neural Information Processing Systems}.

\bibitem[{Gelfand and Smith(1990)}]{gelfandsmith}
Alan~E. Gelfand and Adrian F.~M. Smith. 1990.
\newblock \href {http://www.jstor.org/stable/2289776} {Sampling-based approaches to calculating marginal densities}.
\newblock \emph{Journal of the American Statistical Association}, 85(410):398--409.

\bibitem[{Gibbs and Su(2002)}]{gibbs}
Alison~L. Gibbs and Francis~Edward Su. 2002.
\newblock \href {http://www.jstor.org/stable/1403865} {On choosing and bounding probability metrics}.
\newblock \emph{International Statistical Review / Revue Internationale de Statistique}.

\bibitem[{Havrilla et~al.(2023)Havrilla, Zhuravinskyi, Phung, Tiwari, Tow, Biderman, Anthony, and Castricato}]{trlx}
Alexander Havrilla, Maksym Zhuravinskyi, Duy Phung, Aman Tiwari, Jonathan Tow, Stella Biderman, Quentin Anthony, and Louis Castricato. 2023.
\newblock \href {https://aclanthology.org/2023.emnlp-main.530} {trl{X}: A framework for large scale reinforcement learning from human feedback}.
\newblock In \emph{Proceedings of the Conference on Empirical Methods in Natural Language Processing}.

\bibitem[{Hershey and Olsen(2007)}]{klapprox}
John~R. Hershey and Peder~A. Olsen. 2007.
\newblock \href {https://doi.org/10.1109/ICASSP.2007.366913} {Approximating the {K}ullback {L}eibler divergence between {G}aussian mixture models}.
\newblock In \emph{Proceedings of the IEEE International Conference on Acoustics, Speech and Signal Processing}.

\bibitem[{Hoogeboom(2015)}]{hoogeboomhj}
Hendrik~Jan Hoogeboom. 2015.
\newblock \href {https://web.archive.org/web/20240623112236/https://liacs.leidenuniv.nl/~hoogeboomhj/second/codingcomputations.pdf} {Undecidable problems for context-free grammars}.

\bibitem[{Horvitz and Thompson(1952)}]{horvitz-thompson-estimator}
D.~G. Horvitz and D.~J. Thompson. 1952.
\newblock \href {http://www.jstor.org/stable/2280784} {A generalization of sampling without replacement from a finite universe}.
\newblock \emph{Journal of the American Statistical Association}.

\bibitem[{Hu et~al.(2024)Hu, Wu, Zhu, Xianyu, Wang, Zhang, and Cao}]{hu2024openrlhf}
Jian Hu, Xibin Wu, Zilin Zhu, Xianyu, Weixun Wang, Dehao Zhang, and Yu~Cao. 2024.
\newblock \href {https://arxiv.org/pdf/2405.11143} {Open{RLHF}: An easy-to-use, scalable and high-performance {RLHF} framework}.
\newblock \emph{Computing Research Repository}.

\bibitem[{Jiang et~al.(2023)Jiang, Sablayrolles, Mensch, Bamford, Chaplot, de~las Casas, Bressand, Lengyel, Lample, Saulnier, Lavaud, Lachaux, Stock, Scao, Lavril, Wang, Lacroix, and Sayed}]{jiang2023mistral7b}
Albert~Q. Jiang, Alexandre Sablayrolles, Arthur Mensch, Chris Bamford, Devendra~Singh Chaplot, Diego de~las Casas, Florian Bressand, Gianna Lengyel, Guillaume Lample, Lucile Saulnier, Lélio~Renard Lavaud, Marie-Anne Lachaux, Pierre Stock, Teven~Le Scao, Thibaut Lavril, Thomas Wang, Timothée Lacroix, and William~El Sayed. 2023.
\newblock \href {https://arxiv.org/pdf/2310.06825} {Mistral 7b}.
\newblock \emph{Computing Research Repository}, arXiv:2310.06825.

\bibitem[{Kullback and Leibler(1951)}]{kldiv}
S.~Kullback and R.~A. Leibler. 1951.
\newblock \href {http://www.jstor.org/stable/2236703} {On information and sufficiency}.
\newblock \emph{The Annals of Mathematical Statistics}.

\bibitem[{Lehmann(1977)}]{lehmann}
Daniel~J. Lehmann. 1977.
\newblock \href {https://doi.org/https://doi.org/10.1016/0304-3975(77)90056-1} {Algebraic structures for transitive closure}.
\newblock \emph{Theoretical Computer Science}.

\bibitem[{Lehmann and Casella(1998)}]{lehmanncasella}
E.~L. Lehmann and George Casella. 1998.
\newblock \href {https://link.springer.com/book/10.1007/b98854} {\emph{Theory of Point Estimation}}.

\bibitem[{Li and Eisner(2009)}]{li-eisner-2009-first}
Zhifei Li and Jason Eisner. 2009.
\newblock \href {https://aclanthology.org/D09-1005/} {First- and second-order expectation semirings with applications to minimum-risk training on translation forests}.
\newblock In \emph{Proceedings of the Conference on Empirical Methods in Natural Language Processing}.

\bibitem[{Maas et~al.(2011)Maas, Daly, Pham, Huang, Ng, and Potts}]{imdb}
Andrew~L. Maas, Raymond~E. Daly, Peter~T. Pham, Dan Huang, Andrew~Y. Ng, and Christopher Potts. 2011.
\newblock \href {https://aclanthology.org/P11-1015} {Learning word vectors for sentiment analysis}.
\newblock In \emph{Proceedings of the Annual Meeting of the Association for Computational Linguistics: Human Language Technologies}.

\bibitem[{Malagutti et~al.(2024)Malagutti, Buinovskij, Svete, Meister, Amini, and Cotterell}]{malagutti-etal-2024-role}
Luca Malagutti, Andrius Buinovskij, Anej Svete, Clara Meister, Afra Amini, and Ryan Cotterell. 2024.
\newblock \href {https://aclanthology.org/2024.naacl-long.382/} {The role of $n$-gram smoothing in the age of neural networks}.
\newblock In \emph{Proceedings of the Conference of the North American Chapter of the Association for Computational Linguistics: Human Language Technologies (Volume 1: Long Papers)}.

\bibitem[{Meta(2023)}]{llama2}
Meta. 2023.
\newblock \href {http://arxiv.org/abs/2307.09288} {{Llama} 2: Open foundation and fine-tuned chat models}.
\newblock Technical report, Meta.

\bibitem[{Ouyang et~al.(2022)Ouyang, Wu, Jiang, Almeida, Wainwright, Mishkin, Zhang, Agarwal, Slama, Ray, Schulman, Hilton, Kelton, Miller, Simens, Askell, Welinder, Christiano, Leike, and Lowe}]{instructgpt}
Long Ouyang, Jeffrey Wu, Xu~Jiang, Diogo Almeida, Carroll Wainwright, Pamela Mishkin, Chong Zhang, Sandhini Agarwal, Katarina Slama, Alex Ray, John Schulman, Jacob Hilton, Fraser Kelton, Luke Miller, Maddie Simens, Amanda Askell, Peter Welinder, Paul~F. Christiano, Jan Leike, and Ryan Lowe. 2022.
\newblock \href {https://proceedings.neurips.cc/paper_files/paper/2022/file/b1efde53be364a73914f58805a001731-Paper-Conference.pdf} {Training language models to follow instructions with human feedback}.
\newblock In \emph{Advances in Neural Information Processing Systems}.

\bibitem[{Pezeshkpour(2023)}]{pezeshkpour2023}
Pouya Pezeshkpour. 2023.
\newblock \href {https://arxiv.org/abs/2306.06264} {Measuring and modifying factual knowledge in large language models}.
\newblock \emph{Computing Research Repository}, arXiv:2306.06264.

\bibitem[{Radford et~al.(2019)Radford, Wu, Child, Luan, Amodei, and Sutskever}]{radford2019language}
Alec Radford, Jeff Wu, Rewon Child, David Luan, Dario Amodei, and Ilya Sutskever. 2019.
\newblock \href {https://d4mucfpksywv.cloudfront.net/better-language-models/language_models_are_unsupervised_multitask_learners.pdf} {Language models are unsupervised multitask learners}.

\bibitem[{Rafailov et~al.(2023)Rafailov, Sharma, Mitchell, Ermon, Manning, and Finn}]{dpo}
Rafael Rafailov, Archit Sharma, Eric Mitchell, Stefano Ermon, Christopher~D. Manning, and Chelsea Finn. 2023.
\newblock \href {https://arxiv.org/pdf/2305.18290.pdf} {Direct preference optimization: Your language model is secretly a reward model}.
\newblock In \emph{Advances in Neural Information Processing Systems}.

\bibitem[{Rao(1945)}]{rao1945information}
C.~R. Rao. 1945.
\newblock Information and the accuracy attainable in the estimation of statistical parameters.
\newblock \emph{Bulletin of the Calcutta Mathematical Society}, 37:81--91.

\bibitem[{Robert and Casella(2004)}]{mcbook}
Christian~P. Robert and George Casella. 2004.
\newblock \href {https://doi.org/10.1007/978-1-4757-4145-2} {\emph{Monte Carlo Statistical Methods}}.

\bibitem[{Ross(2002)}]{sheldon2002simulation}
Sheldon~M. Ross. 2002.
\newblock \href {https://books.google.ch/books?id=DApvQgAACAAJ} {\emph{Simulation}}.

\bibitem[{Schulman(2020)}]{klblog}
John Schulman. 2020.
\newblock \href {http://joschu.net/blog/kl-approx.html} {Approximating {KL} divergence}.
\newblock \url{http://joschu.net/blog/kl-approx.html}.

\bibitem[{Sheng et~al.(2024)Sheng, Zhang, Ye, Wu, Zhang, Zhang, Peng, Lin, and Wu}]{sheng2024hybridflow}
Guangming Sheng, Chi Zhang, Zilingfeng Ye, Xibin Wu, Wang Zhang, Ru~Zhang, Yanghua Peng, Haibin Lin, and Chuan Wu. 2024.
\newblock \href {https://arxiv.org/pdf/2409.19256} {{HybridFlow}: A flexible and efficient {RLHF} framework}.
\newblock \emph{Computing Research Repository}, arXiv:2409.19256.

\bibitem[{Stiennon et~al.(2020)Stiennon, Ouyang, Wu, Ziegler, Lowe, Voss, Radford, Amodei, and Christiano}]{summarizehf}
Nisan Stiennon, Long Ouyang, Jeffrey Wu, Daniel Ziegler, Ryan Lowe, Chelsea Voss, Alec Radford, Dario Amodei, and Paul~F. Christiano. 2020.
\newblock \href {https://proceedings.neurips.cc/paper_files/paper/2020/file/1f89885d556929e98d3ef9b86448f951-Paper.pdf} {Learning to summarize with human feedback}.
\newblock In \emph{Advances in Neural Information Processing Systems}.

\bibitem[{Stoehr et~al.(2024)Stoehr, Du, Sn{\ae}bjarnarson, West, Cotterell, and Schein}]{stoehr-etal-2024-activation}
Niklas Stoehr, Kevin Du, V{\'e}steinn Sn{\ae}bjarnarson, Robert West, Ryan Cotterell, and Aaron Schein. 2024.
\newblock \href {https://aclanthology.org/2024.findings-emnlp.479/} {Activation scaling for steering and interpreting language models}.
\newblock In \emph{Findings of the Association for Computational Linguistics: EMNLP 2024}.

\bibitem[{Svete et~al.(2024)Svete, Borenstein, Zhou, Augenstein, and Cotterell}]{svete-etal-2024-transformers}
Anej Svete, Nadav Borenstein, Mike Zhou, Isabelle Augenstein, and Ryan Cotterell. 2024.
\newblock \href {https://aclanthology.org/2024.emnlp-main.550/} {Can transformers learn $n$-gram language models?}
\newblock In \emph{Proceedings of the Conference on Empirical Methods in Natural Language Processing}.

\bibitem[{Tunstall et~al.(2024)Tunstall, Beeching, Lambert, Rajani, Rasul, Belkada, Huang, Werra, Fourrier, Habib, Sarrazin, Sanseviero, Rush, and Wolf}]{tunstall2024zephyr}
Lewis Tunstall, Edward~Emanuel Beeching, Nathan Lambert, Nazneen Rajani, Kashif Rasul, Younes Belkada, Shengyi Huang, Leandro~Von Werra, Cl{\'e}mentine Fourrier, Nathan Habib, Nathan Sarrazin, Omar Sanseviero, Alexander~M Rush, and Thomas Wolf. 2024.
\newblock \href {https://openreview.net/forum?id=aKkAwZB6JV} {Zephyr: Direct distillation of {LM} alignment}.
\newblock In \emph{Proceedings of the Conference on Language Modeling}.

\bibitem[{Vaswani et~al.(2017)Vaswani, Shazeer, Parmar, Uszkoreit, Jones, Gomez, Kaiser, and Polosukhin}]{NIPS2017_3f5ee243}
Ashish Vaswani, Noam Shazeer, Niki Parmar, Jakob Uszkoreit, Llion Jones, Aidan~N Gomez, {\L}ukasz Kaiser, and Illia Polosukhin. 2017.
\newblock \href {https://proceedings.neurips.cc/paper_files/paper/2017/file/3f5ee243547dee91fbd053c1c4a845aa-Paper.pdf} {Attention is all you need}.
\newblock In \emph{Advances in Neural Information Processing Systems}.

\bibitem[{von Werra et~al.(2020)von Werra, Belkada, Tunstall, Beeching, Thrush, Lambert, Huang, Rasul, and Gallouédec}]{trl}
Leandro von Werra, Younes Belkada, Lewis Tunstall, Edward Beeching, Tristan Thrush, Nathan Lambert, Shengyi Huang, Kashif Rasul, and Quentin Gallouédec. 2020.
\newblock Trl: Transformer reinforcement learning.
\newblock \url{https://github.com/huggingface/trl}.

\bibitem[{Williams(1992)}]{williams1992simple}
R.~J. Williams. 1992.
\newblock \href {https://doi.org/10.1007/BF00992696} {Simple statistical gradient-following algorithms for connectionist reinforcement learning}.
\newblock \emph{Machine Learning}.

\end{thebibliography}

\appendix
\clearpage
\onecolumn

% \clearpage
\section{Horvitz--Thompson Estimation} \label{sec:ht}
When estimating the KL divergence between language models $\plm$ and $\qlm$, we have access not only to samples from $\plm$
but also to the probability assigned by $\plm$ to any string $\str$.
This enables the use of a more informed estimator, which leverages these probabilities during its construction. Notably, this estimator is a specific instance of the \defn{Horvitz--Thompson (HT)} estimator \citep{horvitz-thompson-estimator}, defined as
\begin{savedequation}{htestimatoreq}
    \klht = \sum_{\str \in \sampleset} \frac{\plm(\str)}{\inclusion(\str)} \log \frac{\plm(\str) }{\qlm(\str)} = \sum_{\str \in \sampleset} \frac{\plm(\str)}{\inclusion(\str)} \efunc(\str),\label{eq:htestimator}
\end{savedequation}
where $\sampleset$ is the random variable representing the \emph{set} of all sampled strings. Any sampling design can be specified to generate the elements of $\sampleset$. The \defn{inclusion probability}, denoted by $\inclusion(\str)$, is the probability that a particular string $\str$ is included in $\sampleset$, or equivalently, $\E \left[\mathbbm{1} \{\str \in \sampleset\} \right]$.

\begin{restatable}{proposition}{htbias}\label{proposition:htbias}
$\klht$ is an unbiased estimator of the KL divergence, i.e., $\E\big[\klht\big] = \kl(\plm \mid\mid \qlm)$.
\end{restatable}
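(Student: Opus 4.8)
The plan is to express $\klht$ as a sum over the entire string space $\kleene{\alphabet}$ weighted by sample-membership indicators, push the expectation inside the sum, and cancel the inclusion probabilities. Writing $\mathbbm{1}\{\str \in \sampleset\}$ for the event that string $\str$ is drawn into the random sample set, we have the identity
\[
\klht = \sum_{\str \in \kleene{\alphabet}} \mathbbm{1}\{\str \in \sampleset\}\, \frac{\plm(\str)}{\inclusion(\str)}\, \efunc(\str),
\]
since a string contributes to the sum in \Cref{eq:htestimator} precisely when it belongs to $\sampleset$. Strings with $\plm(\str) = 0$ contribute $0$ (per the convention in the footnote to \Cref{eq:kl}) and are never sampled, so no division by zero occurs. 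Taking expectations and using $\E[\mathbbm{1}\{\str \in \sampleset\}] = \inclusion(\str)$, which is just the definition of the inclusion probability for whatever design produced $\sampleset$, a valid exchange of expectation and summation gives
\[
\E\big[\klht\big] = \sum_{\str \in \kleene{\alphabet}} \inclusion(\str)\, \frac{\plm(\str)}{\inclusion(\str)}\, \efunc(\str) = \sum_{\str \in \kleene{\alphabet}} \plm(\str)\, \efunc(\str) = \kl(\plm \mid\mid \qlm),
\]
which is the claim.

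The only subtle point — and the main obstacle — is justifying the interchange of $\E$ with the countably infinite sum, because $\efunc$ takes both signs, so Tonelli's theorem does not apply to the signed series directly. I would handle this by decomposing $\efunc = \efunc^+ - \efunc^-$ and applying Tonelli to each nonnegative series $\sum_{\str}\mathbbm{1}\{\str \in \sampleset\}\frac{\plm(\str)}{\inclusion(\str)}\efunc^{\pm}(\str)$ separately, which is always legitimate; the interchange for the difference is then valid provided at least one of the two resulting expectations is finite. For the sampling design considered here — $M$ i.i.d.\ draws from $\plm$ with $\sampleset$ the set of distinct strings drawn — one has $\inclusion(\str) = 1-(1-\plm(\str))^M \ge \plm(\str)$, hence $\plm(\str)/\inclusion(\str) \le 1$, so each series is bounded termwise by $\sum_{\str}\plm(\str)\,\efunc^{\pm}(\str)$. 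The elementary bound $\log x \le x-1$ gives $\sum_{\str}\plm(\str)\,\efunc^{-}(\str) \le \sum_{\str}\max(\qlm(\str)-\plm(\str),0) \le 1 < \infty$, and therefore $\sum_{\str}\plm(\str)\,\efunc^{+}(\str) = \kl(\plm\mid\mid\qlm) + \sum_{\str}\plm(\str)\,\efunc^{-}(\str) < \infty$ by the standing assumption that $\kl(\plm\mid\mid\qlm) < \infty$. With both expectations finite, Fubini licenses the swap and the displayed computation goes through.

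I would close with the observation that the argument never uses the particular structure of the design beyond the identity $\E[\mathbbm{1}\{\str \in \sampleset\}] = \inclusion(\str)$ together with the absolute convergence just verified, so the proposition holds for any sampling design satisfying the analogous integrability condition.
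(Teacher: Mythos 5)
Your proof is correct and follows essentially the same route as the paper's: rewrite the sum over $\sampleset$ as a sum over all of $\kleene{\alphabet}$ weighted by $\mathbbm{1}\{\str\in\sampleset\}$, swap expectation and summation, and cancel $\inclusion(\str)$ against $\E[\mathbbm{1}\{\str\in\sampleset\}]$. The only difference is that you carefully justify the interchange via a positive/negative-part decomposition and Fubini, a step the paper's proof passes over by simply invoking linearity of expectation; your added rigor is sound and strengthens rather than changes the argument.
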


\begin{proof} 
The bias of the estimator is as follows:
\begin{subequations}
\begin{align}
&    \E_{\sampleset}\Big[\klht\Big] - \kl(\plm \mid\mid \qlm) \nonumber\\
&= \E_{\sampleset}\Big[\sum_{\str \in \sampleset} \frac{\plm(\str)}{\inclusion(\str)} \log \frac{\plm(\str) }{\qlm(\str)}\Big] - \kl(\plm \mid\mid \qlm) &\justification{definition of $\klht$}\\
    &= \E_{\sampleset}\Big[\sum_{\str \in \kleene{\alphabet}} \frac{\plm(\str)}{\inclusion(\str)} \log \frac{\plm(\str) }{\qlm(\str)} \, \cdot \, \mathbbm{1}\{\str \in \sampleset\}\Big] - \kl(\plm \mid\mid \qlm) \\
    &= \sum_{\str \in \kleene{\alphabet}} \frac{\plm(\str)}{\inclusion(\str)} \log \frac{\plm(\str) }{\qlm(\str)} \, \cdot \, \E_{\sampleset}\Big[\mathbbm{1}\{\str \in \sampleset\}\Big] - \kl(\plm \mid\mid \qlm) &\justification{linearity of expectation}\\
    &= \sum_{\str \in \kleene{\alphabet}} \frac{\plm(\str)}{\cancel{\inclusion(\str)}} \log \frac{\plm(\str) }{\qlm(\str)} \, \cdot \, \cancel{\inclusion(\str)} - \kl(\plm \mid\mid \qlm) &\justification{definition of $\inclusion(\str)$}\\
    &= 0.
\end{align}
\end{subequations}
\end{proof}

Similar to the MC estimator, the HT estimator does not necessarily return a non-negative estimate of the KL. 
In principle, however, we should prefer \Cref{eq:htestimator} to \Cref{eq:klmc} because it exploits more information---namely, the knowledge of $\plm$. Whether the HT estimator yields lower variance than the MC estimator depends on the sampling design used to construct $\sampleset$. In our experiments in \Cref{sec:experiments}, we used the sampling-with-replacement design, where $\inclusion(\str) = 1 - \left(1 - \plm(\str) \right)^M$. Compared to the MC estimator, we observed no significant reduction in variance in our experiments.

\clearpage
\section{Control Variate Monte Carlo Estimation} 
\label{sec:cvbias}

\cvbias*
\begin{proof}
\begin{subequations}
Recall that the control variate Monte Carlo estimator is defined as
\klcvv*
Note that $\{\boldsymbol{Y}^{(m)}\}_{m=1}^M \simIID \plm$. First, we look at the expected value of the estimator:
\begin{align}
&\E\Brackets{ \klcv } = \frac{1}{M} \sum_{m=1}^M \E\Brackets{ f(\yvar^{(m)}) } + \alpha \cdot \underbrace{(\E\Brackets{g(\yvar^{(m)})} - \G)}_{=0} = \E\Brackets{ f }
\end{align}
Therefore, it is unbiased.  
Next, we manipulate the variance
\begin{align}
&\var{ \klcv } = \frac{1}{M} \var{ f + \alpha \cdot (g - \G) } = \frac{1}{M} \var{ f } + \frac{\alpha^2}{M} \var{ g } + \frac{2 \alpha}{M} \cov{ f, g },
\end{align}
\end{subequations}
where the second equality above stems from well-known variance identities.\footnote{Let $X$ and $Y$ be real-valued random variables, and let $a$ be a real scalar. 
Then, the following hold: $\var{X + Y} = \var{X} + \var{Y} + 2 \cov{X,Y}$, $\var{a X} = a^2 \var{X}$, and $\var{X + a} = \var{X}$.}
\end{proof}
\begin{proposition}
\label{proposition:nonnegative-CV}
Provided that the target KL divergence is finite, the estimator $\klcv$ with $\alpha=1$ is nonnegative with probability one. Furthermore, $\alpha=1$ is the only value for which we can guarantee nonnegativity of $\klcv$.
\end{proposition}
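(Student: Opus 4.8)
The plan is to reduce both assertions to a one–dimensional statement about the function $\phi_\alpha\colon(0,\infty)\to\R$ defined by $\phi_\alpha(r)\defeq -\log r + \alpha\,(r-1)$. Writing $r_m\defeq \qlm(\yvar^{(m)})/\plm(\yvar^{(m)})$, each summand of the control–variate estimator in \Cref{eq:cvestimator} (with a general calibration $\alpha$) equals $\phi_\alpha(r_m)$, so $\klcv = \frac1M\sum_{m=1}^M \phi_\alpha(r_m)$. Since $\kl(\plm\mid\mid\qlm)<\infty$ forces $\plm(\str)>0 \Rightarrow \qlm(\str)>0$, every sample $\yvar^{(m)}\simIID\plm$ has $\plm(\yvar^{(m)})>0$ and $\qlm(\yvar^{(m)})>0$ with probability one, so each $r_m$ lies in $(0,\infty)$ almost surely and each $\phi_\alpha(r_m)$ is finite.

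For the first claim, I take $\alpha=1$: then $\phi_1(r)=r-1-\log r\ge 0$ for every $r>0$ by the elementary inequality $\log r\le r-1$; for completeness one checks that $\phi_1'(r)=1-1/r$ changes sign exactly once, at $r=1$, so $r=1$ is the unique global minimiser and $\phi_1(1)=0$. Hence every summand $\phi_1(r_m)$ is nonnegative, and so is their average $\klcv$, with probability one.

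For the second claim, I fix any $\alpha\ne 1$ and produce a concrete pair of language models with finite KL for which $\klcv$ is negative with positive probability. First, $\phi_\alpha(1)=0$ and $\phi_\alpha'(1)=\alpha-1\ne 0$; hence if $\alpha<1$ then $\phi_\alpha$ is strictly decreasing at $r=1$ and $\phi_\alpha(r^\star)<0$ for some $r^\star$ slightly larger than $1$, while if $\alpha>1$ then $\phi_\alpha$ is strictly increasing at $r=1$ and $\phi_\alpha(r^\star)<0$ for some $r^\star$ slightly smaller than $1$; either way there is an $r^\star\in(0,\infty)$ with $\phi_\alpha(r^\star)<0$. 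Now take $\alphabet=\{a,b\}$ and let $\plm,\qlm$ be supported on the two one–symbol strings with $\plm(a)=p$, $\plm(b)=1-p$, $\qlm(a)=q$, $\qlm(b)=1-q$, where $p\in(0,\min\{1,1/r^\star\})$ and $q=r^\star p\in(0,1)$. Both distributions have full support on $\{a,b\}$, so $\kl(\plm\mid\mid\qlm)<\infty$, and with probability $p^M>0$ all $M$ i.i.d.\ samples equal $a$, in which case $\klcv=\phi_\alpha(r^\star)<0$. Thus nonnegativity cannot be guaranteed for any $\alpha\ne 1$, which together with the first part identifies $\alpha=1$ as the unique such value.

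The argument is almost entirely routine; the only part requiring a little care is the second claim, namely (i) choosing the perturbation $r^\star$ of $1$ on the correct side of $1$ according to $\mathrm{sign}(\alpha-1)$, and (ii) verifying that the two–symbol construction is a legitimate pair of language models with finite KL that realises the chosen ratio $r^\star$ at a positively–sampled string. The first claim is just the Gibbs-type inequality $\log r\le r-1$ applied termwise.
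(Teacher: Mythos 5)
Your proof is correct and rests on the same core reduction as the paper's: both arguments boil everything down to the sign of the scalar function $r \mapsto -\log r + \alpha\,(r-1)$ on $(0,\infty)$, where $r$ is the likelihood ratio $\qlm(\yvar^{(m)})/\plm(\yvar^{(m)})$. For the nonnegativity claim you invoke $\log r \le r-1$ directly, whereas the paper computes $\inf_{r>0}\bigl(-\log r + \alpha(r-1)\bigr) = \log\alpha + 1 - \alpha$ via first-order conditions and observes this is zero exactly at $\alpha=1$; these are equivalent, and yours is the more elementary route. Where you genuinely go beyond the paper is the uniqueness claim: the paper stops at the observation that for $\alpha\neq 1$ some individual term \emph{may} be negative (since the pointwise infimum is negative), which strictly speaking shows only that the termwise argument fails, not that the guarantee fails. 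You instead exhibit a concrete witness---choosing $r^\star$ on the correct side of $1$ according to $\operatorname{sign}(\alpha-1)$ and realizing that ratio with a two-string pair of language models of finite KL, so that $\klcv<0$ on an event of probability $p^M>0$. That construction is what the phrase ``we can guarantee'' in the proposition actually demands, so your version closes a small gap the paper leaves informal. One cosmetic remark: your claim that finite KL forces $\qlm(\str)>0$ whenever $\plm(\str)>0$ relies on the usual convention that a term $\plm(\str)\log\frac{\plm(\str)}{0}=+\infty$ makes the divergence infinite; it is worth saying this explicitly since the paper itself flags the support condition in a footnote.
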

\begin{proof}
\begin{subequations}
To prove that $\klcv$ is nonnegative with probability one when $\alpha=1$, we will prove that each term $\log \frac{\plm(\yvar^{(m)})}{\qlm(\yvar^{(m)})} + \alpha \cdot \left(\frac{\qlm(\yvar^{(m)})}{\plm(\yvar^{(m)})} - 1\right)$ in the summation that defines $\klcv$ is nonnegative.

Let $r = \frac{\qlm(\yvar^{(m)})}{\plm(\yvar^{(m)})}$. We identify $\alpha$ for which $\log \frac{1}{r} + \alpha \cdot (r - 1) \ge 0$ holds for all $r > 0$.\footnote{Note that $r > 0$ whenever this term is finite.} Equivalently, we want to find $\alpha$ values for which: 
\begin{align}
\inf_{r > 0} -\log r + \alpha \cdot (r - 1) \ge 0.   \label{eq:min-over-r-step}
\end{align}
Next, we prove that $\alpha=1$ is the only value satisfies the inequality \cref{eq:min-over-r-step}.  To see why, consider the first-order optimality conditions for the minimization over $r$. These conditions are necessary and sufficient as $-\log r + \alpha \cdot (r - 1)$ is convex for all $\alpha$ over the region $r > 0$.
Solving for $r$ such that the derivative is zero gives us
\begin{align}
0 &= \frac{\partial}{\partial r} \left[ -\log r + \alpha \cdot (r - 1) \right] \\ 
\iff 0 &= -1/r + \alpha \\
\iff r &= 1/\alpha.
\end{align}
Plugging that value allows us to simplify away the minimization:
\begin{align}
0 &\le \inf_{r > 0} -\log r + \alpha \cdot (r - 1) \\
&= -\log(1/\alpha) + \alpha \cdot (1/\alpha - 1) \\
&= \log(\alpha) + 1 - \alpha.
\end{align}
For each term in $\klcv$ to be nonnegative, we need $\log(\alpha) + 1 - \alpha \ge 0$.
However, the function $\log(\alpha) + 1 - \alpha$ is strictly concave and nonpositive for all $\alpha > 0$, attaining zero only at $\alpha = 1$.
Thus, when $\alpha=1$ every term in $\klcv$ is nonnegative.  When $\alpha \ne 1$, some terms may be nonnegative, leading to the possibility that $\klcv$ may be negative.

\end{subequations}
\end{proof}

\begin{restatable}{proposition}{cvcov}\label{proposition:cvcov}
Given the random variable $\yvar \sim \plm$, let $\efunc(\yvar) = \log \frac{\plm(\yvar)}{\qlm(\yvar)}$ and $g(\yvar) = \frac{\qlm(\yvar)}{\plm(\yvar)}$. We prove $\cov{\efunc, g}$ is zero if and only if $\plm = \qlm$.
\end{restatable}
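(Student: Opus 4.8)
The plan is to compute $\cov{\efunc, g}$ in closed form and recognize it as the negative of a symmetrized KL divergence, whose vanishing is then characterized by Gibbs' inequality. Concretely, I would write $\cov{\efunc, g} = \E[\efunc(\yvar)\, g(\yvar)] - \E[\efunc(\yvar)]\,\E[g(\yvar)]$ and evaluate the three expectations under $\yvar \sim \plm$, carrying out the steps in that order.

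First, $\E[g(\yvar)] = \sum_{\str} \plm(\str)\tfrac{\qlm(\str)}{\plm(\str)} = \sum_{\str}\qlm(\str) = 1$, exactly as in the Remarks of \Cref{sec:klcv}. Second, $\E[\efunc(\yvar)] = \kl(\plm \mid\mid \qlm)$ by definition of $\efunc$ and of the KL divergence. Third — and this is the one line doing the real work — the factor $\plm(\str)$ supplied by the expectation cancels against the denominator of $g(\str) = \tfrac{\qlm(\str)}{\plm(\str)}$, so that $\E[\efunc(\yvar)g(\yvar)] = \sum_{\str}\qlm(\str)\log\tfrac{\plm(\str)}{\qlm(\str)} = -\kl(\qlm \mid\mid \plm)$. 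Combining the three yields $\cov{\efunc, g} = -\kl(\qlm \mid\mid \plm) - \kl(\plm \mid\mid \qlm)$, i.e., the negative of the Jeffreys divergence between $\plm$ and $\qlm$. Since each KL term is nonnegative (Gibbs' inequality) and $\kl(\plm \mid\mid \qlm) = 0$ already forces $\plm = \qlm$, the sum is zero if and only if $\plm = \qlm$; the converse direction is immediate since $\plm=\qlm$ makes both terms vanish. This establishes the claimed equivalence.

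The only place requiring care — and the main obstacle — is the bookkeeping of supports and finiteness, so I would state the relevant assumptions up front. We assume throughout that $\kl(\plm \mid\mid \qlm) < \infty$, which requires $\plm$ absolutely continuous with respect to $\qlm$; for $g$ and the identity $\E[g(\yvar)] = 1$ to be valid we additionally need $\qlm$ absolutely continuous with respect to $\plm$, precisely the condition flagged in the footnote of \Cref{sec:klcv}. Under both conditions all three expectations are finite and the termwise cancellation above is legitimate. If one wishes to avoid assuming $\qlm \ll \plm$, the degenerate case $\qlm \not\ll \plm$ can be handled separately: there $\plm \ne \qlm$ while $\kl(\qlm \mid\mid \plm) = \infty$, so $\cov{\efunc, g}$ is certainly not zero, and the stated equivalence still holds. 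I expect the algebra itself to be entirely routine once these support hypotheses are in place.
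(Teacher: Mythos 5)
Your proposal is correct and follows essentially the same route as the paper: both compute $\cov{\efunc, g}$ explicitly, use $\E[g(\yvar)]=1$, identify $\E[\efunc(\yvar) g(\yvar)] = -\kl(\qlm \mid\mid \plm)$ via the cancellation of $\plm(\str)$, arrive at $\cov{\efunc,g} = -\kl(\qlm \mid\mid \plm) - \kl(\plm \mid\mid \qlm)$, and conclude by nonnegativity of each KL term. Your added care about the mutual absolute-continuity assumptions is a welcome refinement the paper leaves implicit, but it does not change the argument.
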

\begin{proof}
\begin{subequations}
    \begin{align}
        \cov{\efunc, g} &= \E\left[\left(\log \frac{\plm(\yvar)}{\qlm(\yvar)} - \E \left[\log \frac{\plm(\yvar)}{\qlm(\yvar)} \right] \right) \left(\frac{\qlm(\yvar)}{\plm(\yvar)} - \underbrace{\E \left[\frac{\qlm(\yvar)}{\plm(\yvar)} \right]}_{=1} \right) \right] \\
        &= \E\left[\frac{\qlm(\yvar)}{\plm(\yvar)} \log \frac{\plm(\yvar)}{\qlm(\yvar)}  - \cancel{\log \frac{\plm(\yvar)}{\qlm(\yvar)}} - \E \left[\log \frac{\plm(\yvar)}{\qlm(\yvar)} \right] \frac{\qlm(\yvar)}{\plm(\yvar)} + \cancel{\E \left[\log \frac{\plm(\yvar)}{\qlm(\yvar)} \right]} \right] \\
        &= \E\left[\frac{\qlm(\yvar)}{\plm(\yvar)} \log \frac{\plm(\yvar)}{\qlm(\yvar)} \right] - \E\left[\log \frac{\plm(\yvar)}{\qlm(\yvar)} \right]\underbrace{\E\left[\log \frac{\qlm(\yvar)}{\plm(\yvar)} \right]}_{=1} \\
        &= \E\left[-\frac{\qlm(\yvar)}{\plm(\yvar)} \log \frac{\qlm(\yvar)}{\plm(\yvar)} \right] - \E\left[\log \frac{\plm(\yvar)}{\qlm(\yvar)} \right] \\
        &= -\kl(\qlm \mid\mid \plm) - \kl(\plm \mid\mid \qlm).
    \end{align}
    Since each KL divergence is non-negative and equals zero if and only if the two distributions coincide almost everywhere, we have,
    \begin{align}
        \cov{\efunc, g} = 0 \iff -\kl(\qlm \mid\mid \plm) - \kl(\plm \mid\mid \qlm) = 0 \iff \plm = \qlm.
    \end{align}
\end{subequations}
\end{proof}

\begin{proposition}
\label{prop:requirements-on-alpha-for-reduction}
\begin{align}
\var{\klcv^{(\alpha)}} \le \var{\klmc\vphantom{^{(\alpha)}}}
\Longleftrightarrow
\alpha \in [\min(0, 2 \alpha^*), \max(0, 2 \alpha^*)]    
\label{eq:safe-alpha-interval}
\end{align}
\end{proposition}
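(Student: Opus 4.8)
The plan is to reduce the claimed equivalence to an elementary quadratic inequality in $\alpha$, using the exact variance formula already established in \Cref{proposition:cvbias}.

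First, I would invoke \Cref{proposition:cvbias} to write $\var{\klcv^{(\alpha)}} = \tfrac{1}{M}\bigl(\var{\efunc} + \alpha^2\var{g} + 2\alpha\cov{\efunc,g}\bigr)$ and recall that $\var{\klmc} = \tfrac{1}{M}\var{\efunc}$. Subtracting and clearing the common positive factor $\tfrac{1}{M}$, the desired inequality $\var{\klcv^{(\alpha)}} \le \var{\klmc}$ is seen to be equivalent to $\alpha^2\var{g} + 2\alpha\cov{\efunc,g} \le 0$.

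Next, under the standing assumption of \Cref{sec:klcv} that $0 < \var{g} < \infty$, I would divide through by $\var{g}$ and substitute the definition $\alpha^{\ast} = -\cov{\efunc,g}/\var{g}$, turning the condition into $\alpha^2 - 2\alpha^{\ast}\alpha \le 0$, i.e.\ $\alpha(\alpha - 2\alpha^{\ast}) \le 0$. This is a quadratic in $\alpha$ with positive leading coefficient whose roots are exactly $0$ and $2\alpha^{\ast}$; hence it is nonpositive precisely on the closed interval between its roots, which, written so as to be valid irrespective of the sign of $\alpha^{\ast}$, is $[\min(0,2\alpha^{\ast}),\max(0,2\alpha^{\ast})]$. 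This establishes \Cref{eq:safe-alpha-interval}.

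There is no genuinely hard step here; the only points requiring care are bookkeeping ones: keeping the finiteness and positivity hypotheses explicit ($\var{\efunc} < \infty$ so that $\var{\klmc}$ is well defined, and $\var{g} > 0$ so that $\alpha^{\ast}$ is defined and the division is legitimate), and phrasing the conclusion with $\min$ and $\max$ rather than tacitly assuming $\alpha^{\ast} > 0$. It is worth remarking that the degenerate endpoint cases remain meaningful: $\alpha = 0$ always lies in the interval (it simply recovers $\klmc$), and when $\alpha^{\ast} = 0$---equivalently $\cov{\efunc,g} = 0$, which by \Cref{proposition:cvcov} occurs iff $\plm = \qlm$---the interval collapses to $\{0\}$, reflecting that the control variate can then offer no strict improvement.
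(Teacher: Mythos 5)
Your proof is correct and follows essentially the same route as the paper's: both reduce the claim via \Cref{proposition:cvbias} to the quadratic inequality $\alpha^2\var{g} + 2\alpha\cov{\efunc,g} \le 0$, identify its roots as $0$ and $2\alpha^*$, and conclude by convexity that it is nonpositive exactly on the interval between them. Your explicit remarks on the positivity of $\var{g}$ and the degenerate case $\alpha^*=0$ are a welcome, if minor, addition.
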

\begin{proof}
We first establish conditions of $\alpha$ for which the variance of $\var{\klcv}$ does not increase that of $\var{\klmc}$. In other words, we seek conditions on $\alpha$ such that the following inequality holds:
\begin{align}
\var{\klcv^{(\alpha)}} &\le \var{\klmc\vphantom{^{(\alpha)}}} \\
\frac{\var{ \efunc } + \alpha^2 \var{ g } + 2 \alpha \cov{ \efunc, g }}{M} 
&\le \frac{\var{ \efunc }}{M} 
\\
\var{ \efunc } + \alpha^2 \var{ g } + 2 \alpha \cov{ \efunc, g }
&\le \var{ \efunc }
\\
\alpha^2 \var{ g } + 2 \alpha \cov{ \efunc, g }
&\le 0.
\end{align}
Observe that the function on the left-hand side is quadratic in $\alpha$, and, moreover, it is \emph{convex} because $\var{ g } \ge 0$.  The minimum of this quadratic is $\alpha^* = -\frac{\cov{ \efunc, g }}{\var{g}}$.
We can use the quadratic formula to identify the two values of $\alpha$ where it equals $0$, i.e., $\alpha \in \{ 0, 2 \alpha^* \}$.
Now, because the quadratic is convex, we have that it is $\le 0$ for values of $\alpha \in [\min(0, 2 \alpha^*), \max(0, 2 \alpha^*)]$.  Note that $\alpha^*$ can be positive or negative; hence the $\min$ and $\max$.  
\end{proof}

\begin{remark}
\label{cor:alpha-one-reduction}
When does \citeposs{klblog} suboptimal choice of $\alpha=1$ not hurt estimator variance? To answer this question, we substitute $\alpha=1$ into the variance of $\klcv$ given in \Cref{eq:cv-variance}, we have
\begin{equation}
\var{\klcv} = \frac{\var{ \efunc } + \var{ g } + 2 \cov{ \efunc, g }}{M}. 
\end{equation}
Therefore, for $\var{\klcv} \leq \var{\klmc}$, we must have
\begin{subequations}
\begin{align}
\frac{\var{ \efunc } + \var{ g } + 2 \cov{ \efunc, g }}{M} 
&\le \frac{\var{ \efunc }}{M} 
\\
\var{ \efunc } +  \var{ g } + 2  \cov{ \efunc, g }
&\le \var{ \efunc }
\\
\var{ g } + 2 \cov{ \efunc, g }
&\le 0 \\
-\frac{\cov{ \efunc, g }}{\var{ g }}
&\ge \frac{1}{2} \\
\alpha^* &\ge \frac{1}{2}.
\end{align}
\end{subequations}
Therefore, choosing $\alpha=1$ does not hurt the variance if $\alpha^* \ge \frac{1}{2}$, but does otherwise.
\end{remark}

\clearpage
\section{Rao--Blackwellized Estimator}
\begin{restatable}{lemma}{rb}\label{lemma:rb}
With regard to the estimator $\rbestimatorN$, the following two properties hold for all $N > 0$
\begin{enumerate}
\item $\E[\rbestimatorN] = \E[\klmcN]$ \hfill \justification{unbiasedness}
\item  $\var{\rbestimatorN} \leq \var{\klmcN}$  \hfill \justification{variance reduction}
\end{enumerate}
\end{restatable}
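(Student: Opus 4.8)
This part is routine and does not require anything beyond what is already in the proof sketch of \Cref{theorem:rbinf}: each step-wise Rao--Blackwellization is unbiased, since $\E[\rbestimatorn] = \E\big[\E[\klmcn \mid \tvarn]\big] = \E[\klmcn]$ by the tower property. Summing over $n = 1,\dots,N$ and using linearity of expectation gives $\E[\rbestimatorN] = \sum_{n=1}^N \E[\rbestimatorn] = \sum_{n=1}^N \E[\klmcn] = \E[\klmcN]$.

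\textbf{Variance --- reduction and the key observation.} Because the $M$ samples are i.i.d., both $\klmcN$ and $\rbestimatorN$ are empirical means of i.i.d.\ per-sample quantities, so $\var{\klmcN} = \tfrac1M \var{H_N}$ and $\var{\rbestimatorN} = \tfrac1M \var{\tilde H_N}$, where, for a single draw $\yvar \sim \plm$ with $\eos$-padding $\yvarb$, we set $H_N \defeq \sum_{n=1}^N h_n$ and $\tilde H_N \defeq \sum_{n=1}^N \tilde h_n$ with $h_n \defeq \log\tfrac{\pprob(\Yb_n \mid \yvarb_{<n})}{\qpprob(\Yb_n \mid \yvarb_{<n})}$ and $\tilde h_n \defeq \kl(\pprob(\cdot \mid \yvarb_{<n}) \| \qpprob(\cdot \mid \yvarb_{<n}))$. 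It therefore suffices to prove $\var{\tilde H_N} \le \var{H_N}$. Writing $\mathcal{F}_n$ for the $\sigma$-algebra generated by the prefix $\yvarb_{<n}$, the observation that drives everything is that, since $\Yb_n \mid \yvarb_{<n} \sim \pprob(\cdot \mid \yvarb_{<n})$, we have $\tilde h_n = \E[h_n \mid \mathcal{F}_n]$. Hence $D_n \defeq h_n - \tilde h_n$ satisfies $\E[D_n \mid \mathcal{F}_n] = 0$, so $H_N - \tilde H_N = \sum_{n=1}^N D_n$ is a martingale with $L^2$-orthogonal increments; equivalently, $\tilde H_N$ is precisely the predictable increasing compensator in the Doob decomposition of the submartingale $H_N$ (a submartingale because its increments have conditional mean $\tilde h_n \ge 0$).

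\textbf{Variance --- the comparison.} Expanding $H_N = \tilde H_N + \sum_n D_n$ gives $\var{H_N} - \var{\tilde H_N} = \sum_{n=1}^N \E[D_n^2] + 2\cov{\tilde H_N,\, \sum_n D_n}$; the first sum is nonnegative. In the cross term $\sum_{n',n}\cov{\tilde h_{n'}, D_n}$, every pair with $n' \le n$ vanishes because $\tilde h_{n'}$ is $\mathcal{F}_{n'}\subseteq\mathcal{F}_n$-measurable and $\E[D_n \mid \mathcal{F}_n] = 0$. For $n' > n$ one reduces $\cov{\tilde h_{n'}, D_n}$ to $\E\big[\cov{h_n,\, \E[h_{n'}\mid\mathcal{F}_{n+1}] \mid \mathcal{F}_n}\big]$, and summing over $n' > n$, using the chain-rule identity $\sum_{n=1}^N h_n = \log\tfrac{\pprob(\yvarb_{<N+1})}{\qpprob(\yvarb_{<N+1})}$ to recognize $P_n \defeq \sum_{n'>n}\E[h_{n'}\mid\mathcal{F}_{n+1}] = \E[H_N - H_n \mid \mathcal{F}_{n+1}]$, one arrives at
\begin{align}
\var{H_N} - \var{\tilde H_N} = \sum_{n=1}^N \E\big[\var{h_n \mid \mathcal{F}_n} + 2\cov{h_n, P_n \mid \mathcal{F}_n}\big].
\end{align}
The point is now to use the structure of $P_n$: each $P_n$ is itself a conditional KL divergence --- that of the length-$(N-n)$ continuations of $\plm$ versus $\qlm$ given the prefix $\yvarb_{<n+1}$ --- so $P_n \ge 0$, and, crucially, $P_n \equiv 0$ on the event that $\yvarb$ has already emitted $\eos$ by position $n$. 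Combined with the explicit form $h_n = \log\tfrac{\pprob(\cdot\mid\yvarb_{<n})}{\qpprob(\cdot\mid\yvarb_{<n})}$ and the telescoping relation $h_n + P_n = \E[H_N \mid \mathcal{F}_{n+1}] - H_{n-1}$, this is what one leverages to show the right-hand side is $\ge 0$.

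\textbf{Main obstacle.} The delicate step is exactly the last one. The cross-covariance $\cov{\tilde H_N, \sum_n D_n}$ is genuinely negative in general, so it cannot simply be discarded, and the bound $\sum_n \E[\var{h_n\mid\mathcal{F}_n} + 2\cov{h_n, P_n\mid\mathcal{F}_n}] \ge 0$ is \emph{not} a formal consequence of Rao--Blackwell or of the Doob decomposition alone --- the analogous inequality fails for arbitrary adapted summands. It must be extracted from the concrete likelihood-ratio structure of the two language models together with the prefix-probability telescoping for $H_N$; this is precisely the sense in which, as the surrounding text notes, the claim is ``more than a straightforward application of the Rao--Blackwell theorem.'' (An equivalent packaging is induction on $N$: writing $\var{\klmcN} - \var{\rbestimatorN}$ as the $N{-}1$ truncated difference plus the single-step difference $\var{\klmcn} - \var{\rbestimatorn}$ at $n = N$ plus a covariance term, the first two pieces are handled by the inductive hypothesis and single-step Rao--Blackwell, and the covariance term again reduces to the same structural inequality.)
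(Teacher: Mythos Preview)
Your approach---decompose $H_N=\tilde H_N+\sum_n D_n$ with martingale-difference increments $D_n$ and track the surviving cross terms---is genuinely different from the paper's. The paper instead introduces auxiliary i.i.d.\ copies $\yvarb^{1},\dots,\yvarb^{N}$, rewrites $\rbestimatorN(\yvarb')=\E\bigl[\sum_n\klmcn(\yvarb^n)\,\big|\,\yvarb^n_{<n}=\yvarb'_{<n}\ \forall n\bigr]$, applies Jensen to push the square inside this conditional expectation, integrates out $\yvarb'$ via the tower property, and then, in the step tagged ``$\{\yvarb^n\}_{n=1}^N$ are i.i.d.,'' collapses back to a single sample $\yvarb$.

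The gap you honestly label the ``main obstacle'' is not merely delicate; it is insurmountable, because the inequality you are trying to finish is \emph{false}. Take $\alphabet=\{a,b\}$ with both $\plm$ and $\qlm$ supported on length-$2$ strings, $\pprob(a\mid\varepsilon)=0.9$, $\qpprob(a\mid\varepsilon)=0.1$, $\pprob(a\mid a)=\pprob(a\mid b)=1$, $\qpprob(a\mid a)=0.81$, $\qpprob(a\mid b)=0.01$. Then $\plm(\str)/\qlm(\str)=100/9$ on both support points $aa,ba$, so $\klmc$ is constant and $\var{\klmc}=0$; yet $\tilde h_2(a)=-\log 0.81\neq-\log 0.01=\tilde h_2(b)$, so $\var{\rbestimator}>0$. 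Plugging into your own formula with $N=2$: $\var{h_2\mid\mathcal F_2}=0$ (the conditionals of $\plm$ are point masses), $P_2=0$, and the sole surviving term is $\var{h_1}+2\cov{h_1,\tilde h_2}=\var{h_1+\tilde h_2}-\var{\tilde h_2}=0-\var{\tilde h_2}<0$. So there is no ``likelihood-ratio structure'' that rescues the step you left open; the cross-covariance really can be that negative. The paper's argument fails at the matching place: the collapse $\E_{\yvarb^1,\dots,\yvarb^N}\bigl[(\sum_n\klmcn(\yvarb^n))^2\bigr]=\E_{\yvarb}\bigl[(\sum_n\klmcn(\yvarb))^2\bigr]$ is an equality only when all cross-covariances $\cov{h_n,h_{n'}}$ vanish, and the example above shows they need not.
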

\begin{proof}
In the proof below, we consider the case where $M=1$. The proof easily generalizes to $M > 1$ using the i.i.d. assumption. Additionally, we write $\rbestimatorN(\yvarb)$, rather than suppressing the argument to the estimator as we do in the main text. We begin by proving the unbiasedness property of the estimator.
\begin{subequations}
\begin{align}
\E[\rbestimatorN] 
&=  \E_{\yvarb'} \Big[\sum_{n=1}^N \E_{\yvarb} \big[ \klmcn(\yvarb) \mid \yvarb_{<n} = \yvarb'_{<n} \big] \Big]  &\justification{definition of $\rbestimatorN$} \\
&=  \sum_{n=1}^N \E_{\yvarb'} \Big[ \E_{\yvarb} \big[ \klmcn(\yvarb) \mid \yvarb_{<n} = \yvarb'_{<n} \big] \Big]   &\justification{linearity of expectation}\\
&=  \sum_{n=1}^N \E_{\yvarb} \left[ \klmcn(\yvarb) \right]  &\justification{law of total expectation}\\
&=   \E_{\yvarb} \left[ \sum_{n=1}^N \klmcn(\yvarb) \right]  &\justification{linearity of expectation}\\
&= \E[\klmcN] &\justification{definition of $\klmcN$}
\end{align}
\end{subequations}
Next, we prove the variance-reduction property:
\begin{subequations}
\begin{align}
&\var{\rbestimatorN} \nonumber \\
&= \E_{\yvarb'} \left[\left(\sum_{n=1}^N \E_{\yvarb} \left[\klmcn(\yvarb) \,\middle|\, \yvarb_{<n} = \yvarb'_{<n}\right] \right)^2\right] - \E[\klmcN]^2 \\ &\substack{\justification{defintion of $\rbestimatorN$ and unbiasedness}} \nonumber\\
& = \E_{\yvarb'} \left[\left(\sum_{n=1}^N \E_{\yvarb^n} \left[\klmcn(\yvarb^n) \,\middle|\, \yvarb^n_{<n} = \yvarb'_{<n}\right] \right)^2\right] - \E[\rbestimatorN]^2
\\&\substack{\justification{each $\yvarb^n$ is distributed independently and identically to $\yvarb$}}\nonumber \\
&= \E_{\yvarb'} \left[ \left(\E_{\yvarb^1} \left[\cdots \E_{\yvarb^N} \left[ \sum_{n=1}^N \klmcn(\yvarb^n) \,\middle|\, \yvarb^N_{<N} = \yvarb'_{<N}\right] \cdots \,\middle|\, \yvarb^1_{<1} = \yvarb'_{<1}\right] \right)^2\right] + \E[\klmcN]^2 \\ &\substack{\justification{linearity of expectation}} \nonumber\\
&\leq \E_{\yvarb'} \left[\E_{\yvarb^1} \left[\cdots \E_{\yvarb^N}\left[\left(\sum_{n=1}^N \klmcn(\yvarb^n) \right)^2 \,\middle|\, \yvarb^N_{<N} = \yvarb'_{<N}\right] \cdots \,\middle|\, \yvarb^1_{<1} = \yvarb'_{<1}\right]\right] - \E[\klmcN]^2 \\ &\substack{\justification{Jensen's inequality}} \nonumber\\
&= \E_{\yvarb^1} \left[\cdots \E_{\yvarb^N}\left[\left(\sum_{n=1}^N \klmcn(\yvarb^n) \right)^2 \right] \right] - \E[\klmcN]^2 \\&\substack{\justification{law of total expectation}} \nonumber\\
&= \E_{\yvarb} \left[\left(\sum_{n=1}^N \klmcn(\yvarb) \right)^2\right] - \E[\klmcN]^2 \\&\substack{\justification{$\{\yvarb^n\}_{n=1}^N$ are i.i.d.}}\nonumber \\
&= \E_{\yvarb} \left[\left(\klmcN(\yvarb) \right)^2\right] - \E[\klmcN]^2 \\&\substack{\justification{definition of $\klmcN$}} \nonumber\\
&= \var{\klmcN} \\
&\substack{\justification{definition of variance}} \nonumber
\end{align}
\end{subequations}
\end{proof}
\begin{restatable}{lemma}{ntoinf}\label{lemma:ntoinf}
\begin{align}
\sum_{n=1}^\infty \klmcn = \klmc
\end{align}
\end{restatable}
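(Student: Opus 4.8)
The plan is to prove this as a pointwise algebraic identity and then average over samples. Since $\plm$ is a distribution over $\kleene{\alphabet}$, each sampled string $\yvar^{(m)}$ is a finite string with probability one, so it suffices to verify the identity for a fixed finite string and pass it through the average. The workhorse is the chain-rule factorization of a language model, \Cref{eq:lm}, combined with the extended definition of the conditional prefix probabilities on $\eos$-padded strings.

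First I would fix $\str = y_1\cdots y_N$ with $\eos$-padded version $\yvarb = y_1\cdots y_N\eos$ and take logarithms in \Cref{eq:lm}, obtaining $\log\plm(\str) = \sum_{n=1}^{N+1}\log\pprob(\Yb_n\mid\yvarb_{<n})$, where the terms with $n\le N$ are the usual next-symbol log-probabilities $\log\pprob(y_n\mid\str_{<n})$ and the $n=N+1$ term is $\log\pprob(\eos\mid\str)$. Next I would observe that for $n\ge N+2$ the prefix $\yvarb_{<n}$ already contains a padding symbol, so by the extended definition $\pprob(\Yb_n\mid\yvarb_{<n}) = \mathbbm{1}\{\Yb_n=\eos\} = 1$ (every padding position equals $\eos$), hence those log terms vanish; this lets me extend the finite sum to $\sum_{n=1}^\infty$ without changing its value. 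The identity holds verbatim with $\qlm,\qpprob$ in place of $\plm,\pprob$, so subtracting the two gives $\efunc(\str) = \sum_{n=1}^\infty \log\frac{\pprob(\Yb_n\mid\yvarb_{<n})}{\qpprob(\Yb_n\mid\yvarb_{<n})}$, a series with only finitely many nonzero terms.

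Finally I would apply this identity to each $\yvar^{(m)}$, divide by $M$, and interchange the finite outer sum over $m$ with the inner sum over $n$ --- legitimate because for each realization the $n$-sum is finite --- to recognize the right-hand side as $\sum_{n=1}^\infty \klmcn$, which equals $\klmc$ by \Cref{eq:klmc}.

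I do not expect a genuine obstacle here: the only points needing care are bookkeeping ones --- correctly locating the $\pprob(\eos\mid\str)$ factor of the chain rule at position $N+1$ of $\yvarb$, noting that the telescoping product in \Cref{eq:lm} collapses to $\plm(\str)$, and checking that padding positions contribute zero log-probability under the extended definition. The sum interchange is immediate since the series is finite for every sample, and the degenerate case $\plm(\str)=0$ occurs with probability zero under $\plm$ and can simply be excluded.
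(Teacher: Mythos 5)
Your proposal is correct and follows essentially the same route as the paper's proof: both rest on the chain-rule factorization of \Cref{eq:lm} together with the $\eos$-padding convention, under which every position beyond the end of the sampled string contributes a ratio of $1$ (hence a vanishing log term), so the infinite sum collapses to a finite one equal to $\log\frac{\plm(\yvar^{(m)})}{\qlm(\yvar^{(m)})}$ for each sample. The only cosmetic difference is that you work directly with sums of logarithms while the paper phrases the same manipulation as the logarithm of a limit of partial products.
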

\begin{proof}
\begin{subequations}
\begin{align}
\klmc &= \frac{1}{M}\sum_{m=1}^M \log \frac{\plm(\yvar^{(m)})}{\qlm(\yvar^{(m)})} \\
&= \frac{1}{M}\sum_{m=1}^M \log \frac{\pprob(\eos \mid \yvar^{(m)})}{\qpprob(\eos \mid \yvar^{(m)})} \prod_{n=1}^{|\yvar^{(m)}|} \log \frac{\pprob(Y^{(m)}_n \mid \yvar^{(m)}_{<n})}{\qpprob(Y^{(m)}_n \mid \yvar^{(m)}_{<n})} \label{eq:lmtolp1}\\
&= \frac{1}{M}\sum_{m=1}^M \log \prod_{n=1}^{\infty}  \frac{\pprob(\Yb^{(m)}_n \mid \yvarb^{(m)}_{<n})}{\qpprob(\Yb^{(m)}_n \mid \yvarb^{(m)}_{<n})} \label{eq:lmtolp2}\\
&= \frac{1}{M}\sum_{m=1}^M \log \lim_{N \rightarrow \infty}\prod_{n=1}^{N}  \frac{\pprob(\Yb^{(m)}_n \mid \yvarb^{(m)}_{<n})}{\qpprob(\Yb^{(m)}_n \mid \yvarb^{(m)}_{<n})} \\ 
&= \lim_{N \rightarrow \infty} \frac{1}{M}\sum_{m=1}^M \log \prod_{n=1}^{N}  \frac{\pprob(\Yb^{(m)}_n \mid \yvarb^{(m)}_{<n})}{\qpprob(\Yb^{(m)}_n \mid \yvarb^{(m)}_{<n})} \\
&= \lim_{N \rightarrow \infty} \frac{1}{M}\sum_{m=1}^M \log  \frac{\pprob(\yvarb^{(m)}_{\leq N})}{\qpprob(\yvarb^{(m)}_{\leq N})} \\
&= \lim_{N \rightarrow \infty} \sum_{n=1}^N \klmcn
\end{align}
\end{subequations}
Note that going from \Cref{eq:lmtolp1} to \Cref{eq:lmtolp2}, we use the padding construction given in \Cref{sec:rb-main}.
\end{proof}

\rbinf*
\begin{proof}
\label{proof:rbinf}
In this proof, we consider the special case of $M = 1$. The proof easily generalizes to $M > 1$ using the i.i.d. assumption. Additionally, we write $\rbestimator(\yvarb)$, rather than suppressing the argument to the estimator as we do in the main text. We begin with proving the unbiasedness of the estimator, using \Cref{lemma:ntoinf} and \Cref{lemma:rb}.
\begin{subequations}
    \begin{align}
        \E \big[\rbestimator(\yvarb) \big] &=  \E_{\yvarb'} \Big[\lim_{N\rightarrow \infty}\sum_{n=1}^N \E_{\yvarb} \big[ \klmcn(\yvarb) \mid \yvarb_{<n} = \yvarb'_{<n} \big] \Big]   & \\
        &=\lim_{N\rightarrow \infty} \sum_{n=1}^{N}\E_{\yvarb'} \Big[ \E_{\yvarb} \big[ \klmcn(\yvarb) \mid \yvarb_{<n} = \yvarb'_{<n} \big] \Big]  \\&\substack{\justification{Tonelli's Theorem}} \nonumber \\
        &=  \lim_{N\rightarrow \infty} \sum_{n=1}^{N}\E_{\yvarb} \left[\klmcn(\yvarb) \right] \\ &\substack{\justification{\Cref{lemma:rb}}} \nonumber \\
        &= \E_{\yvarb} \left[\lim_{N\rightarrow \infty} \sum_{n=1}^{N}\klmcn(\yvarb) \right] \\ &\substack{\justification{Fubini's Theorem, $\E[ \sum_{n=1}^{\infty} |\klmcn(\yvarb)| ] < \infty$}} \nonumber \\
        &= \E_{\yvarb} \left[\klmc(\yvarb) \right] \\ &\substack{\justification{\Cref{lemma:ntoinf}}} \nonumber \\
        &= \kl(\plm \mid\mid \qlm). \\ &\substack{\justification{unbiasedness of $\klmc$}} \nonumber
    \end{align}
\end{subequations}
Finally, we prove the variance-reduction property.
\begin{subequations}
\begin{align}
  \var{\rbestimator} &= \E_{\yvarb'} \left[ \left(\lim_{N\rightarrow \infty}\sum_{n=1}^N \E_{\yvarb}\big[ \klmcn(\yvarb) \mid \yvarb_{<n} = \yvarb'_{< n} \big] - \kl(\plm \mid\mid \qlm) \right)^2 \right] \\ & \substack{\justification{Definition of $\rbestimator(\strrv)$}} \nonumber \\ 
  &= \lim_{N\rightarrow \infty} \E_{\yvarb'} \left[ \left(\sum_{n=1}^N \E_{\yvarb}\big[ \klmcn(\yvarb) \mid \yvarb_{<n} = \yvarb'_{< n} \big] - \kl(\plm \mid\mid \qlm) \right)^2 \right] \\ & \substack{\justification{dominated convergence theorem, $\var{\rbestimator} < \infty$}} \nonumber \\
  &\leq \lim_{N\rightarrow \infty} \E_{\yvarb}\left[\left(\sum_{n=1}^N \klmcn(\yvarb) - \kl(\plm, \qlm) \right)^2 \right] \\ &\substack{\justification{\Cref{lemma:rb}, variance reduction}} \nonumber \\
  &= \E_{\yvarb}\left[\lim_{N\rightarrow \infty} \left(\sum_{n=1}^N \klmcn(\yvarb) - \kl(\plm, \qlm) \right)^2 \right] \\ &\substack{\justification{dominated convergence theorem, $\var{\klmc} < \infty$}} \nonumber \\
  &=\E_{\yvarb}\left[\left( \klmc(\yvarb) - \kl(\plm \mid\mid \qlm) \right)^2 \right] \\ &\substack{\justification{\Cref{lemma:ntoinf}}} \nonumber \\
  &=\var{\klmc}.
\end{align}
\end{subequations}
\end{proof}

\clearpage
\section{Rao--Blackwellized Estimator of the Gradient} \label{sec:rbgrad}
\rbgrad*
\begin{proof}
\begin{subequations}
    \begin{align}
        &\nablat \kl(\plm \mid\mid \qlm)  \nonumber \\
        &= \sum_{\str \in \Sigma^*}  \kl\mleft(\prefixprob(\cdot \mid \str) \mid\mid \qpprob(\cdot \mid \str)\mright) \nablat \prefixprob(\str) + \prefixprob(\str) \nablat \kl\mleft(\prefixprob(\cdot \mid \str) \mid\mid \qpprob(\cdot \mid \str)\mright) \\
        &=\sum_{\str \in \Sigma^*}  \kl\mleft(\prefixprob(\cdot \mid \str) \mid\mid \qpprob(\cdot \mid \str)\mright) \nablat \prefixprob(\str) + \prefixprob(\str) \sum_{\yb \in \alphabar} \nablat \prefixprob(\yb \mid \str) \log \frac{\prefixprob(\yb \mid \str)}{\qpprob(\yb \mid \str)} \\
         &=\sum_{\str \in \Sigma^*}  \kl\mleft(\prefixprob(\cdot \mid \str) \mid\mid \qpprob(\cdot \mid \str)\mright) \nablat \prefixprob(\str) + \prefixprob(\str) \sum_{\yb \in \alphabar}  \log \frac{\prefixprob(\yb \mid \str)}{\qpprob(\yb \mid \str)} \nablat \prefixprob(\yb \mid \str) + \nablat \prefixprob(\yb \mid \str) \\
         &=\sum_{\str \in \Sigma^*}  \kl\mleft(\prefixprob(\cdot \mid \str) \mid\mid \qpprob(\cdot \mid \str)\mright) \nablat \prefixprob(\str) + \prefixprob(\str) \E_{\Yb} \left[\left(\log \frac{\prefixprob(\Yb \mid \str)}{\qpprob(\Yb \mid \str)} + 1 \right) \nablat \log \prefixprob(\Yb \mid \str) \right] \\
         &=\sum_{\str \in \Sigma^*}  \kl\mleft(\prefixprob(\cdot \mid \str) \mid\mid \qpprob(\cdot \mid \str)\mright) \nablat \prefixprob(\str) + \prefixprob(\str) \E_{\Yb} \left[\log \frac{\prefixprob(\Yb \mid \str)}{\qpprob(\Yb \mid \str)} \nablat \log \prefixprob(\Yb \mid \str) \right] \\
         &=\sum_{\str \in \Sigma^*} \E_{\Yb} \left[\log \frac{\prefixprob(\Yb \mid \str)}{\qpprob(\Yb \mid \str)} \right] \nablat \prefixprob(\str) + \prefixprob(\str) \E_{\Yb} \left[\log \frac{\prefixprob(\Yb \mid \str)}{\qpprob(\Yb \mid \str)} \nablat \log \prefixprob(\Yb \mid \str) \right] \\
         &=\sum_{\str \in \Sigma^*} \prefixprob(\str)\E_{\Yb} \left[\log \frac{\prefixprob(\Yb \mid \str)}{\qpprob(\Yb \mid \str)} \right] \nablat \log \prefixprob(\str) + \prefixprob(\str) \E_{\Yb} \left[\log \frac{\prefixprob(\Yb \mid \str)}{\qpprob(\Yb \mid \str)} \nablat \log \prefixprob(\Yb \mid \str) \right] \\
         &=\sum_{\str \in \Sigma^*} \prefixprob(\str)\E_{\Yb} \left[\log \frac{\prefixprob(\Yb \mid \str)}{\qpprob(\Yb \mid \str)} \left(\nablat \log \prefixprob(\Yb \mid \str) + \nablat \log \prefixprob(\str)  \right)\right].
    \end{align}
\end{subequations}
\end{proof}

\subsection{Variance-Reduction Proof} \label{sec:gradvarproof}
The proof structure is as follows: We first prove that the inequality holds when we constrain $\yvarb$ to have length less than or equal to $N$. We then generalize to the infinite-length sequences by analyzing as $N \rightarrow \infty$. We begin with defining the truncated MC and RB estimators. Let $\mcgradN$ be the truncated MC estimator of the gradient:
\begin{align} \label{eq:gradmcN}
    \mcgradN = \sum_{n=1}^N \frac{1}{M} \sum_{m=1}^M \log \frac{\prefixprob(\Yb_n^{(m)} \mid \yvarb_{<n}^{(m)})}{\qpprob(\Yb_n^{(m)} \mid \yvarb_{<n}^{(m)})} \nablat \log \prefixprob(\yvarb_{\leq N}^{(m)}).
\end{align}

Let $\gradrbN$ be the truncated RB estimator:
\begin{align} \label{eq:gradrbN}
    \gradrbN = \frac{1}{M}\sum_{m=1}^M \sum_{n=1}^N  \E_{\Yb_n} \left[\log \frac{\prefixprob(\Yb_n \mid \yvarb^{(m)}_{<n})}{\qpprob(\Yb_n \mid \yvarb^{(m)}_{<n})} \left( \nablat \log \prefixprob(\Yb_n \mid \yvarb^{(m)}_{<n}) + \nablat \log \prefixprob(\yvarb^{(m)}_{<n}) \right) \right].
\end{align}
\begin{lemma} \label{lemm:ntoinfgrad}
The truncated MC estimator of the gradient $\mcgradN$ converges to $\mcgrad$ as $N$ goes to $\infty$, i.e., $\lim_{N \rightarrow \infty} \mcgradN = \mcgrad$.
\end{lemma}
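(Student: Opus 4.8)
The plan is to show that each summand in $\mcgradN$ converges, as $N\to\infty$, to the corresponding summand in $\mcgrad$, and then invoke linearity of the finite sum over $m$ (and the $1/M$ factor) to conclude. Fixing a sample $\yvarb^{(m)}$, the only $N$-dependence in the $m$-th term of \Cref{eq:gradmcN} is through $\nablat \log \prefixprob(\yvarb_{\leq N}^{(m)})$; the log-ratio prefactors $\log \frac{\prefixprob(\Yb_n^{(m)} \mid \yvarb_{<n}^{(m)})}{\qpprob(\Yb_n^{(m)} \mid \yvarb_{<n}^{(m)})}$ for $n\le N$ are frozen by the sample. So the real content is: for a fixed realized string, $\lim_{N\to\infty}\nablat \log \prefixprob(\yvarb_{\leq N}^{(m)}) = \nablat \log \policy(\yvar^{(m)})$.

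First I would use the padding construction from \Cref{sec:rb-main}: once $N$ exceeds $|\yvar^{(m)}|$, the additional factors $\prefixprob(\Yb_n^{(m)} \mid \yvarb_{<n}^{(m)}) = \mathbbm{1}\{\Yb_n^{(m)} = \eos\}$ contribute $\log \prefixprob(\eos\mid\cdots)=\log 1 = 0$ to the running log-prefix-probability and hence $\nablat\log(\cdot) = 0$ to the gradient. Concretely, $\nablat\log\prefixprob(\yvarb^{(m)}_{\le N})$ stabilizes to $\nablat\log\prefixprob(\yvarb^{(m)}_{\le |\yvar^{(m)}|}) = \nablat\log\policy(\yvar^{(m)})$ for all $N \ge |\yvar^{(m)}|$ — this is essentially the same telescoping/padding identity used in \Cref{lemma:ntoinf}, now applied under the gradient rather than to the scalar log-ratio. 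Since $|\yvar^{(m)}| < \infty$ almost surely (each sampled string is finite), the sequence is eventually constant, so the limit exists trivially and equals the desired value. I would likewise note that the prefactor sum $\sum_{n=1}^{N}\log\frac{\prefixprob(\Yb_n^{(m)}\mid\yvarb^{(m)}_{<n})}{\qpprob(\Yb_n^{(m)}\mid\yvarb^{(m)}_{<n})}$ stabilizes to $\log\frac{\policy(\yvar^{(m)})}{\qlm(\yvar^{(m)})}$ by the chain rule \Cref{eq:lm} together with the padding convention, again exactly as in \Cref{lemma:ntoinf}.

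Putting the two pieces together: for each fixed $m$ and each fixed sample path, the $m$-th summand of $\mcgradN$ equals, for all $N \ge |\yvar^{(m)}|$, precisely $\log\frac{\policy(\yvar^{(m)})}{\qlm(\yvar^{(m)})}\,\nablat\log\policy(\yvar^{(m)})$, which is the $m$-th summand of $\mcgrad$ in \Cref{eq:mcgrad}. Taking $N$ past $\max_m |\yvar^{(m)}|$ (a finite quantity for any fixed realization of the $M$ samples) makes $\mcgradN = \mcgrad$ exactly, so the limit holds. I would present this as: the sequence $\{\mcgradN\}_N$ is eventually constant and equals $\mcgrad$, hence converges to it.

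The main obstacle is mostly bookkeeping rather than a genuine difficulty: one must be careful that the truncated estimator in \Cref{eq:gradmcN} is written with the full-string gradient $\nablat\log\prefixprob(\yvarb_{\le N}^{(m)})$ attached to \emph{every} term $n\le N$ (not $\nablat\log\prefixprob(\yvarb_{\le n}^{(m)})$), so that after summing over $n$ and passing to the limit one recovers the REINFORCE-style product $\big(\sum_n \log\text{-ratio}_n\big)\cdot\nablat\log\policy$ rather than something off by a telescoping correction; verifying this alignment with \Cref{eq:mcgrad} is the one place to tread carefully. There is no convergence-theorem machinery needed here since everything is eventually constant for each finite sample — the dominated-convergence arguments are deferred to the later step where expectations and variances (over the random $\yvarb$, whose length is unbounded) are taken.
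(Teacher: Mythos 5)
Your proposal is correct and takes essentially the same route as the paper: both arguments rest on the padding convention, under which the extra log-ratio factors beyond the string's length are $\log 1 = 0$ and $\nablat\log\prefixprob(\yvarb^{(m)}_{\le N})$ stabilizes to $\nablat\log\policy(\yvar^{(m)})$, so that $\mcgradN$ agrees with $\mcgrad$ for all sufficiently large $N$ on any fixed realization. Your ``eventually constant'' framing is just a slightly more explicit phrasing of the paper's chain of equalities that rewrites $\mcgrad$ as $\lim_{N\to\infty}\mcgradN$.
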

\begin{proof}
\begin{align} 
  \mcgrad &= \frac{1}{M} \sum_{m=1}^M  \log \frac{\prefixprob(\yvar^{(m)})}{\qpprob(\yvar^{(m)})} \nablat \log \prefixprob(\yvar^{(m)}) \\
  &= \frac{1}{M} \sum_{m=1}^M  \log \frac{\prefixprob(\eos \mid \yvar^{(m)})}{\qpprob(\eos \mid \yvar^{(m)})} \prod_{n=1}^{|\yvar^{(m)}|}\frac{\prefixprob(Y_n \mid \yvar_{<n}^{(m)})}{\qpprob(Y_n \mid \yvar_{<n}^{(m)})} \nablat \log \prefixprob(\yvar^{(m)}) \\
  &= \frac{1}{M} \sum_{m=1}^M  \log \prod_{n=1}^{\infty}\frac{\prefixprob(\Yb_n \mid \yvarb_{<n}^{(m)})}{\qpprob(\Yb_n \mid \yvarb_{<n}^{(m)})} \nablat \log \prefixprob(\yvarb^{(m)}) \\
  &= \frac{1}{M} \sum_{m=1}^M  \log \lim_{N \rightarrow \infty}\prod_{n=1}^{N}\frac{\prefixprob(\Yb_n \mid \yvarb_{<n}^{(m)})}{\qpprob(\Yb_n \mid \yvarb_{<n}^{(m)})} \nablat \log \prefixprob(\yvarb_{\leq N}^{(m)}) \\
  &= \frac{1}{M} \sum_{m=1}^M  \lim_{N \rightarrow \infty} \log \prod_{n=1}^{N}\frac{\prefixprob(\Yb_n \mid \yvarb_{<n}^{(m)})}{\qpprob(\Yb_n \mid \yvarb_{<n}^{(m)})} \nablat \log \prefixprob(\yvarb_{\leq N}^{(m)}) \\
  &=\lim_{N \rightarrow \infty}  \frac{1}{M} \sum_{m=1}^M    \log \frac{\prefixprob(\yvarb_{\leq N}^{(m)})}{\qpprob(\yvarb_{\leq N}^{(m)})} \nablat \log \prefixprob(\yvarb_{\leq N}^{(m)}) \\
  &= \lim_{N \rightarrow \infty} \mcgradN.
\end{align}
\end{proof}

\begin{lemma} \label{lemm:prefixgrad}
The following identity holds:
\begin{equation} 
\nablat \log \prefixprob(\yvarb_{\leq n}) = \E_{\yvarb'} \left[\nablat \log \prefixprob(\yvarb_{\leq N}') \,\middle|\, \yvarb'_{\leq n} = \yvarb_{\leq n} \right],
\end{equation}
for any $N > n$.
\end{lemma}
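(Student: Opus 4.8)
The plan is to exploit the multiplicative chain-rule factorization of the prefix probability and then annihilate the extra terms using the score-function (log-derivative) identity. First I would write, for any $\yvarb'$ and any $N > n$, using the $\eos$-padding convention of \Cref{sec:rb-main} (so that factors past a string's termination are all $1$),
\begin{equation}
\prefixprob(\yvarb'_{\leq N}) = \prefixprob(\yvarb'_{\leq n}) \prod_{k=n+1}^{N} \prefixprob(\Yb'_k \mid \yvarb'_{<k}),
\end{equation}
take logarithms, and differentiate to obtain
\begin{equation}
\nablat \log \prefixprob(\yvarb'_{\leq N}) = \nablat \log \prefixprob(\yvarb'_{\leq n}) + \sum_{k=n+1}^{N} \nablat \log \prefixprob(\Yb'_k \mid \yvarb'_{<k}).
\end{equation}

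Next I would apply $\E_{\yvarb'}\!\left[\,\cdot \mid \yvarb'_{\leq n} = \yvarb_{\leq n}\right]$ to both sides. The first summand is a deterministic function of $\yvarb'_{\leq n}$, so under the conditioning it is fixed to $\nablat \log \prefixprob(\yvarb_{\leq n})$, which is precisely the left-hand side of the claimed identity. It then remains to show that each term $\E_{\yvarb'}\!\left[\nablat \log \prefixprob(\Yb'_k \mid \yvarb'_{<k}) \mid \yvarb'_{\leq n} = \yvarb_{\leq n}\right]$, for $n < k \leq N$, vanishes.

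For this I would use the tower property, conditioning further on $\yvarb'_{<k}$ (a refinement of conditioning on $\yvarb'_{\leq n}$ since $k > n$), and then invoke the standard score-function identity: for any fixed context $\str$,
\begin{equation}
\E_{Y \sim \prefixprob(\cdot \mid \str)}\!\left[\nablat \log \prefixprob(Y \mid \str)\right] = \sum_{y \in \alphabar} \nablat \prefixprob(y \mid \str) = \nablat \sum_{y \in \alphabar} \prefixprob(y \mid \str) = \nablat 1 = 0,
\end{equation}
where swapping $\nablat$ with the finite sum over $\alphabar$ is immediate. This also covers padded positions, where $\prefixprob(\cdot \mid \str)$ is the point mass at $\eos$ and the score is identically zero. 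Composing these observations shows every extra term is zero, which yields the lemma.

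The step I expect to require the most care is the bookkeeping around the $\eos$-padding: one must check that both the factorization and the score identity hold verbatim once the sampled string has already emitted $\eos$ (so that positions beyond $|\yvarb'|$ contribute only trivial factors and zero scores), and that $\nablat \log \prefixprob$ is well-defined everywhere it is used, with zero-probability terms handled by the same drop-the-term convention already adopted in the paper. Beyond that, everything is a finite-sum manipulation with no convergence issues, since $N$ and $\alphabar$ are both finite.
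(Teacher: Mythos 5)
Your proof is correct, but it travels in the opposite direction from the paper's. The paper starts from the left-hand side: it differentiates the marginalization identity $\prefixprob(\yvarb_{\leq n}) = \sum_{\strbar' \in \alphabar^{N-n}} \prefixprob(\yvarb_{\leq n}\strbar')$, applies the log-derivative trick $\nablat \prefixprob = \prefixprob \, \nablat \log \prefixprob$ termwise, and recognizes the resulting weighted sum as the conditional expectation on the right-hand side — a single, direct computation. You instead start from the right-hand side: you telescope $\nablat \log \prefixprob(\yvarb'_{\leq N})$ via the chain rule into $\nablat \log \prefixprob(\yvarb'_{\leq n})$ plus a sum of per-step scores $\nablat \log \prefixprob(\Yb'_k \mid \yvarb'_{<k})$, and kill each extra term with the tower property and the zero-mean score identity $\E_{Y}[\nablat \log \prefixprob(Y \mid \str)] = 0$. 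The two arguments rest on the same underlying fact (differentiating a normalization/marginalization constraint), but yours makes the probabilistic structure explicit — the surplus terms form a sum of conditionally zero-mean increments — which is a nice complementary view, at the cost of the extra bookkeeping around $\eos$-padding that you correctly flag. Both proofs share the same implicit caveat about $\nablat \log \prefixprob$ at zero-probability continuations, so your version introduces no new gap there.
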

\begin{proof}
Note that $\prefixprob(\yvarb_{\leq n}) = \sum_{\strbar' \in \alphabar^{N-n}} \prefixprob(\yvarb_{\leq n}\strbar' )$ for any $N > n$.
Therefore, we have
\begin{subequations}
    \begin{align}
        \nablat \log \prefixprob(\yvarb_{\leq n}) &= 
        \frac{\nablat \prefixprob(\yvarb_{\leq n})}{\prefixprob(\yvarb_{\leq n})}  \\
        &=  \sum_{\strbar' \in \alphabar^{N-n}} \frac{\nablat  \prefixprob(\yvarb_{\leq n}\strbar' )}{\prefixprob(\yvarb_{\leq n})} \\
        &= \sum_{\strbar' \in \alphabar^{N-n}}\frac{\prefixprob(\yvarb_{\leq n}\strbar' )}{\prefixprob(\yvarb_{\leq n})} \nablat \log \prefixprob(\yvarb_{\leq n}\strbar')  \\
        &= \sum_{\strbar' \in \alphabar^{N-n}}\frac{\prefixprob(\strbar' \mid \yvarb_{\leq n}) \prefixprob(\yvarb_{\leq n})}{\prefixprob(\yvarb_{\leq n})} \nablat \log \prefixprob(\yvarb_{\leq n}\strbar')  \\
        &= \sum_{\strbar' \in \alphabar^{N-n}} \prefixprob(\strbar' \mid \yvarb_{\leq n}) \nablat \log \prefixprob(\yvarb_{\leq n}\strbar') \\
        &=  \E_{\yvarb'} \left[\nablat \log \prefixprob(\yvarb_{\leq N}') \,\middle|\, \yvarb'_{\leq n} = \yvarb_{\leq n} \right]\label{eq:gradres}
    \end{align}
\end{subequations}
\end{proof}

\begin{lemma} \label{lemm:gradn}
    Let $\mcgradN$ be the truncated MC estimator of the gradient defined in \Cref{eq:gradmcN} and $\gradrbN$ the truncated RB estimator of the gradient defined in \Cref{eq:gradrbN}. Define $\klgrad^N \defeq \E[\mcgradN] = \E[\gradrbN]$. We have
    \begin{equation}
        \E\left[ \left\lVert \gradrbN - \klgrad^N \right\rVert^2 \right] \leq \E\left[ \left\lVert \mcgradN - \klgrad^N \right\rVert^2 \right].
    \end{equation}
\end{lemma}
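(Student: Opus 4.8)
Following the proof of \Cref{lemma:rb}, I would first reduce to $M = 1$ (the case $M > 1$ follows from the i.i.d.\ assumption) and use that both estimators are unbiased for $\klgrad^N$: by the identity $\E[\lVert \mathbf{X} - \E\mathbf{X}\rVert^2] = \E[\lVert \mathbf{X}\rVert^2] - \lVert \E\mathbf{X}\rVert^2$, the claim is equivalent to $\E[\lVert \gradrbN\rVert^2] \le \E[\lVert \mcgradN\rVert^2]$. Write $\mcgradN = \sum_{n=1}^N \mathbf{h}_n(\yvarb)$ with $\mathbf{h}_n(\yvarb) \defeq \log\tfrac{\prefixprob(\Yb_n \mid \yvarb_{<n})}{\qpprob(\Yb_n \mid \yvarb_{<n})}\,\nablat\log\prefixprob(\yvarb_{\leq N})$ (cf.\ \Cref{eq:gradmcN}), and $\gradrbN = \sum_{n=1}^N \mathbf{g}_n(\yvarb)$, where $\mathbf{g}_n$ is the $n$-th summand of \Cref{eq:gradrbN} and depends on $\yvarb$ only through $\yvarb_{<n}$.

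The crux of the argument is to show that $\mathbf{g}_n$ is precisely the Rao--Blackwellization of $\mathbf{h}_n$ with respect to the prefix statistic $\yvarb_{<n}$, i.e.\ that $\mathbf{g}_n(\yvarb) = \E_{\yvarb'}[\,\mathbf{h}_n(\yvarb') \mid \yvarb'_{<n} = \yvarb_{<n}\,]$. I would prove this by conditioning the right-hand side first on $\yvarb'_{\leq n}$: the log-ratio factor then becomes measurable and pulls out, and the inner expectation $\E_{\yvarb'}[\nablat\log\prefixprob(\yvarb'_{\leq N}) \mid \yvarb'_{\leq n}]$ collapses to $\nablat\log\prefixprob(\yvarb_{\leq n})$ by \Cref{lemm:prefixgrad} (and trivially when $n = N$). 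Expanding $\nablat\log\prefixprob(\yvarb_{\leq n}) = \nablat\log\prefixprob(\yvarb_{<n}) + \nablat\log\prefixprob(\Yb_n \mid \yvarb_{<n})$ via the chain rule then recovers exactly the summand of \Cref{eq:gradrbN}, establishing the identity.

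Given this identity, the rest follows the structure of \Cref{lemma:rb}, carried out with the convex function $\lVert \cdot \rVert^2$ in place of $(\cdot)^2$ (equivalently, coordinate by coordinate): introduce independent copies $\yvarb^1,\dots,\yvarb^N$ of $\yvarb$, use the identity to rewrite $\gradrbN = \sum_n \E_{\yvarb^n}[\mathbf{h}_n(\yvarb^n) \mid \yvarb^n_{<n} = \yvarb'_{<n}]$ as a single nested conditional expectation of $\sum_n \mathbf{h}_n(\yvarb^n)$, apply Jensen's inequality at each level of conditioning, and then take the outer expectation over $\yvarb'$ and discharge the conditioning by the law of total expectation, yielding $\E[\lVert \gradrbN\rVert^2] \le \E[\lVert \mcgradN\rVert^2]$. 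I expect the per-step identity in the previous paragraph to be the main obstacle: in contrast to the scalar summand $\klmcn$ of \Cref{lemma:rb}, the summand $\mathbf{h}_n$ of $\mcgradN$ carries the score $\nablat\log\prefixprob(\yvarb_{\leq N})$, which depends on the \emph{entire} truncated string, not merely on $\yvarb_{\leq n}$; it is \Cref{lemm:prefixgrad} that makes this score average down to $\nablat\log\prefixprob(\yvarb_{\leq n})$ after conditioning on $\yvarb_{\leq n}$, and this is exactly what dictates the particular form of the RB summand in \Cref{eq:gradrbN}. Once that identity is in place, the telescoping-and-Jensen manipulation transfers verbatim from \Cref{lemma:rb}.
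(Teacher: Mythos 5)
Your proposal is correct relative to the paper's own argument and takes essentially the same route: the crux in both is \Cref{lemm:prefixgrad}, which averages the score $\nablat\log\prefixprob(\yvarb_{\leq N})$ down to $\nablat\log\prefixprob(\yvarb_{\leq n})$ under conditioning on the length-$n$ prefix, followed by the nested-conditioning-and-Jensen machinery of \Cref{lemma:rb} applied to $\lVert\cdot\rVert^2$ in place of $(\cdot)^2$. The only difference is organizational: you compose the two conditionings into the single identity $\mathbf{g}_n(\yvarb)=\E_{\yvarb'}\left[\mathbf{h}_n(\yvarb')\mid\yvarb'_{<n}=\yvarb_{<n}\right]$ and invoke the \Cref{lemma:rb} argument once, whereas the paper runs that argument twice --- first conditioning on $\yvarb_{<n}$ to reach the intermediate quantity $\E_{\yvarb}\left[\bigl\lVert \sum_{n=1}^N \klmcn(\yvarb_{\leq n})\,\nablat\log\prefixprob(\yvarb_{\leq n}) - \klgrad^N\bigr\rVert^2\right]$, and then conditioning on $\yvarb_{\leq n}$ (via \Cref{lemm:prefixgrad}) to pass from that to $\mcgradN$; the two organizations agree by the tower property, and yours is arguably the cleaner packaging, with the caveat that the final collapse of the nested conditional expectations over the copies $\yvarb^1,\dots,\yvarb^N$ back to a single expectation over one $\yvarb$ --- the step you describe as transferring verbatim --- is the delicate point in both write-ups, inherited from \Cref{lemma:rb} rather than introduced by you.
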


\begin{proof}
Without loss of generality, we assume $M=1$. The proof generalizes to $M > 1$ with the i.i.d. assumption.
\begin{subequations}
    \begin{align}
        &\E\left[ \left\lVert \gradrbN - \klgrad^N \right\rVert^2 \right] \\
        &= \E_{\yvarb'} \left[ \left\lVert \sum_{n=1}^N \E_{\yvarb} \left[\klmcn(\yvarb_{\leq n}) \underbrace{\left(\nablat \log \prefixprob(\yvarb_{<n}) + \nablat \log \prefixprob(\Yb_n \mid \yvarb_{<n})\right)}_{\nablat \log \prefixprob(\yvarb_{\leq n})} - \klgrad^N  \,\middle|\, \yvarb_{<n} = \yvarb'_{<n}  \right] \right\rVert^2 \right] & \\ &\substack{\justification{definition of $\gradrbN$}} \nonumber \\
        &= \E_{\yvarb'} \left[ \left\lVert \sum_{n=1}^N \E_{\yvarb^n} \left[\klmcn(\yvarb^n_{\leq n}) \nablat \log \prefixprob(\yvarb^n_{\leq n}) - \klgrad^N  \,\middle|\, \yvarb^n_{<n} = \yvarb'_{<n}  \right] \right\rVert^2 \right] 
        \\&\substack{\justification{each $\yvarb^n$ is distributed independently and identically to $\yvarb$}}\nonumber \\
        &= \E_{\yvarb'} \left[ \left\lVert \E_{\yvarb^1} \left[\cdots \E_{\yvarb^N} \left[\sum_{n=1}^N \klmcn(\yvarb^n_{\leq n}) \nablat \log \prefixprob(\yvarb^n_{\leq n}) - \klgrad^N  \,\middle|\, \yvarb^N_{<N} = \yvarb'_{<N}\right] \cdots \,\middle|\, \yvarb^1_{<1} = \yvarb'_{<1} \right] \right\rVert^2 \right] & \\ &\substack{\justification{linearity of expectation}} \nonumber \\
         &\leq \E_{\yvarb'} \left[ \E_{\yvarb^1} \left[\cdots \E_{\yvarb^N} \left[\left\lVert \sum_{n=1}^N \klmcn(\yvarb^n_{\leq n}) \nablat \log \prefixprob(\yvarb^n_{\leq n}) - \klgrad^N \right\rVert^2  \,\middle|\, \yvarb^N_{<N} = \yvarb'_{<N}\right] \cdots \,\middle|\, \yvarb^1_{<1} = \yvarb'_{<1} \right]  \right] & \\ &\substack{\justification{Jensen's inequality}} \nonumber \\
         &= \E_{\yvarb^1} \left[\cdots \E_{\yvarb^N} \left[\left\lVert \sum_{n=1}^N \klmcn(\yvarb^n_{\leq n}) \nablat \log \prefixprob(\yvarb^n_{\leq n}) - \klgrad^N \right\rVert^2 \right] \right]  & \\
         &\substack{\justification{law of total expectation}} \nonumber \\
        &= \E_{\yvarb} \left[\left\lVert \sum_{n=1}^N \klmcn(\yvarb_{\leq n}) \nablat \log \prefixprob(\yvarb_{\leq n}) - \klgrad^N \right\rVert^2 \right]  \label{eq:gradmid}\\ &\substack{\justification{$\yvarb^1, \ldots, \yvarb^N$ are i.i.d.}} \nonumber \\
         &\leq \E_{\yvarb} \left[\left\lVert \sum_{n=1}^N \klmcn(\yvarb_{\leq n}) \E_{\yvarb'} \left[\nablat \log \prefixprob(\yvarb_{\leq N}') \mid \yvarb'_{\leq n} = \yvarb_{\leq n} \right] - \klgrad^N \right\rVert^2 \right] \\ &\substack{\justification{\Cref{lemm:prefixgrad}}} \nonumber \\
        &= \E_{\yvarb} \left[ \left\lVert \sum_{n=1}^N \E_{\yvarb'} \left[ \klmcn(\yvarb'_{\leq n}) \nablat \log \prefixprob(\yvarb_{\leq N}') - \klgrad^N \,\middle|\, \yvarb'_{\leq n} = \yvarb_{\leq n} \right] \right\rVert^2 \right] \\ &\substack{\justification{linearity of expectation}} \nonumber \\
        &= \E_{\yvarb} \left[ \left\lVert \E_{\yvarb'^1} \left [\cdots \E_{\yvarb'^N} \left[\sum_{n=1}^N \klmcn(\yvarb'^n_{\leq n}) \nablat \log \prefixprob(\yvarb_{\leq N}'^n) - \klgrad^N \,\middle|\, \yvar'^N_{\leq N} = \yvar_{\leq N} \right] \cdots \,\middle|\, \yvar'^1_{\leq 1} = \yvarb_{\leq 1}\right]\right\rVert^2 \right] \\ &\substack{\justification{linearity of expectation}} \nonumber \\
        &\leq \E_{\yvarb} \left[  \E_{\yvarb'^1} \left [\cdots \E_{\yvarb'^N} \left[\left\lVert \sum_{n=1}^N \klmcn(\yvarb'^n_{\leq n}) \nablat \log \prefixprob(\yvarb_{\leq N}'^n) - \klgrad^N \right\rVert^2 \,\middle|\, \yvarb'^N_{\leq N} = \yvarb_{\leq N} \right] \cdots \,\middle|\, \yvarb'^1_{\leq 1} = \yvarb_{\leq 1}\right] \right] \\ &\substack{\justification{Jensen's inequality}} \nonumber \\
        &= \E_{\yvarb'^1} \left [\cdots \E_{\yvarb'^N} \left[\left\lVert \sum_{n=1}^N \klmcn(\yvarb'^n_{\leq n}) \nablat \log \prefixprob(\yvarb_{\leq N}'^n) - \klgrad^N \right\rVert^2 \right] \right]  \\ &\substack{\justification{law of total expectation}} \nonumber \\
        &=  \E_{\yvarb'} \left[ \left\lVert \sum_{n=1}^N \klmcn(\yvarb'_{\leq n}) \nablat \log \prefixprob(\yvarb_{\leq N}') - \klgrad^N \right\rVert^2 \right]  \\ &\substack{\justification{$\yvarb'^1, \cdots, \yvarb'^N$ are i.i.d.}} \nonumber \\
        &= \E_{\yvarb} \left[ \left\lVert \mcgradN(\yvarb) - \klgrad^N \right\rVert^2 \right]. \\  &\substack{\justification{definition of $\mcgradN$}} \nonumber 
    \end{align}
\end{subequations}
\end{proof}

\gradvar*
\begin{proof}
Without loss of generality, we assume $M=1$. The proof generalizes to $M > 1$ with the i.i.d. assumption.
\begin{subequations}
    \begin{align}
        \E\left[ \left\lVert \gradrb - \klgrad \right\rVert^2 \right] &= \E\left[ \left\lVert \lim_{N \rightarrow \infty} \gradrbN - \klgrad^N \right\rVert^2 \right] \\
        &\substack{\justification{definition of $\gradrb$}} \nonumber \\
        &= \lim_{N \rightarrow \infty} \E\left[ \left\lVert \gradrbN - \klgrad^N \right\rVert^2 \right] \\
        &\substack{\justification{dominated convergence theorem, $\var{\gradrb} < \infty$}} \nonumber \\
        &\leq \lim_{N \rightarrow \infty} \E\left[ \left\lVert \mcgradN - \klgrad^N \right\rVert^2 \right] \\
        &\substack{\justification{\Cref{lemm:gradn}}} \nonumber \\
        &= \E\left[ \left\lVert \lim_{N \rightarrow \infty} \mcgradN - \klgrad^N \right\rVert^2 \right] \\
        &\substack{\justification{dominated convergence theorem, $\var{\mcgrad} < \infty$}} \nonumber \\
        &= \E\left[ \left\lVert \mcgrad - \klgrad \right\rVert^2 \right]. \\
        &\substack{\justification{\Cref{lemm:ntoinfgrad}}} \nonumber
    \end{align}
\end{subequations}
\end{proof}
\subsection{A Note on Rao--Blackwellizing KL in Trust-Region Algorithms} \label{sec:note}
The conventional Monte Carlo estimator of $\kl(\policy \mid\mid \qlm)$ used in the PPO algorithm in open-sourced RLHF libraries, e.g., \citep{trl, trlx}, is as follows:
\begin{align} \label{eq:mcppo}
    \klmcppo = \frac{1}{M} \sum_{m=1}^M \frac{\policy(\yvar^{(m)})}{\oldpolicy(\yvar^{(m)})} \log \frac{\oldpolicy(\yvar^{(m)})}{\qlm(\yvar^{(m)})},
\end{align}
where $\yvar^{(1)}, \cdots, \yvar^{(M)} \simIID \oldpolicy$. Notably, the expected value of the above estimator is \emph{not equal to} $\kl(\policy \mid\mid \qlm)$ and is
\begin{align}
    \E[\klmcppo] = \E_{\yvar \sim \oldpolicy} \left[\frac{\policy(\yvar)}{\oldpolicy(\yvar)} \log \frac{\oldpolicy(\yvar)}{\qlm(\yvar)} \right] = \E_{\yvar \sim \policy} \left[\log \frac{\oldpolicy(\yvar)}{\qlm(\yvar)} \right].
\end{align}
A natural question at this point is: what is the relationship between $\klmcppo$ and $\klmc$, and why is minimizing $\klmcppo$ a valid proxy for minimizing $\klmc$? Crucially, the KL divergence between $\policy$ and $\qlm$ can be decomposed into the sum of $\E[\klmcppo]$ and the KL divergence between $\policy$ and $\oldpolicy$, as shown in the following equation: 
\begin{align}
\underbrace{\E_{\yvar \sim \policy} \left[\log \frac{\oldpolicy(\yvar)}{\qlm(\yvar)} \right]}_{\E[\klmcppo]} + \underbrace{\E_{\yvar \sim \policy} \left[\log \frac{\policy(\yvar)}{\oldpolicy(\yvar)} \right]}_{\text{trust region},\kl(\policy, \oldpolicy)} = \E_{\yvar \sim \policy} \left[\log \frac{\policy(\yvar)}{\qlm(\yvar)} \right] = \kl(\policy \mid\mid \qlm).
\end{align}
Therefore, minimizing $\kl(\policy \mid\mid \qlm)$ is equivalent to minimizing both $\E[\klmcppo]$ and $\kl(\policy \mid\mid \oldpolicy)$. Notably, since the KL divergence between the current policy and the old policy, $\kl(\policy \mid\mid \oldpolicy)$, is already constrained by PPO’s clipping mechanism, the algorithm effectively focuses on penalizing only the first term, using $\klmcppo$.

A na\"ive approach to Rao--Blackwellizing $\klmcppo$ defined in \Cref{eq:mcppo}, is as follows:
\begin{align}
    \rbestimatorppo = \frac{1}{M} \sum_{m=1}^M \frac{\policy(\yvar^{(m)})}{\oldpolicy(\yvar^{(m)})} \lim_{N \rightarrow \infty}\sum_{n=1}^N \E_{Y_n \sim \oldpolicy}\left[\log \frac{\prefixprobold(Y_n \mid \yvar_{<n}^{(m)})}{\qpprob(Y_n \mid \yvar^{(m)}_{<n})}\right].
\end{align}
Importantly, $\rbestimatorppo$ does \emph{not} give an unbiased estimate of $\E_{\yvar \sim \policy} \left[\log \frac{\oldpolicy(\yvar)}{\qlm(\yvar)} \right]$, i.e.,
\begin{align}
    \E\left[\rbestimatorppo \right] = \E_{\yvar \sim \policy}\left[  \lim_{N \rightarrow \infty}\sum_{n=1}^N \E_{\Yb_n \sim \oldpolicy}\left[\log \frac{\prefixprobold(\Yb_n \mid \yvarb_{<n})}{\qpprob(\Yb_n \mid \yvarb_{<n})}\right] \right] \neq \E_{\yvar \sim \policy} \left[\log \frac{\oldpolicy(\yvar)}{\qlm(\yvar)} \right].
\end{align}
Therefore, we caution the reader against using this estimator as a replacement for $\klmcppo$ in practice.

\clearpage
\section{Additional Experiments} \label{sec:experiments}
In \Cref{fig:estimator-char}, we visualize the KL estimates for three different prompts: (left) a positive prompt, (middle) a neutral prompt, and (right) a negative adversarial prompt. The traces represent the average estimates from all the repetitions, while the shaded regions indicate the standard deviation. Except $\klcv, \alpha=1$, all other estimators are unbiased and consistent.

As the sample size increases, the chance of sampling from the tail of $\policy$ also increases. These tail samples often correspond to negative movie reviews that had a high probability under the language model prior to fine-tuning, i.e., $\qlm$, leading to extremely large values of $g(\yvar)$ and, consequently, a high standard deviation. This effect indeed depends on the prompt and is particularly pronounced for neutral and adversarial prompts. 
\begin{figure*}
    \centering
    \includegraphics[width=\linewidth]{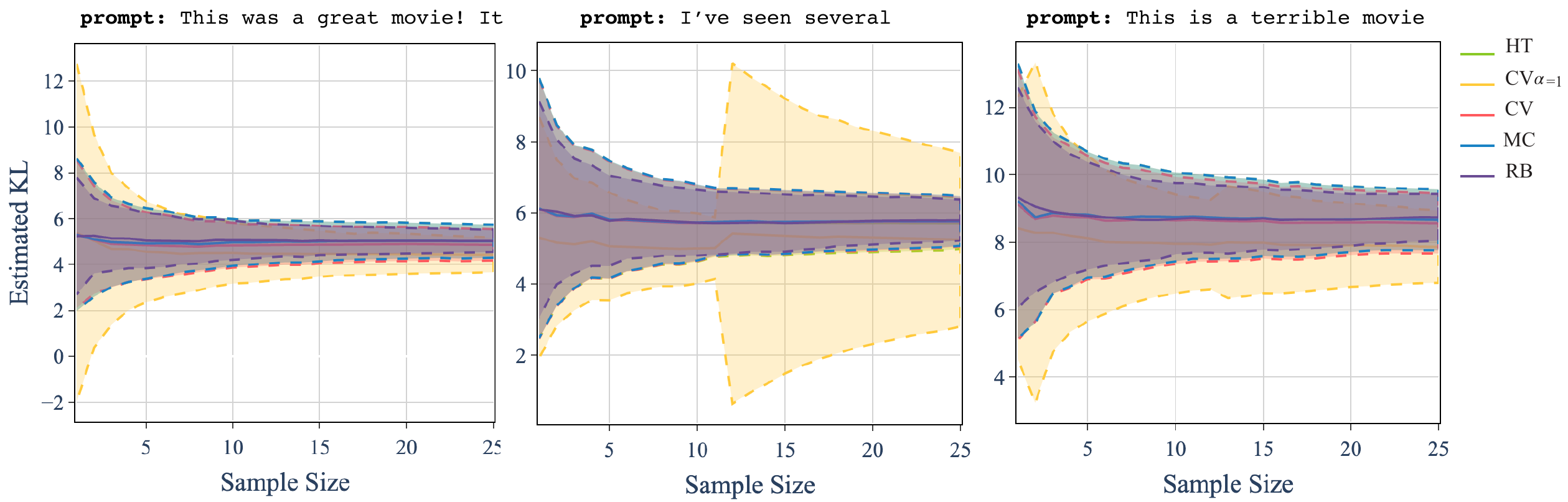}
    \caption{Comparing the bias, variance, and consistency of the estimators as the sample size increases. The $\klcv$ estimator with $\alpha=1$ exhibits a higher standard deviation, particularly for neutral and negative prompts, where the variance becomes extremely large. In contrast, the RB estimator, $\rbestimator$, achieves the lowest standard deviation.
}
    \label{fig:estimator-char}
\end{figure*}

We conducted an additional experiment to evaluate the estimators on a random subset of prompts from the UltraFeedback dataset \citep{cui2024ultrafeedback}. We compute the KL divergence between \texttt{Zephyr-7B-Beta} \citep{tunstall2024zephyr} and its reference model, \texttt{Mistral-7B-v0.1} \citep{jiang2023mistral7b}. Zephyr is fine-tuned from Mistral using DPO on UltraFeedback, and as part of this fine-tuning, it is desirable not to diverge significantly from the base model.\looseness=-1

\begin{wraptable}[13]{r}{0.45\textwidth}
\vspace{-4pt}
\centering
\caption{Estimated value $\pm$ empirical standard deviation of different estimators. RB estimator consistently achieves the lowest standard deviation.}
\resizebox{0.45\textwidth}{!}{
\begin{tabular}{@{}lccc@{}}
\toprule
 & $M=1$ & $M=5$ & $M=10$ \\
\midrule
$\klmc$ & $18.05 \pm 3.19$ & $18.05 \pm 1.63$ & $18.05 \pm \mathbf{0.75}$\\
$\klht$ & $18.05 \pm 12.64$ & $18.56 \pm 5.34$ & $19.24 \pm 3.62$\\
$\klcvo$ & $17.17 \pm 3.17$ & $17.17 \pm 1.62$ & $17.17 \pm 0.75$\\
$\klcv$ & $17.80 \pm 3.18$ & $17.80 \pm 1.63$ & $17.80 \pm 0.75$\\
$\rbestimator$ & $18.05 \pm \mathbf{3.16}$ & $18.05 \pm \mathbf{1.61}$ & $18.05 \pm \mathbf{0.75}$\\ \bottomrule
\end{tabular}}
\label{tab:biasvarult}
\end{wraptable}
We randomly sampled 512 prompts and generated $100$ responses per prompt. For estimation, we used subsets of $1$, $5$, and $10$ samples, reserving the remaining samples to estimate each method’s standard deviation. \Cref{tab:biasvarult} reports the KL estimate ± standard deviation for each estimator. Unlike our GPT-2 experiments, we had to use half-precision to perform inference and forward passes on a single GPU, which introduces bias in the HT and CV estimators. Consistent with our findings on the \textsc{IMDB} dataset, our proposed RB estimator consistently achieves the lowest standard deviation across all settings, reaffirming its stability and reliability.

\clearpage
\section{Experimental Details} \label{sec:experiment-details}
\subsection{Code Snippet} \label{sec:code}
\begin{minted}[xleftmargin=15pt]{python}
def compute_kl(logprobs, ref_logprobs, logits, ref_logits):
    """
    Compute KL divergence using two estimators.
    
    Args:
        logprobs: Log probabilities of sampled actions from policy
        ref_logprobs: Log probabilities of same actions from reference
        logits: Full distribution logits from policy (all actions)
        ref_logits: Full distribution logits from reference (all actions)
        
    Example:
        # Policy samples action 3 from 1000 possible actions
        logprobs = [-2.3]           # Only action 3's log prob
        logits = [0.1, -0.5, ...]   # All 1000 action logits (raw)
        
        # MC: uses only sampled action
        # RB: uses all 1000 actions for exact expectation
    """
    # Monte Carlo: unbiased but higher variance
    kl_mc = mean(logprobs - ref_logprobs)
    
    # Rao-Blackwell: lower variance, uses full distribution
    log_p = log_softmax(logits)          # Normalize to log probs
    log_q = log_softmax(ref_logits)      # Normalize to log probs
    kl_rb = mean(sum(exp(log_p) * (log_p - log_q)))
    
    return kl_mc, kl_rb
\end{minted}
\subsection{RLHF Experiments}
In \Cref{tab:ppoparam}, we include the hyperparameters used with the RLOO algorithm for the sentiment control experiment. Each experiment takes approximately 20 minutes on a single \texttt{rtx\_4090} GPU.
\begin{table}[h] \label{tab:ppoparam}
\centering
\begin{tabular}{@{}ll@{}}\toprule
\textbf{Hypterparameter} & \textbf{Value} \\ \midrule
Optimizer & AdamW ($\epsilon = 1\mathrm{e-}5$, \texttt{lr}$=3\mathrm{e-}6$) \\
Scheduler & Linear \\
Batch Size & $32$ \\
$\beta$ & $0.07$ \\
$k$ & $2$ \\
Number of RLOO Updates Iteration Per Epoch & $4$ \\
Clip range & $0.2$ \\
Sampling Temperature & $1$ \\
\bottomrule
\end{tabular}\end{table}

\clearpage
\section*{NeurIPS Paper Checklist}

\begin{enumerate}

\item {\bf Claims}
    \item[] Question: Do the main claims made in the abstract and introduction accurately reflect the paper's contributions and scope?
    \item[] Answer: \answerYes{} % Replace by \answerYes{}, \answerNo{}, or \answerNA{}.
    \item[] Justification: The main contributions are that our proposed estimator is unbiased and has variance less than or equal to the variance of the MC estimator, and it improves the stability of RLHF training. All these claims are clearly stated in the abstract and introduction.
    \item[] Guidelines:
    \begin{itemize}
        \item The answer NA means that the abstract and introduction do not include the claims made in the paper.
        \item The abstract and/or introduction should clearly state the claims made, including the contributions made in the paper and important assumptions and limitations. A No or NA answer to this question will not be perceived well by the reviewers. 
        \item The claims made should match theoretical and experimental results, and reflect how much the results can be expected to generalize to other settings. 
        \item It is fine to include aspirational goals as motivation as long as it is clear that these goals are not attained by the paper. 
    \end{itemize}

\item {\bf Limitations}
    \item[] Question: Does the paper discuss the limitations of the work performed by the authors?
    \item[] Answer: \answerYes{} % Replace by \answerYes{}, \answerNo{}, or \answerNA{}.
    \item[] Justification: We explain the limitations in \Cref{sec:limitation}.
    \item[] Guidelines:
    \begin{itemize}
        \item The answer NA means that the paper has no limitation while the answer No means that the paper has limitations, but those are not discussed in the paper. 
        \item The authors are encouraged to create a separate "Limitations" section in their paper.
        \item The paper should point out any strong assumptions and how robust the results are to violations of these assumptions (e.g., independence assumptions, noiseless settings, model well-specification, asymptotic approximations only holding locally). The authors should reflect on how these assumptions might be violated in practice and what the implications would be.
        \item The authors should reflect on the scope of the claims made, e.g., if the approach was only tested on a few datasets or with a few runs. In general, empirical results often depend on implicit assumptions, which should be articulated.
        \item The authors should reflect on the factors that influence the performance of the approach. For example, a facial recognition algorithm may perform poorly when image resolution is low or images are taken in low lighting. Or a speech-to-text system might not be used reliably to provide closed captions for online lectures because it fails to handle technical jargon.
        \item The authors should discuss the computational efficiency of the proposed algorithms and how they scale with dataset size.
        \item If applicable, the authors should discuss possible limitations of their approach to address problems of privacy and fairness.
        \item While the authors might fear that complete honesty about limitations might be used by reviewers as grounds for rejection, a worse outcome might be that reviewers discover limitations that aren't acknowledged in the paper. The authors should use their best judgment and recognize that individual actions in favor of transparency play an important role in developing norms that preserve the integrity of the community. Reviewers will be specifically instructed to not penalize honesty concerning limitations.
    \end{itemize}

\item {\bf Theory assumptions and proofs}
    \item[] Question: For each theoretical result, does the paper provide the full set of assumptions and a complete (and correct) proof?
    \item[] Answer: \answerYes{} % Replace by \answerYes{}, \answerNo{}, or \answerNA{}.
    \item[] Justification: Yes, the main theoretical claim is that the RB estimator is unbiased and has variance less than or equal to the variance of the MC estimator, this is supported by theorem 2, and the proof is in appendix E.3. The rest of the theoretical claims regarding other estimators are proved in appendix A-F.
    \item[] Guidelines:
    \begin{itemize}
        \item The answer NA means that the paper does not include theoretical results. 
        \item All the theorems, formulas, and proofs in the paper should be numbered and cross-referenced.
        \item All assumptions should be clearly stated or referenced in the statement of any theorems.
        \item The proofs can either appear in the main paper or the supplemental material, but if they appear in the supplemental material, the authors are encouraged to provide a short proof sketch to provide intuition. 
        \item Inversely, any informal proof provided in the core of the paper should be complemented by formal proofs provided in appendix or supplemental material.
        \item Theorems and Lemmas that the proof relies upon should be properly referenced. 
    \end{itemize}

    \item {\bf Experimental result reproducibility}
    \item[] Question: Does the paper fully disclose all the information needed to reproduce the main experimental results of the paper to the extent that it affects the main claims and/or conclusions of the paper (regardless of whether the code and data are provided or not)?
    \item[] Answer: \answerYes{} % Replace by \answerYes{}, \answerNo{}, or \answerNA{}.
    \item[] Justification: We provide a simple code snippet to implement our RB estimator in \Cref{sec:code}, which also highlights the difference between the RB and MC estimator.
    \item[] Guidelines:
    \begin{itemize}
        \item The answer NA means that the paper does not include experiments.
        \item If the paper includes experiments, a No answer to this question will not be perceived well by the reviewers: Making the paper reproducible is important, regardless of whether the code and data are provided or not.
        \item If the contribution is a dataset and/or model, the authors should describe the steps taken to make their results reproducible or verifiable. 
        \item Depending on the contribution, reproducibility can be accomplished in various ways. For example, if the contribution is a novel architecture, describing the architecture fully might suffice, or if the contribution is a specific model and empirical evaluation, it may be necessary to either make it possible for others to replicate the model with the same dataset, or provide access to the model. In general. releasing code and data is often one good way to accomplish this, but reproducibility can also be provided via detailed instructions for how to replicate the results, access to a hosted model (e.g., in the case of a large language model), releasing of a model checkpoint, or other means that are appropriate to the research performed.
        \item While NeurIPS does not require releasing code, the conference does require all submissions to provide some reasonable avenue for reproducibility, which may depend on the nature of the contribution. For example
        \begin{enumerate}
            \item If the contribution is primarily a new algorithm, the paper should make it clear how to reproduce that algorithm.
            \item If the contribution is primarily a new model architecture, the paper should describe the architecture clearly and fully.
            \item If the contribution is a new model (e.g., a large language model), then there should either be a way to access this model for reproducing the results or a way to reproduce the model (e.g., with an open-source dataset or instructions for how to construct the dataset).
            \item We recognize that reproducibility may be tricky in some cases, in which case authors are welcome to describe the particular way they provide for reproducibility. In the case of closed-source models, it may be that access to the model is limited in some way (e.g., to registered users), but it should be possible for other researchers to have some path to reproducing or verifying the results.
        \end{enumerate}
    \end{itemize}

\item {\bf Open access to data and code}
    \item[] Question: Does the paper provide open access to the data and code, with sufficient instructions to faithfully reproduce the main experimental results, as described in supplemental material?
    \item[] Answer: \answerYes{} % Replace by \answerYes{}, \answerNo{}, or \answerNA{}.
    \item[] Justification: We have released the code but, to preserve anonymity, we have not included the link to the public repository in the paper. Instead, the code is attached as supplementary material to the submission.
    \item[] Guidelines:
    \begin{itemize}
        \item The answer NA means that paper does not include experiments requiring code.
        \item Please see the NeurIPS code and data submission guidelines (\url{https://nips.cc/public/guides/CodeSubmissionPolicy}) for more details.
        \item While we encourage the release of code and data, we understand that this might not be possible, so “No” is an acceptable answer. Papers cannot be rejected simply for not including code, unless this is central to the contribution (e.g., for a new open-source benchmark).
        \item The instructions should contain the exact command and environment needed to run to reproduce the results. See the NeurIPS code and data submission guidelines (\url{https://nips.cc/public/guides/CodeSubmissionPolicy}) for more details.
        \item The authors should provide instructions on data access and preparation, including how to access the raw data, preprocessed data, intermediate data, and generated data, etc.
        \item The authors should provide scripts to reproduce all experimental results for the new proposed method and baselines. If only a subset of experiments are reproducible, they should state which ones are omitted from the script and why.
        \item At submission time, to preserve anonymity, the authors should release anonymized versions (if applicable).
        \item Providing as much information as possible in supplemental material (appended to the paper) is recommended, but including URLs to data and code is permitted.
    \end{itemize}

\item {\bf Experimental setting/details}
    \item[] Question: Does the paper specify all the training and test details (e.g., data splits, hyperparameters, how they were chosen, type of optimizer, etc.) necessary to understand the results?
    \item[] Answer: \answerYes{} % Replace by \answerYes{}, \answerNo{}, or \answerNA{}.
    \item[] Justification: The experimental details is explained in \Cref{sec:experiment-details}.
    \item[] Guidelines:
    \begin{itemize}
        \item The answer NA means that the paper does not include experiments.
        \item The experimental setting should be presented in the core of the paper to a level of detail that is necessary to appreciate the results and make sense of them.
        \item The full details can be provided either with the code, in appendix, or as supplemental material.
    \end{itemize}

\item {\bf Experiment statistical significance}
    \item[] Question: Does the paper report error bars suitably and correctly defined or other appropriate information about the statistical significance of the experiments?
    \item[] Answer: \answerYes{} % Replace by \answerYes{}, \answerNo{}, or \answerNA{}.
    \item[] Justification: When evaluating estimators in \Cref{sec:exp-analyze}, we report the standard deviation across runs. Furthermore, for RLHF Pareto plot, we perform a permutation test in \Cref{sec:rlhf-exp} to test significance of the results. 
    \item[] Guidelines:
    \begin{itemize}
        \item The answer NA means that the paper does not include experiments.
        \item The authors should answer "Yes" if the results are accompanied by error bars, confidence intervals, or statistical significance tests, at least for the experiments that support the main claims of the paper.
        \item The factors of variability that the error bars are capturing should be clearly stated (for example, train/test split, initialization, random drawing of some parameter, or overall run with given experimental conditions).
        \item The method for calculating the error bars should be explained (closed form formula, call to a library function, bootstrap, etc.)
        \item The assumptions made should be given (e.g., Normally distributed errors).
        \item It should be clear whether the error bar is the standard deviation or the standard error of the mean.
        \item It is OK to report 1-sigma error bars, but one should state it. The authors should preferably report a 2-sigma error bar than state that they have a 96\% CI, if the hypothesis of Normality of errors is not verified.
        \item For asymmetric distributions, the authors should be careful not to show in tables or figures symmetric error bars that would yield results that are out of range (e.g. negative error rates).
        \item If error bars are reported in tables or plots, The authors should explain in the text how they were calculated and reference the corresponding figures or tables in the text.
    \end{itemize}

\item {\bf Experiments compute resources}
    \item[] Question: For each experiment, does the paper provide sufficient information on the computer resources (type of compute workers, memory, time of execution) needed to reproduce the experiments?
    \item[] Answer: \answerYes{} % Replace by \answerYes{}, \answerNo{}, or \answerNA{}.
    \item[] Justification: We provide this in \Cref{sec:experiment-details}.
    \item[] Guidelines:
    \begin{itemize}
        \item The answer NA means that the paper does not include experiments.
        \item The paper should indicate the type of compute workers CPU or GPU, internal cluster, or cloud provider, including relevant memory and storage.
        \item The paper should provide the amount of compute required for each of the individual experimental runs as well as estimate the total compute. 
        \item The paper should disclose whether the full research project required more compute than the experiments reported in the paper (e.g., preliminary or failed experiments that didn't make it into the paper). 
    \end{itemize}
    
\item {\bf Code of ethics}
    \item[] Question: Does the research conducted in the paper conform, in every respect, with the NeurIPS Code of Ethics \url{https://neurips.cc/public/EthicsGuidelines}?
    \item[] Answer: \answerYes{} % Replace by \answerYes{}, \answerNo{}, or \answerNA{}.
    \item[] Justification: We have reviewed the NeurIPS code of ethics and we confirm that the paper conforms to the NeurIPS code of ethics.
    \item[] Guidelines:
    \begin{itemize}
        \item The answer NA means that the authors have not reviewed the NeurIPS Code of Ethics.
        \item If the authors answer No, they should explain the special circumstances that require a deviation from the Code of Ethics.
        \item The authors should make sure to preserve anonymity (e.g., if there is a special consideration due to laws or regulations in their jurisdiction).
    \end{itemize}

\item {\bf Broader impacts}
    \item[] Question: Does the paper discuss both potential positive societal impacts and negative societal impacts of the work performed?
    \item[] Answer: \answerYes{} % Replace by \answerYes{}, \answerNo{}, or \answerNA{}.
    \item[] Justification: We discuss this in \Cref{sec:impact}.
    \item[] Guidelines:
    \begin{itemize}
        \item The answer NA means that there is no societal impact of the work performed.
        \item If the authors answer NA or No, they should explain why their work has no societal impact or why the paper does not address societal impact.
        \item Examples of negative societal impacts include potential malicious or unintended uses (e.g., disinformation, generating fake profiles, surveillance), fairness considerations (e.g., deployment of technologies that could make decisions that unfairly impact specific groups), privacy considerations, and security considerations.
        \item The conference expects that many papers will be foundational research and not tied to particular applications, let alone deployments. However, if there is a direct path to any negative applications, the authors should point it out. For example, it is legitimate to point out that an improvement in the quality of generative models could be used to generate deepfakes for disinformation. On the other hand, it is not needed to point out that a generic algorithm for optimizing neural networks could enable people to train models that generate Deepfakes faster.
        \item The authors should consider possible harms that could arise when the technology is being used as intended and functioning correctly, harms that could arise when the technology is being used as intended but gives incorrect results, and harms following from (intentional or unintentional) misuse of the technology.
        \item If there are negative societal impacts, the authors could also discuss possible mitigation strategies (e.g., gated release of models, providing defenses in addition to attacks, mechanisms for monitoring misuse, mechanisms to monitor how a system learns from feedback over time, improving the efficiency and accessibility of ML).
    \end{itemize}
    
\item {\bf Safeguards}
    \item[] Question: Does the paper describe safeguards that have been put in place for responsible release of data or models that have a high risk for misuse (e.g., pretrained language models, image generators, or scraped datasets)?
    \item[] Answer: \answerNA{} % Replace by \answerYes{}, \answerNo{}, or \answerNA{}.
    \item[] Justification: We do not release any new models or datasets, therefore, the paper poses no such risks.
    \item[] Guidelines:
    \begin{itemize}
        \item The answer NA means that the paper poses no such risks.
        \item Released models that have a high risk for misuse or dual-use should be released with necessary safeguards to allow for controlled use of the model, for example by requiring that users adhere to usage guidelines or restrictions to access the model or implementing safety filters. 
        \item Datasets that have been scraped from the Internet could pose safety risks. The authors should describe how they avoided releasing unsafe images.
        \item We recognize that providing effective safeguards is challenging, and many papers do not require this, but we encourage authors to take this into account and make a best faith effort.
    \end{itemize}

\item {\bf Licenses for existing assets}
    \item[] Question: Are the creators or original owners of assets (e.g., code, data, models), used in the paper, properly credited and are the license and terms of use explicitly mentioned and properly respected?
    \item[] Answer: \answerYes{} % Replace by \answerYes{}, \answerNo{}, or \answerNA{}.
    \item[] Justification: We cite the dataset in \Cref{sec:exp-analyze}. While the original paper doesn't specify a particular license, it emphasizes that the data is intended for academic and non-commercial use.
    \item[] Guidelines:
    \begin{itemize}
        \item The answer NA means that the paper does not use existing assets.
        \item The authors should cite the original paper that produced the code package or dataset.
        \item The authors should state which version of the asset is used and, if possible, include a URL.
        \item The name of the license (e.g., CC-BY 4.0) should be included for each asset.
        \item For scraped data from a particular source (e.g., website), the copyright and terms of service of that source should be provided.
        \item If assets are released, the license, copyright information, and terms of use in the package should be provided. For popular datasets, \url{paperswithcode.com/datasets} has curated licenses for some datasets. Their licensing guide can help determine the license of a dataset.
        \item For existing datasets that are re-packaged, both the original license and the license of the derived asset (if it has changed) should be provided.
        \item If this information is not available online, the authors are encouraged to reach out to the asset's creators.
    \end{itemize}

\item {\bf New assets}
    \item[] Question: Are new assets introduced in the paper well documented and is the documentation provided alongside the assets?
    \item[] Answer: \answerNA{} % Replace by \answerYes{}, \answerNo{}, or \answerNA{}.
    \item[] Justification: We do not release new assets.
    \item[] Guidelines:
    \begin{itemize}
        \item The answer NA means that the paper does not release new assets.
        \item Researchers should communicate the details of the dataset/code/model as part of their submissions via structured templates. This includes details about training, license, limitations, etc. 
        \item The paper should discuss whether and how consent was obtained from people whose asset is used.
        \item At submission time, remember to anonymize your assets (if applicable). You can either create an anonymized URL or include an anonymized zip file.
    \end{itemize}

\item {\bf Crowdsourcing and research with human subjects}
    \item[] Question: For crowdsourcing experiments and research with human subjects, does the paper include the full text of instructions given to participants and screenshots, if applicable, as well as details about compensation (if any)? 
    \item[] Answer: \answerNA{} % Replace by \answerYes{}, \answerNo{}, or \answerNA{}.
    \item[] Justification: This paper does not involve crowdsourcing nor research with human subjects.
    \item[] Guidelines:
    \begin{itemize}
        \item The answer NA means that the paper does not involve crowdsourcing nor research with human subjects.
        \item Including this information in the supplemental material is fine, but if the main contribution of the paper involves human subjects, then as much detail as possible should be included in the main paper. 
        \item According to the NeurIPS Code of Ethics, workers involved in data collection, curation, or other labor should be paid at least the minimum wage in the country of the data collector. 
    \end{itemize}

\item {\bf Institutional review board (IRB) approvals or equivalent for research with human subjects}
    \item[] Question: Does the paper describe potential risks incurred by study participants, whether such risks were disclosed to the subjects, and whether Institutional Review Board (IRB) approvals (or an equivalent approval/review based on the requirements of your country or institution) were obtained?
    \item[] Answer: \answerNA{} % Replace by \answerYes{}, \answerNo{}, or \answerNA{}.
    \item[] Justification: This paper does not involve crowdsourcing nor research with human subjects.
    \item[] Guidelines:
    \begin{itemize}
        \item The answer NA means that the paper does not involve crowdsourcing nor research with human subjects.
        \item Depending on the country in which research is conducted, IRB approval (or equivalent) may be required for any human subjects research. If you obtained IRB approval, you should clearly state this in the paper. 
        \item We recognize that the procedures for this may vary significantly between institutions and locations, and we expect authors to adhere to the NeurIPS Code of Ethics and the guidelines for their institution. 
        \item For initial submissions, do not include any information that would break anonymity (if applicable), such as the institution conducting the review.
    \end{itemize}

\item {\bf Declaration of LLM usage}
    \item[] Question: Does the paper describe the usage of LLMs if it is an important, original, or non-standard component of the core methods in this research? Note that if the LLM is used only for writing, editing, or formatting purposes and does not impact the core methodology, scientific rigorousness, or originality of the research, declaration is not required.
    %this research? 
    \item[] Answer: \answerNA{} % Replace by \answerYes{}, \answerNo{}, or \answerNA{}.
    \item[] Justification: The core method developed in this paper does not involve LLMs as any important, original, or non-standard components.
    \item[] Guidelines:
    \begin{itemize}
        \item The answer NA means that the core method development in this research does not involve LLMs as any important, original, or non-standard components.
        \item Please refer to our LLM policy (\url{https://neurips.cc/Conferences/2025/LLM}) for what should or should not be described.
    \end{itemize}

\end{enumerate}

\end{document}